\documentclass{article} % For LaTeX2e
\usepackage{iclr2023_conference,times}

% Optional math commands from https://github.com/goodfeli/dlbook_notation.
%%%%% NEW MATH DEFINITIONS %%%%%

\usepackage{amsmath,amsfonts,bm}

% Mark sections of captions for referring to divisions of figures

% Highlight a newly defined term

% Figure reference, lower-case.

% Figure reference, capital. For start of sentence

% Section reference, lower-case.

% Section reference, capital.

% Reference to two sections.

% Reference to three sections.

% Reference to an equation, lower-case.
\def\eqref#1{equation~\ref{#1}}
% Reference to an equation, upper case

% A raw reference to an equation---avoid using if possible

% Reference to a chapter, lower-case.

% Reference to an equation, upper case.

% Reference to a range of chapters

% Reference to an algorithm, lower-case.

% Reference to an algorithm, upper case.

% Reference to a part, lower case

% Reference to a part, upper case

\def\1{\bm{1}}

% Random variables

% rm is already a command, just don't name any random variables m

% Random vectors

% Elements of random vectors

% Random matrices

% Elements of random matrices

% Vectors

% Elements of vectors

% Matrix

% Tensor
\DeclareMathAlphabet{\mathsfit}{\encodingdefault}{\sfdefault}{m}{sl}
\SetMathAlphabet{\mathsfit}{bold}{\encodingdefault}{\sfdefault}{bx}{n}

% Graph

% Sets

% Don't use a set called E, because this would be the same as our symbol
% for expectation.

% Entries of a matrix

% entries of a tensor
% Same font as tensor, without \bm wrapper

% The true underlying data generating distribution

% The empirical distribution defined by the training set

% The model distribution

% Stochastic autoencoder distributions

 % Laplace distribution

% Wolfram Mathworld says $L^2$ is for function spaces and $\ell^2$ is for vectors
% But then they seem to use $L^2$ for vectors throughout the site, and so does
% wikipedia.

 % See usage in notation.tex. Chosen to match Daphne's book.

\usepackage{hyperref}
\usepackage{url}
\usepackage{graphicx}
\usepackage{float}
\usepackage{subfig}

\usepackage{booktabs}       % professional-quality tables
\usepackage{amsfonts,amsmath,amsthm,amssymb}       %
\usepackage{multirow}
\usepackage{makecell}

\newtheorem{theorem}{Theorem}[section]
\newtheorem{corollary}{Corollary}[theorem]

\newtheorem{proposition}[theorem]{Proposition}

\newtheorem{assumption}[theorem]{Assumption}
\newtheorem{example}[theorem]{Example}

\usepackage{paralist}

\title{Deep Generative Modeling on Limited Data \\ with Regularization by  Nontransferable \\ Pre-trained Models}

% Authors must not appear in the submitted version. They should be hidden
% as long as the \iclrfinalcopy macro remains commented out below.
% Non-anonymous submissions will be rejected without review.

% \author{Antiquus S.~Hippocampus, Natalia Cerebro \& Amelie P. Amygdale \thanks{ Use footnote for providing further information
% about author (webpage, alternative address)---\emph{not} for acknowledging
% funding agencies.  Funding acknowledgements go at the end of the paper.} \\
% Department of Computer Science\\
% Cranberry-Lemon University\\
% Pittsburgh, PA 15213, USA \\
% \texttt{\{hippo,brain,jen\}@cs.cranberry-lemon.edu} \\
% \And
% Ji Q. Ren \& Yevgeny LeNet \\
% Department of Computational Neuroscience \\
% University of the Witwatersrand \\
% Joburg, South Africa \\
% \texttt{\{robot,net\}@wits.ac.za} \\
% \AND
% Coauthor \\
% Affiliation \\
% Address \\
% \texttt{email}
% }

\author{
Yong Zhong\thanks{Equal contribution.}~$^{12}$~Hongtao Liu\footnotemark[1]~$^{12}$~Xiaodong Liu$^{12}$~Fan Bao$^{3}$~Weiran Shen\thanks{Correspondence to Weiran Shen and Chongxuan Li.}~$^{12}$~Chongxuan Li\footnotemark[2]~$^{12}$ \\ 
$^1$Gaoling School of AI, Renmin University of China, Beijing, China \\
$^2$Beijing Key Lab of Big Data Management and Analysis Methods, Beijing, China \\
$^3$Department of Computer Science Technology, Tsinghua University, Beijing, China \\
\texttt{\{yongzhong, ht6, xiaodong.liu\}@ruc.edu.cn},\\ \texttt{bf19@mails.tsinghua.edu.cn},\\ 
\texttt{\{shenweiran, chongxuanli\}@ruc.edu.cn}
}

% The \author macro works with any number of authors. There are two commands
% used to separate the names and addresses of multiple authors: \And and \AND.
%
% Using \And between authors leaves it to \LaTeX{} to determine where to break
% the lines. Using \AND forces a linebreak at that point. So, if \LaTeX{}
% puts 3 of 4 authors names on the first line, and the last on the second
% line, try using \AND instead of \And before the third author name.

% \newcommand{\fix}{\marginpar{FIX}}

\iclrfinalcopy % Uncomment for camera-ready version, but NOT for submission.
\begin{document}

\maketitle

\begin{abstract}
Deep generative models (DGMs) are data-eager because learning a complex model on limited data suffers from a large variance and easily overfits. Inspired by the classical perspective of the \emph{bias-variance tradeoff}, we propose \emph{regularized deep generative model} (Reg-DGM), which leverages a nontransferable pre-trained model to reduce the variance of generative modeling with limited data. Formally, Reg-DGM optimizes a weighted sum of a certain divergence and the expectation of an energy function, where the divergence is between the data and the model distributions, and the energy function is defined by the pre-trained model w.r.t. the model distribution. We analyze a simple yet representative Gaussian-fitting case to demonstrate how the weighting hyperparameter trades off the bias and the variance. Theoretically, we characterize the existence and the uniqueness of the global minimum of Reg-DGM in a non-parametric setting and prove its convergence with neural networks trained by gradient-based methods. Empirically, with various pre-trained feature extractors and a data-dependent energy function, Reg-DGM consistently improves the generation performance of strong DGMs with limited data and achieves competitive results to the state-of-the-art methods. Our implementation is available at \url{https://github.com/ML-GSAI/Reg-ADA-APA}.
\end{abstract}

\section{Introduction}

Deep generative models  (DGMs)~\citep{kingma2013auto,goodfellow2014generative,sohl2015deep,van2016conditional,dinh2016density,hinton2006reducing} employ neural networks to capture the underlying distribution of high-dimensional data and find applications in various learning tasks~\citep{kingma2014semi,zhu2017unpaired,razavi2019generating,ramesh2021zero,ramesh2022hierarchical,ho2022video}. 
Such models are often data-eager ~\citep{li2021triple,wang2018transferring} due to the presence of complex function classes.
Recent work~\citep{ada-karras2020training} found that the classical variants of generative adversarial networks (GANs)~\citep{goodfellow2014generative,stylegan2-karras2020analyzing} produce poor samples with limited data, which is shared by other DGMs in principle. Thus, improving the sample efficiency is a common challenge for DGMs.

The root cause of the problem is that learning a model in a complex class on limited data suffers from a large variance and easily overfits the training data~\citep{fml2018-mohri2018foundations}.
To relieve the problem, previous work either employed sophisticated data augmentation strategies~\citep{da-zhao2020differentiable,ada-karras2020training,apa-jiang2021deceive}, or designed new losses for the discriminator in GANs~\citep{genco-cui2021genco,yang2021data}, or transferred a pre-trained DGM~\citep{wang2018transferring,noguchi2019image,mo2020freeze}.
Although not pointed out in the literature to our knowledge, prior work can be understood as reducing the variance of the estimate implicitly~\citep{fml2018-mohri2018foundations}.
In this perspective, we propose a complementary framework, named \emph{regularized deep generative model (Reg-DGM)}, which employs a pre-trained model as regularization to achieve a better \emph{bias-variance tradeoff} when training a DGM with limited data. 

% {\color{red}
% In Sec.~\ref{sec:method}, we formulate the objective function of Reg-DGM as the sum of a certain divergence between the data distribution and model distribution, and a regularization term weighted by a hyperparameter. The regularization term can be understood as the negative expected log-likelihood of an energy-based model, whose energy function is defined by a pre-trained model, w.r.t. the DGM. Intuitively, with an appropriate value of the weighting hyperparameter, Reg-DGM balances between the data distribution and the pre-trained model to achieve a better bias-variance tradeoff with limited data. We construct a simple yet prototypical Gaussian-fitting example to compare the classical  maximum likelihood estimation (MLE), the pre-trained model, and our approach under two common measures: the mean squared error and the expected risk, under both of which our approach is best.
% Such an example quantifies the motivation of Reg-DGM and provides insights into the excellent performance of Reg-DGM in practice.}
 In Sec.~\ref{sec:method}, we formulate the objective function of Reg-DGM as the sum of a certain divergence and a regularization term weighted by a hyperparameter. The divergence is  between the data distribution and model distribution, and the regularization term can be understood as the negative expected log-likelihood of an energy-based model, whose energy function is defined by a pre-trained model, w.r.t. the model distribution. Intuitively, with an appropriate weighting hyperparameter, Reg-DGM balances between the data distribution and the pre-trained model to achieve a better bias-variance tradeoff with limited data, as validated by a simple yet prototypical Gaussian-fitting example.

% {\color{red}
% In Sec.~\ref{sec:convergence}, nonparametric (gan dgm closed form) and parametric (practical, non-convex nn).we characterize the optimization behavior and statistical benefits of Reg-DGM in theory. On one hand, under mild regularity conditions, we prove the existence and uniqueness of the global minimum of the regularized optimization problem with the Kullback–Leibler (KL) and Jensen–Shannon (JS) divergence in the nonparametric setting. Besides, we show that the global minimum is in the form of a reweighted data distribution whose weights are determined by the pre-trained model. }
In Sec.~\ref{sec:convergence}, we characterize the optimization behavior of Reg-DGM in both non-parametric and parametric settings  under mild regularity conditions. On one hand, we prove the existence and the uniqueness of the global minimum of the regularized optimization problem with the Kullback–Leibler (KL) and Jensen–Shannon (JS) divergence in the non-parametric setting. 
On the other hand, we prove that, parameterized by a standard neural network architecture, Reg-DGM converges with a high probability to  a global (or local) optimum trained by gradient-based methods.

% {\color{red}
% In Sec.~\ref{sec:imp}, we specify the two components in Reg-DGM. On one hand, we employ strong variants of GAN~\citep{stylegan2-karras2020analyzing,ada-karras2020training,apa-jiang2021deceive} as the base DGM for broader interests. On the other hand, we consider a \emph{nontransferable} setting where the pre-trained model does not necessarily 
% have the same architecture or the same model formulation as the base DGM, or does not even have to be a generative model. Indeed, we adopt a feature extractor trained for classification by default, and the corresponding energy function is defined as the expected mean squared error between the features of the generated samples and training data. Notably, such a model cannot be directly used in the fine-tuning  approaches~\citep{wang2018transferring,mo2020freeze}.}
In Sec.~\ref{sec:imp}, we specify the components in Reg-DGM. We employ strong variants of GANs~\citep{stylegan2-karras2020analyzing,ada-karras2020training,apa-jiang2021deceive} as the base DGM for broader interests. We consider a \emph{nontransferable} setting where the pre-trained model does not necessarily  have to be a generative model.  Indeed, we employ several feature extractors, which are trained for non-generative tasks such as image classification, representation learning, and face recognition. Notably, such models cannot be directly used in the fine-tuning  approaches~\citep{wang2018transferring,mo2020freeze}. With a comprehensive ablation study, we define our default energy function as the expected mean squared error between the features of the generated samples and the training data. Such an energy not only fits our theoretical analyses, but also results in consistent and significant improvements over baselines in all settings.

In Sec.~\ref{sec:exp}, we present experiments on several benchmarks, including the FFHQ~\citep{stylegan1-karras2019style}, LSUN CAT~\citep{lsun-yu2015lsun}, and CIFAR-10~\citep{cifar10-krizhevsky2009learning} datasets. We compare Reg-DGM with a large family of methods, including the base DGMs~\citep{stylegan2-karras2020analyzing,ada-karras2020training,apa-jiang2021deceive}, the transfer-based approaches~\citep{wang2018transferring,mo2020freeze}, the augmentation-based methods~\citep{da-zhao2020differentiable} and others~\citep{genco-cui2021genco,yang2021data}.  
With a classifier pre-trained on ImageNet~\citep{deng2009imagenet}, or an image encoder pre-trained on the CLIP dataset~\citep{clip-radford2021learning}, or a face recognizer pre-trained on VGGFace2~\citep{vggface2-cao2018vggface2}, Reg-DGM consistently improves strong DGMs under commonly used performance measures with limited data and achieves competitive results to the state-of-the-art methods. Our results demonstrate that Reg-DGM can achieve a good bias-variance tradeoff in practice, which supports our motivation.

In Sec.~\ref{sec:related}, we present the related work. In Sec.~\ref{sec:discussion}, we conclude the paper and discuss limitations.
 
% {\color{red}
%
% }
% In Sec.~\ref{sec:related}, we present more related work including fine-tuning, more generative adversarial networks with limited data, and regularization in probabilistic models. In Sec.~\ref{sec:discussion}, we conclude the paper and discuss limitations as well as potential future work.

\section{Method}
\label{sec:method}

The goal of generative modeling is to learn a model distribution $p_g$ (implicitly or explicitly)  from a training set $\mathcal{S}=\{x_i\}_{i=1}^m$ of size $m$ on a sample space $\mathcal{X}$. The elements in $\mathcal{S}$ are assumed to be drawn i.i.d. according to an unknown data distribution $p_d  \in \mathcal{P}_{\mathcal{X}}$, where $\mathcal{P}_{\mathcal{X}}$ is the set of all valid distributions over $\mathcal{X}$. A general formulation for learning generative models is minimizing a certain statistical divergence $\mathbb{D}(\cdot || \cdot)$ between the two distributions as follows:
\begin{align}
\label{eq:dgm}
  \min_{p_g \in \mathcal{H}} \mathbb{D}(p_d ||p_g ),
\end{align}
where $\mathcal{H}\subset \mathcal{P}_{\mathcal{X}}$ is the hypothesis class, for instance, a set of distributions defined by neural networks in a deep generative model (DGM). Notably, the divergence in Eq.~(\ref{eq:dgm}) is estimated by the Monte Carlo method over the training set $\mathcal{S}$ and its solution has a small bias if not zero. However, learning a DGM with limited data is challenging because solving the problem in Eq.~(\ref{eq:dgm}) with a small sample size $m$ essentially suffers from
a large variance and probably overfits~\citep{fml2018-mohri2018foundations}. 

Inspired by the classical perspective of the \emph{bias-variance tradeoff},
we propose to leverage an external model $f$ pre-trained on a related and large dataset (e.g., ImageNet) as a data-dependent regularization to reduce the variance of training a DGM on limited data (e.g., CIFAR-10), which 
is complementary to prior work (see more details in Sec.~\ref{sec:related}). In particular, given a pre-trained $f$, we first introduce an energy function~\citep{lecun2006tutorial} $\mathcal{E}_f: \mathcal{X} \rightarrow \mathbb{R}$ that satisfies mild regularity conditions (see Assumption~\ref{ass}) and then define a probabilistic energy-based model (EBM) $p_f$ as $p_f(x) \propto \exp(-\mathcal{E}_f(x))$.
We can treat $p_f$ as a special estimate of $p_d$ if $f$ is pre-trained on a dataset closely related to $p_d$. In this perspective, $p_f$ has a potentially large bias yet a variance of zero.
To trade off the bias and variance, we simply optimize a weighted sum of the statistical divergence as in Eq.~(\ref{eq:dgm}) and the expected log-likelihood of the EBM as follows: 
\begin{align}
  \min_{p_g \in \mathcal{H} } \mathbb{D}(p_d ||p_g ) - \lambda \mathbb{E}_{x \sim p_g} [\log p_f(x)] \Leftrightarrow
\label{eq:reg_dgm}
  \min_{p_g \in \mathcal{H} } \mathbb{D}(p_d ||p_g ) + \lambda \mathbb{E}_{x \sim p_g} [\mathcal{E}_f(x)],
\end{align}
where $\lambda > 0$ is the weighting hyperparameter balancing the two terms. The first one encourages $p_g$ to fit the data as in the original DGM and the second one encourages $p_g$ to produce samples with a high likelihood evaluated by the EBM $p_f$. 
Naturally,  a properly selected $\lambda$ can hopefully achieve a better bias-variance tradeoff. 
We refer to our approach as \emph{regularized deep generative model} (Reg-DGM) when $\mathcal{H}$ consists of distributions defined by neural networks.

\begin{figure}[t]
\begin{center}
\subfloat[Varying $\beta=\lambda/(\lambda+1)$]{\includegraphics[width=0.3\columnwidth]{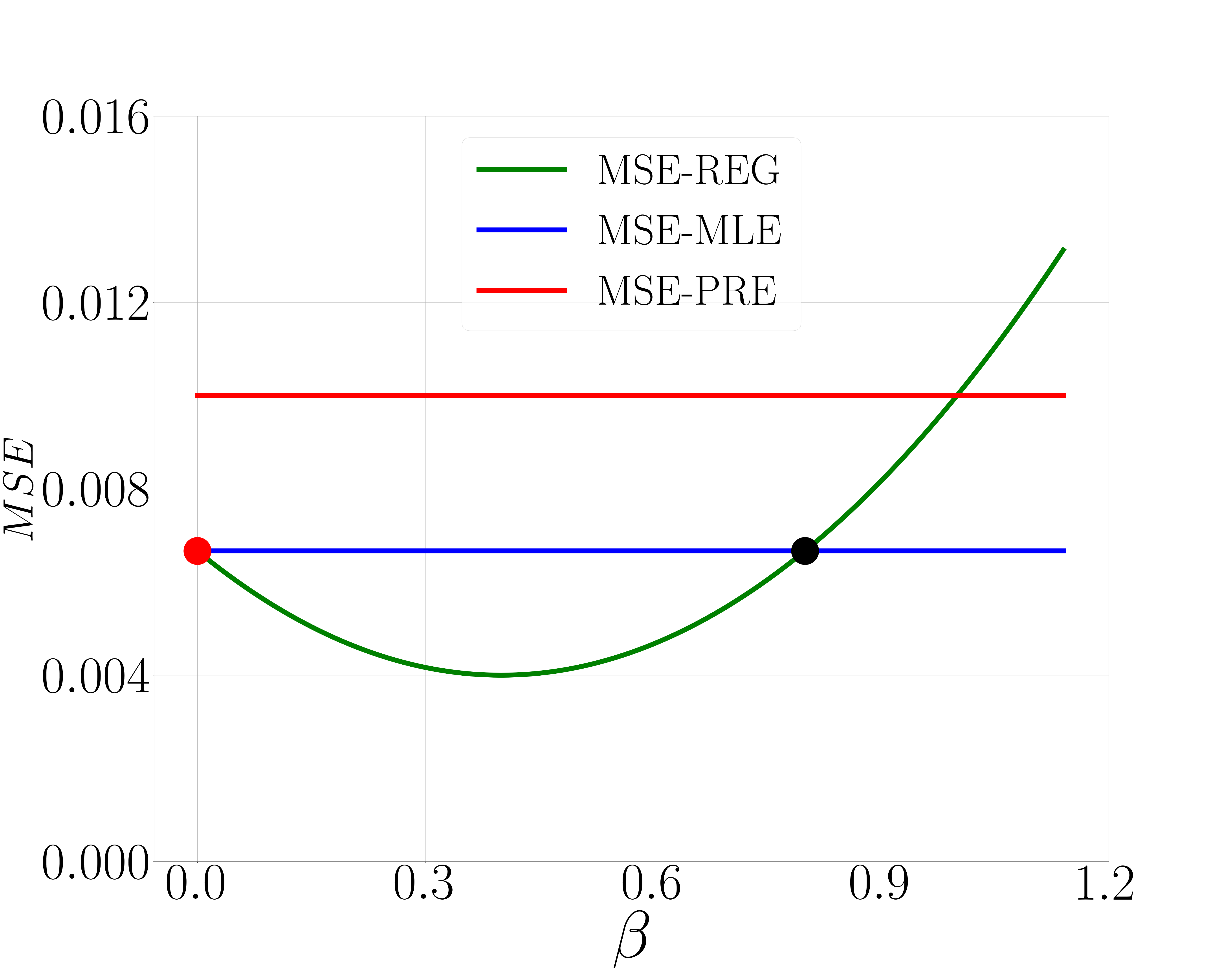}}
\subfloat[Varying $m$]{\includegraphics[width=0.3\columnwidth]{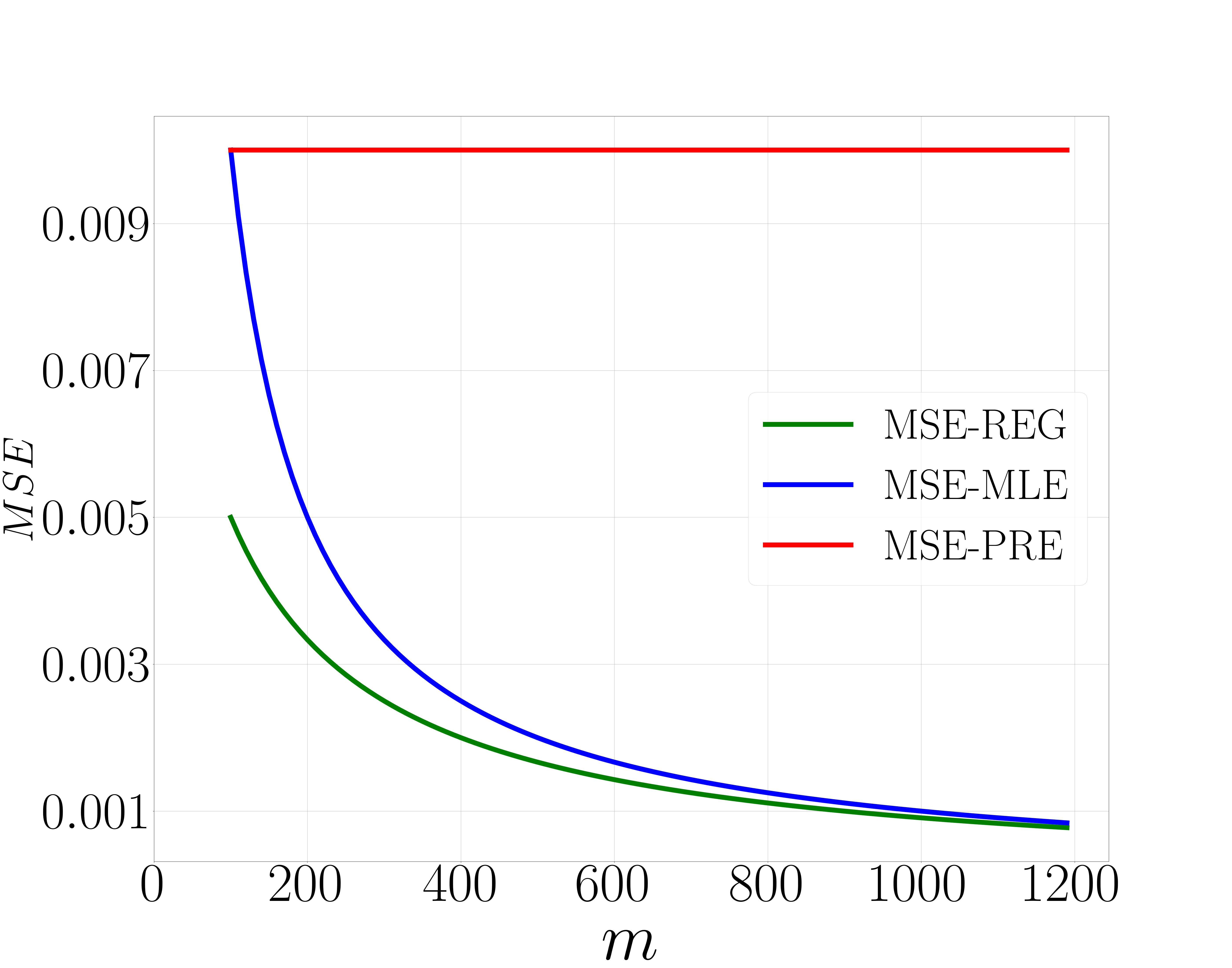}}
\subfloat[Varying $\hat{\mu}_{\textrm{PRE}} - \mu^* $]{\includegraphics[width=0.3\columnwidth]{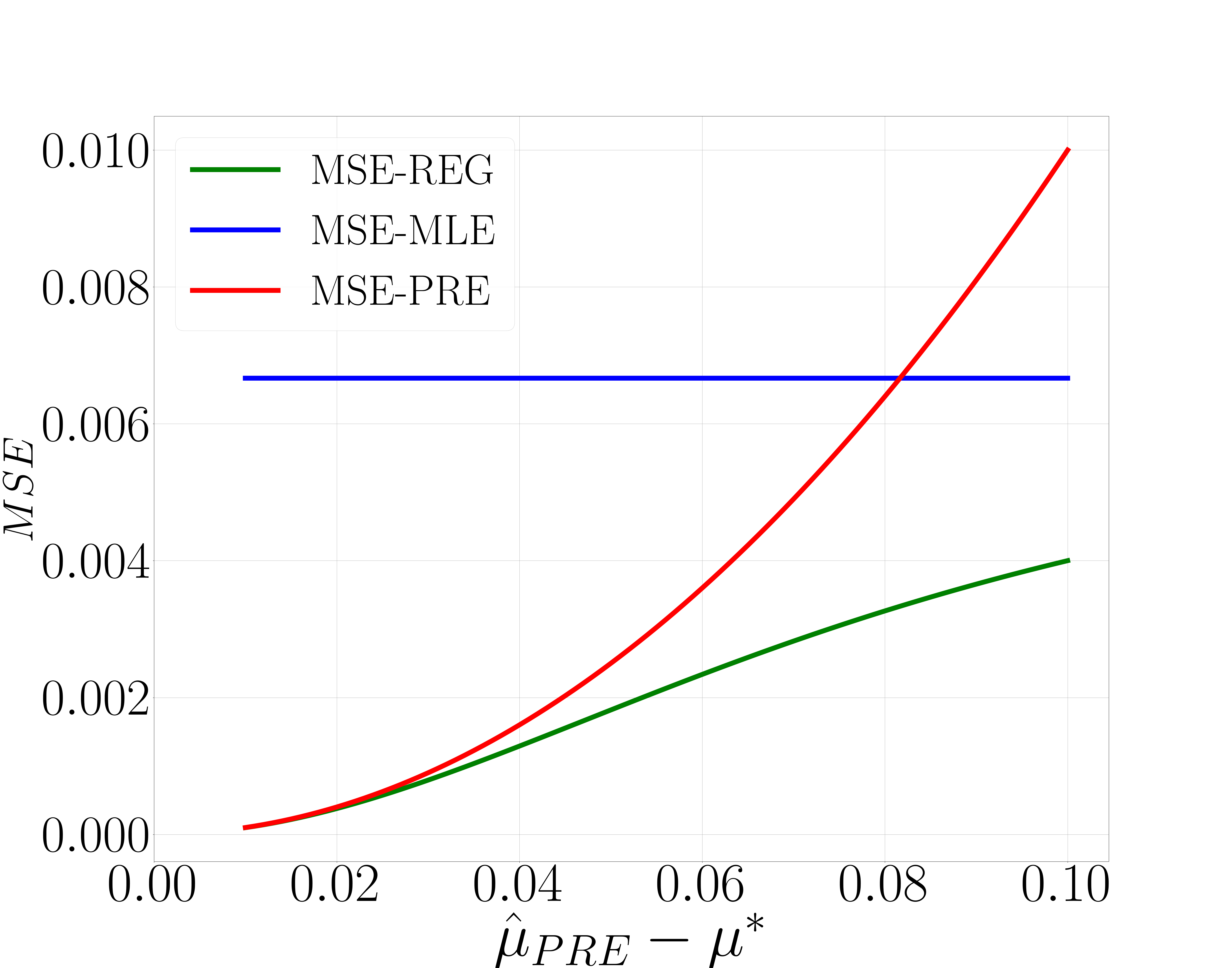}}
\vspace{-.1cm}
\caption{MSE in the Gaussian-fitting example to validate Proposition~\ref{thm:reg_better}.
(a) The effect of $\lambda$ given a fixed $m$ and $\hat{\mu}_{\textrm{PRE}} -\mu^*$, where the red circle indicates $\beta=\max\left\{\frac{\sigma^2  - m(\hat{\mu}_{PRE}-\mu^*)^2}{\sigma^2 + m(\hat{\mu}_{PRE}-\mu^*)^2}, 0\right\}$ and the black circle indicates $\beta=\min\{\frac{2\sigma^2}{\sigma^2 + m(\hat{\mu}_{PRE}-\mu^*)^2 }, 1\}$.
(b) The effect of the sample size $m$.  (c) The effect of the bias of the pre-trained model. In both (b) and (c), the hyperparameter $\lambda$ is optimal.}
\label{fig:gaussian}
\end{center}
\vspace{-.3cm}
\end{figure}
% To illustrate and validate our motivation, we conduct a prototypical and intuitive Gaussian-fitting example in Sec.~\ref{sec:gau} and perform extensive experiments on widely adopted real-world benchmarks in Sec.~\ref{sec:exp} respectively. Further, we formally analyze the convergence behaviour of Reg-DGM in both non-parametric and (non-convex) parametric settings in Sec~\ref{sec:convergence}. 

% Note that a classical way to leverage the pre-trained models is fine-tuning~\citep{wang2018transferring,mo2020freeze}, which (partially) uses the pre-trained model $f$ to initialize the generative model $p_g$ and hence restricts the model families of both $f$ and $p_g$. In contrast, we focus on a \emph{nontransferable} setting, where the pre-trained model does not necessarily have the same architecture or the same model formulation as $p_g$ or does not even have to be a generative model. We systematically investigate the pre-trained model and energy function in Sec.~\ref{sec:imp}. 

In the following, we analyze a simple yet prototypical Gaussian-fitting example to demonstrate how a regularization term defined by a pre-trained model can relieve the bias-variance dilemma in generative modeling. Such an example is helpful to illustrate the motivation of Reg-DGM precisely and provide valuable insights on its practical performance. In the same spirit, representative prior work~\citep{arjovsky2017wasserstein,mescheder2018training} in the literature of DGMs has investigated similar examples with several parameters and closed-form solutions.

\subsection{A Prototypical Gaussian-Fitting Example}
\label{sec:gau}

\begin{example}[Gaussian-fitting example]
\label{def:gaussian}
The data distribution is a (univariate) Gaussian $p_d(x)=\mathcal{N}(x | \mu^*, \sigma^2)$, where  $\sigma^2$ is known and $\mu^*$ is the parameter to be estimated. A training sample $\mathcal{S} =\{x_i\}_{i=1}^m$ is drawn i.i.d. according to $p_{d}(x)$. The hypothesis class for $p_g$ is $\mathcal{H}=\{\mathcal{N}(x |\mu, \sigma^2) ~|~ \mu\in \mathbb{R}\}$. The regularization term in Eq.~(\ref{eq:reg_dgm}) is $\mathcal{E}_f(x) := - \log \mathcal{N}(\hat{\mu}_{\textrm{PRE}}, \sigma^2)$, i.e., $p_f(x)=\mathcal{N}(x |\hat{\mu}_{\textrm{PRE}}, \sigma^2)$. 
Note that we let $\hat{\mu}_{\textrm{PRE}} \neq \mu^*$ and their gap $|\hat{\mu}_{\textrm{PRE}} - \mu^*|$ can be large in general.
\end{example}

For simplicity, we consider the classical maximum likelihood estimation (MLE) (i.e., using the KL divergence in Eq.~(\ref{eq:dgm}) as the performance measure), and its solution for Example~\ref{def:gaussian} is given by the sample mean~\citep{bishop2006pattern}: $\hat{\mu}_{\textrm{MLE}} =  \frac{1}{m} \sum_{i=1}^m x_i, \hat{\mu}_{\textrm{MLE}}  \sim \mathcal{N} \left(\mu^*, \frac{1}{m} \sigma^2\right)$.
% \begin{align}
%     \hat{\mu}_{\textrm{MLE}} =  \frac{1}{m} \sum_{i=1}^m x_i, \quad \hat{\mu}_{\textrm{MLE}}  \sim \mathcal{N} \left(\mu^*, \frac{1}{m} \sigma^2\right).
% \end{align}
The pre-trained model $\hat{\mu}_{\textrm{PRE}}$ is another meaningful baseline for our approach. Clearly, it has a bias of $\hat{\mu}_{\textrm{PRE}} - \mu^*$ and a zero variance. 
Formally, the solution of our approach based on MLE is
$\hat{\mu}_{\textrm{REG}} = \frac{1}{ 1+\lambda} \hat{\mu}_{\textrm{MLE}} + \frac{\lambda}{1+\lambda} \hat{\mu}_{\textrm{PRE}}$, $\hat{\mu}_{\textrm{REG}} \sim \mathcal{N} \left(\frac{1}{1+\lambda} \mu^* + \frac{\lambda}{ 1+\lambda} \hat{\mu}_{\textrm{PRE}}, \frac{\sigma^2}{m(1+\lambda)^2} \right).$ We compare all estimators under the mean squared error (MSE)\footnote{The expected risk coincides with the MSE in the example. See Appendix~\ref{appen:parametric} for generalization analyses.}, which is a common measure in statistics and machine learning. 
Formally, the MSE of an estimator $\hat{\theta}$ w.r.t. an unknown parameter $\theta$ is defined as: $\textrm{MSE}[\hat{\theta}] := \mathbb{E}[(\theta -\hat{\theta})^2],$
which can be decomposed as the sum of the variance of the estimator and the squared bias of the estimator~\citep{bishop2006pattern}.
As summarized in Proposition~\ref{thm:reg_better}, our method achieves a better bias-variance tradeoff than the baselines if $\lambda$ is in an appropriate range.
\begin{proposition}~\label{thm:reg_better}
Let $\beta = \frac{\lambda}{\lambda+1}$ be the normalized weight of the regularization term.
In the Gaussian-fitting example~\ref{def:gaussian}, if $  \max\left\{\frac{\sigma^2  - m(\hat{\mu}_{\textrm{PRE}}-\mu^*)^2}{\sigma^2 + m(\hat{\mu}_{\textrm{PRE}}-\mu^*)^2}, 0\right\}  < \beta <  \min\left\{\frac{2\sigma^2}{\sigma^2 + m(\hat{\mu}_{\textrm{PRE}}-\mu^*)^2 }, 1\right\}$\footnote{Note that $\max\left\{\frac{\sigma^2  - m(\hat{\mu}_{\textrm{PRE}}-\mu^*)^2}{\sigma^2 + m(\hat{\mu}_{\textrm{PRE}}-\mu^*)^2}, 0\right\} <   \min\left\{\frac{2\sigma^2}{\sigma^2 + m(\hat{\mu}_{\textrm{PRE}}-\mu^*)^2 }, 1\right\}$ always holds.}, then the following inequalities holds: \begin{align}
    \textrm{MSE}[\hat{\mu}_{\textrm{REG}}] < \min\{\textrm{MSE}[\hat{\mu}_{\textrm{MLE}}], \textrm{MSE}[\hat{\mu}_{\textrm{PRE}}]\}.
\end{align}
% \begin{align}
%       \mathbb{E}_{\mathcal{S}}[R( \hat{\mu}_{\textrm{REG}})] < \min\{\mathbb{E}_{\mathcal{S}}[R( \hat{\mu}_{\textrm{MLE}})], \mathbb{E}_{\mathcal{S}}[R( \hat{\mu}_{\textrm{PRE}})]\}.
% \end{align} 
% The optimal value of $\beta$ is $ \frac{\sigma^2}{\sigma^2 + m(\hat{\mu}_{\textrm{PRE}}-\mu^*)^2}$. The corresponding $\lambda$ is $\frac{\sigma^2}{m(\hat{\mu}_{\textrm{PRE}}-\mu^*)^2}$ and the corresponding MSE of our approach is $ \frac{\textrm{MSE}[\hat{\mu}_{\textrm{MLE}}]\textrm{MSE}[\hat{\mu}_{\textrm{PRE}}]}{\textrm{MSE}[\hat{\mu}_{\textrm{MLE}}]+\textrm{MSE}[\hat{\mu}_{\textrm{PRE}}]}$.
\end{proposition}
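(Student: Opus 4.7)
The plan is to reduce the claim to a pair of elementary one-variable quadratic inequalities in $\beta$ via the standard bias-variance decomposition, then solve them and take the intersection of the feasible regions. Concretely, I would first record the three MSE expressions explicitly. Since $\hat{\mu}_{\textrm{MLE}} \sim \mathcal{N}(\mu^*, \sigma^2/m)$ is unbiased, $\textrm{MSE}[\hat{\mu}_{\textrm{MLE}}] = \sigma^2/m$; since $\hat{\mu}_{\textrm{PRE}}$ is deterministic, $\textrm{MSE}[\hat{\mu}_{\textrm{PRE}}] = (\hat{\mu}_{\textrm{PRE}}-\mu^*)^2$. Rewriting the regularized estimator using the normalized weight as $\hat{\mu}_{\textrm{REG}} = (1-\beta)\hat{\mu}_{\textrm{MLE}} + \beta \hat{\mu}_{\textrm{PRE}}$, independence and linearity give variance $(1-\beta)^2 \sigma^2/m$ and bias $\beta(\hat{\mu}_{\textrm{PRE}}-\mu^*)$, hence
\begin{equation*}
\textrm{MSE}[\hat{\mu}_{\textrm{REG}}] = (1-\beta)^2 \frac{\sigma^2}{m} + \beta^2 (\hat{\mu}_{\textrm{PRE}}-\mu^*)^2.
\end{equation*}

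Next I would set $V := \sigma^2/m$ and $B := (\hat{\mu}_{\textrm{PRE}}-\mu^*)^2$ to unclutter notation, so the claim becomes $(1-\beta)^2 V + \beta^2 B < \min\{V,B\}$. The inequality against $V$ rearranges to $\beta(V+B) < 2V$ (after dividing out the factor $\beta > 0$ that appears), i.e.\ $\beta < 2V/(V+B) = 2\sigma^2/(\sigma^2 + m(\hat{\mu}_{\textrm{PRE}}-\mu^*)^2)$. The inequality against $B$ rearranges similarly to $(1-\beta)V < (1+\beta)B$ (after dividing out the factor $1-\beta > 0$), i.e.\ $\beta > (V-B)/(V+B) = (\sigma^2 - m(\hat{\mu}_{\textrm{PRE}}-\mu^*)^2)/(\sigma^2 + m(\hat{\mu}_{\textrm{PRE}}-\mu^*)^2)$. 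Clipping the lower bound at $0$ (only needed when $V \le B$, in which case $\textrm{MSE}[\hat{\mu}_{\textrm{PRE}}] \le \textrm{MSE}[\hat{\mu}_{\textrm{MLE}}]$ and the second inequality dominates) and the upper bound at $1$ (only needed when $B = 0$, in which case any $\beta \in (0,1)$ works trivially) recovers exactly the interval stated in the proposition.

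Finally I would verify the footnote, i.e.\ that the interval is nonempty: since $(V-B)/(V+B) < 2V/(V+B)$ is equivalent to $-B < V$, which is automatic, the algebraic lower bound is always below the algebraic upper bound; the clipped versions preserve this because $0 < 1$ and because the clipping only activates on the side whose unclipped value lies outside $(0,1)$.

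The ``main obstacle'' here is not really a mathematical one but a bookkeeping one: one must be careful that the division by $\beta$ (respectively $1-\beta$) in the two rearrangements does not flip the inequality, which forces the implicit assumption $\beta \in (0,1)$ — and the $\max$/$\min$ truncation in the hypothesis is precisely what guarantees this and handles the degenerate subcases where either $V \ge B$ is violated or $B = 0$. Once that is laid out cleanly, the rest is one-line algebra.
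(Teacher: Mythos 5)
Your proposal is correct and follows essentially the same route as the paper's proof: write the three MSEs via the bias--variance decomposition, express $\textrm{MSE}[\hat{\mu}_{\textrm{REG}}]$ as $(1-\beta)^2\sigma^2/m + \beta^2(\hat{\mu}_{\textrm{PRE}}-\mu^*)^2$, and solve the two resulting quadratic inequalities in $\beta$ to obtain exactly the stated bounds. The only quibble is the parenthetical claim that the upper clip at $1$ is ``only needed when $B=0$'' --- in fact $2V/(V+B)\ge 1$ whenever $V\ge B$ --- but this aside does not affect the argument, since the clipping is just the intersection with $\beta\in(0,1)$, which holds automatically for $\lambda>0$.
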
 
Due to space limit, the proof is deferred to Appendix~\ref{appen:proof-gaussian}. We plot the MSE curves for all estimators w.r.t. the value of $\lambda$, $m$ and $|\hat{\mu_{\textrm{PRE}}} - \mu^*|$ in Fig.~\ref{fig:gaussian} for a clearer illustration. Fig.~\ref{fig:gaussian} (a) directly validates the results of Proposition~\ref{thm:reg_better}. Fig.~\ref{fig:gaussian} (b) and (c) show that with the optimal $\lambda$, the regularization gains more w.r.t. MLE as the sample size $m$ decreases and $|\hat{\mu_{\textrm{PRE}}} - \mu^*|$ increases. See more details of the plot in Appendix~\ref{appen:toy-data}.

Since the generalization analysis in deep learning is still largely open~\citep{belkin2021fit,bartlett2021deep}, it is difficult to generalize our Proposition~\ref{thm:reg_better} to the cases with deep models due to lack of appropriate tools. However,
the intuition behind the Gaussian example also holds in deep learning. Namely, training a model on limited data suffers from a large variance (overfitting) and using a pre-trained model suffers from a large bias (underfitting). Thus, Reg-DGM with a proper hyperparameter can balance between them and potentially achieve a better performance. Further, according to Fig.~\ref{fig:gaussian} (c), Reg-DGM benefits if the EBM defined by the pre-trained model is close to the target distribution, which inspires a data-dependent energy function presented in Sec.~\ref{sec:imp}.

% \textcolor{red}{Intuitively, training a model on limited data suffers from a large variance (overfitting) and using a pre-trained model suffers from a large bias (underfitting). Moreover, our Reg-DGM with a proper hyperparameter can balance between them and achieve a better MSE and expected risk, which holds in both the Gaussian case and the deep learning case. Therefore, we believe that our attempt to solve this problem with insights from a bias-variance tradeoff perspective is meaningful and can benefit the deep learning area.}

\begin{figure}[t]
\begin{center}
\subfloat[Data distribution]{\includegraphics[width=0.25\columnwidth]{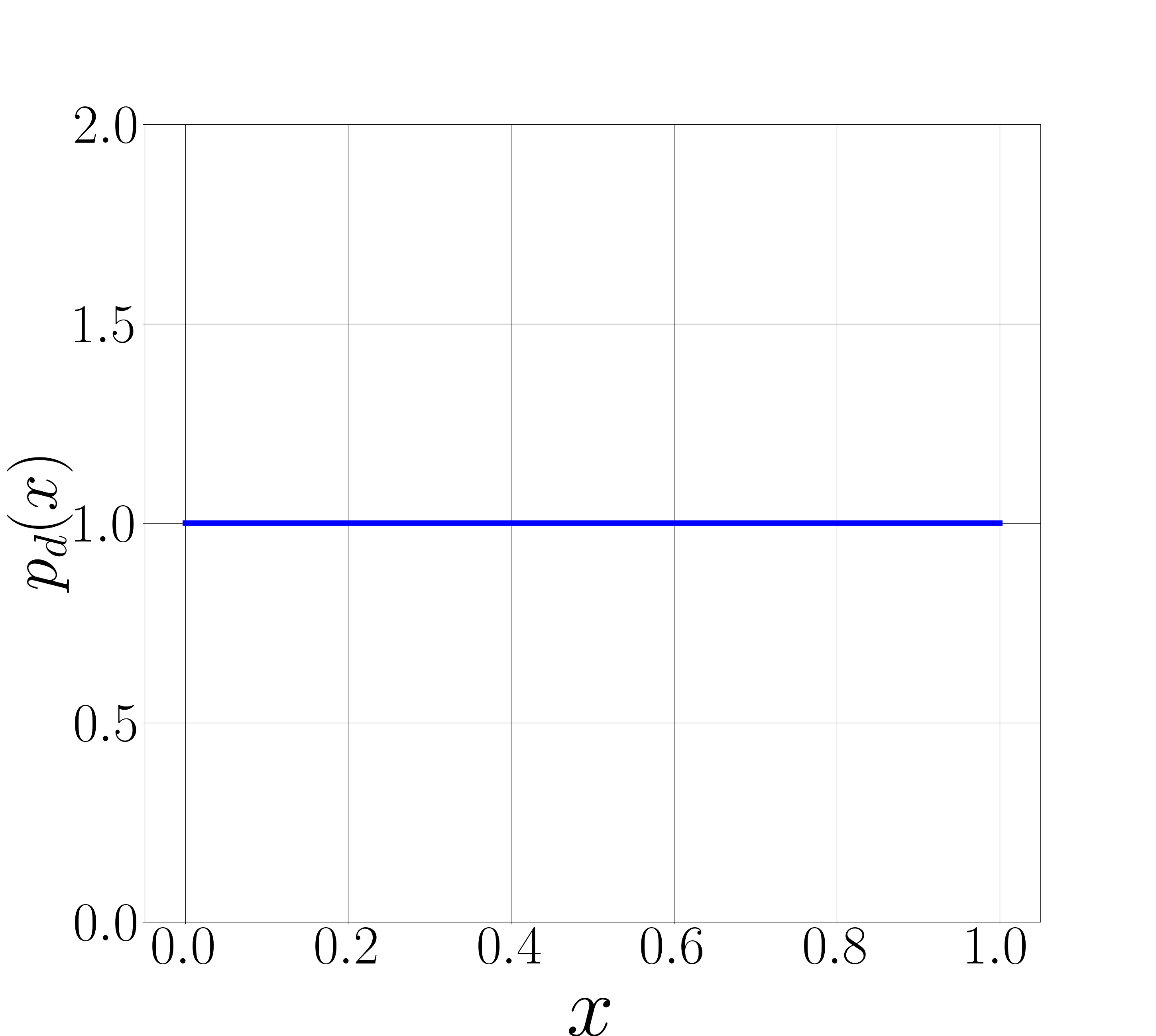}}
\subfloat[Energy function]{\includegraphics[width=0.25\columnwidth]{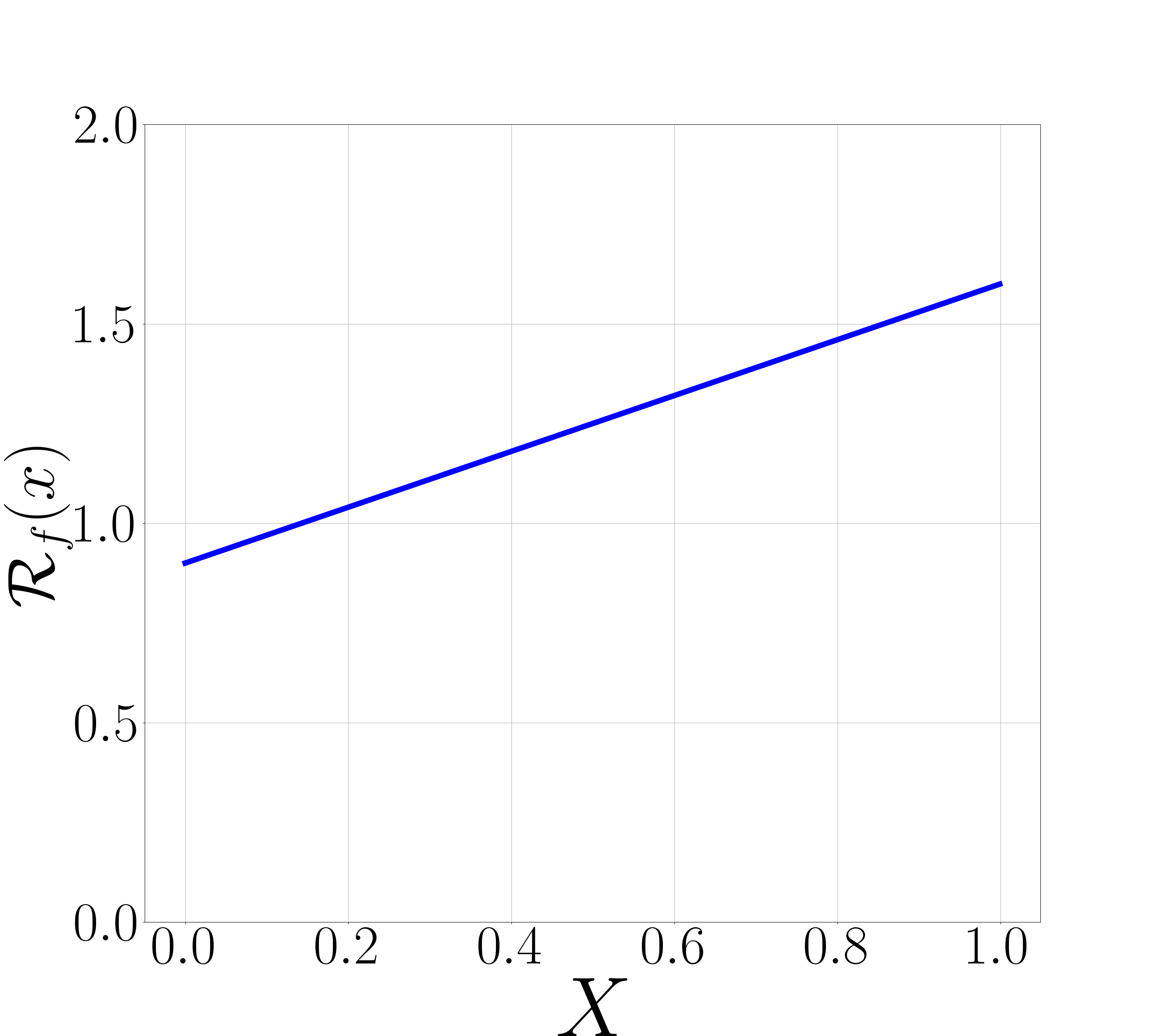}}
\subfloat[Varying $\lambda$ (KLD)]{\includegraphics[width=0.25\columnwidth]{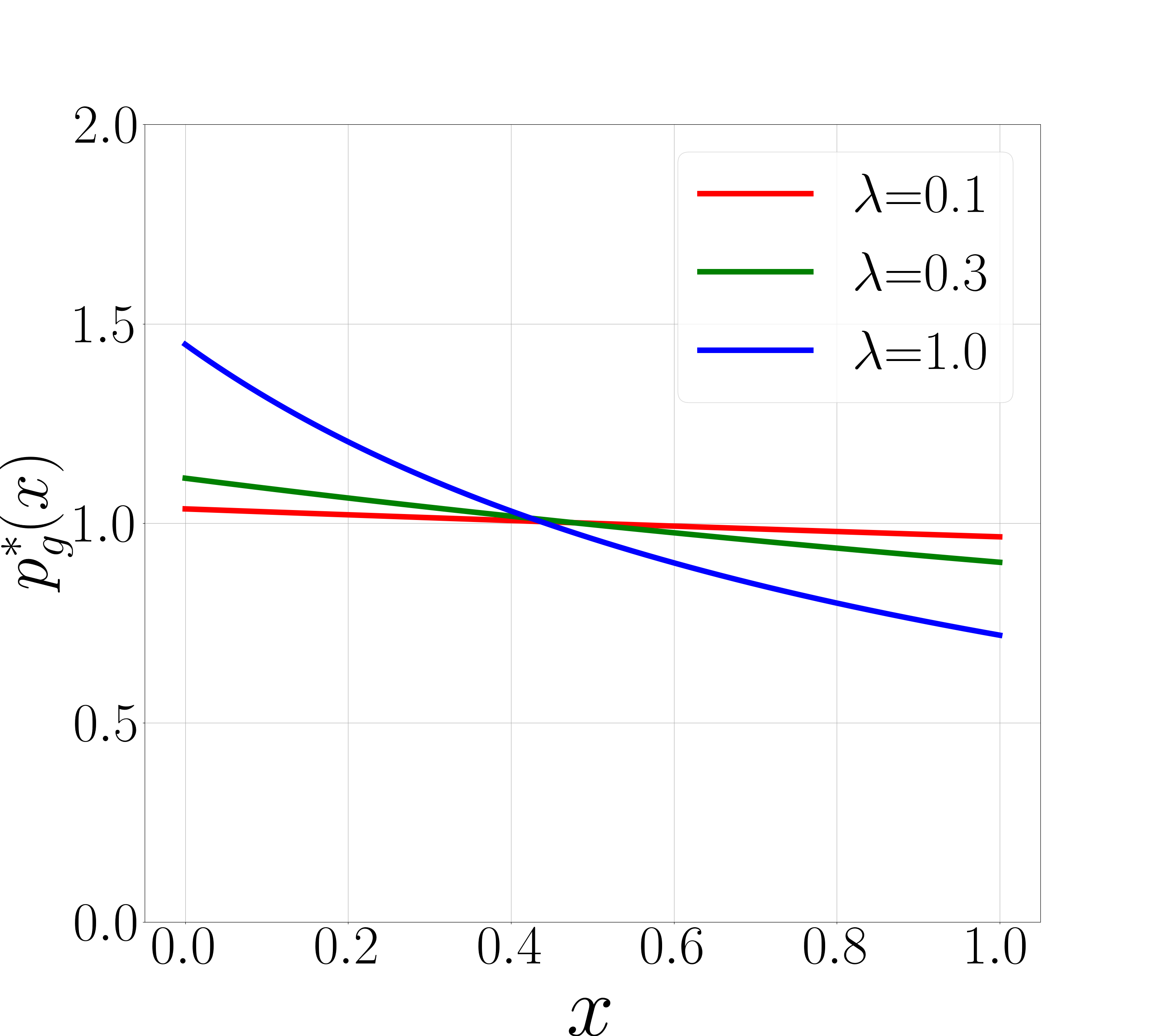}}
\subfloat[Varying $\lambda$ (JSD)]{\includegraphics[width=0.25\columnwidth]{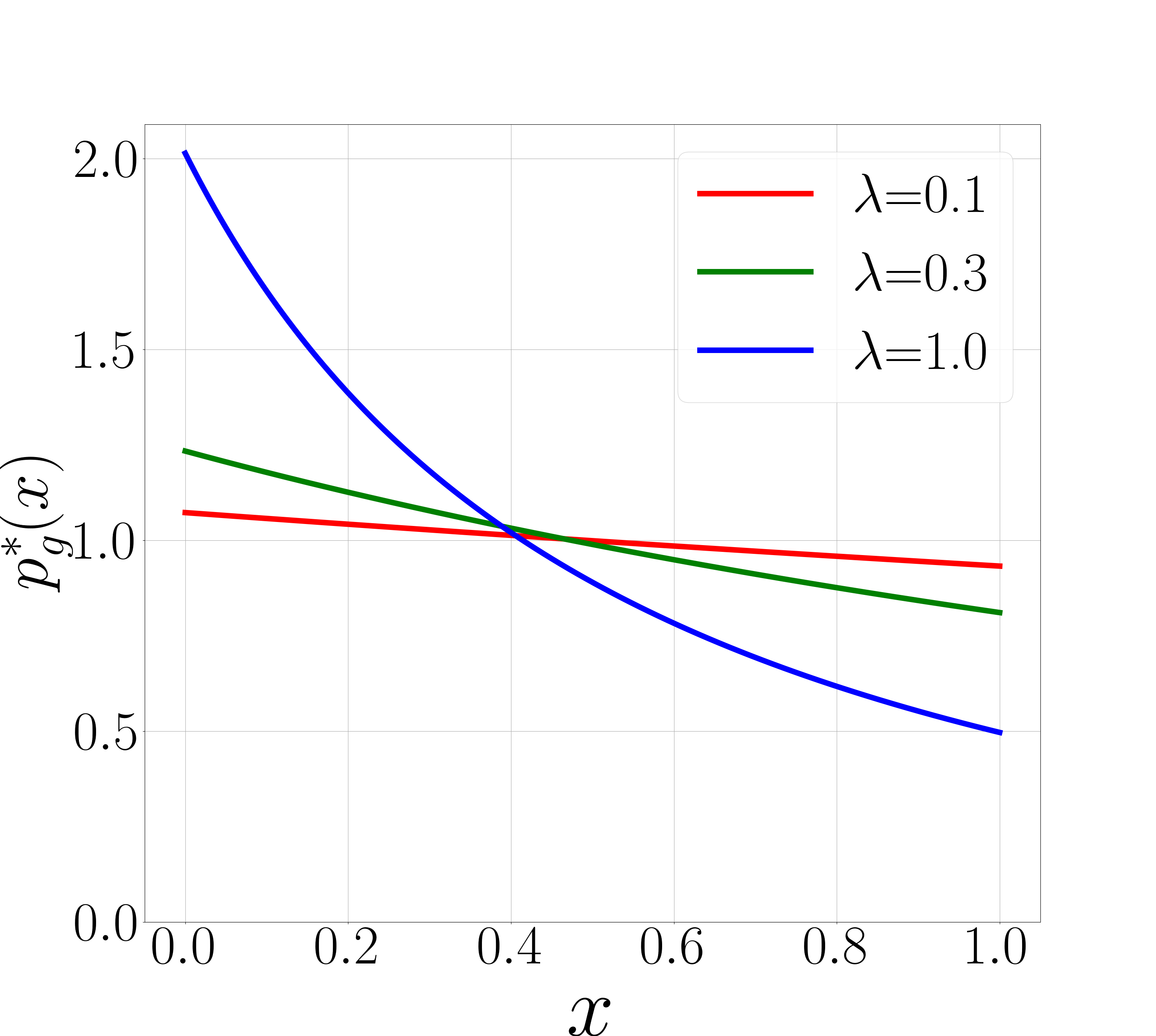}}
\caption{Illustration of the optimization analyses. (a) The density of the data distribution. (b) The energy function defined by the pre-trained model. (c) The global minimum of the problem in Eq.~(\ref{eq:reg_dgm}) with the KL divergence (KLD) and different values of $\lambda$. (d) The global minimum of  the problem in Eq.~(\ref{eq:reg_dgm}) with the JS divergence (JSD) and different values of $\lambda$. See more details in Appendix~\ref{appen:toy-data}.}
\label{fig:opt_theory}
\end{center}
\vspace{-.3cm}
\end{figure}

% \vspace{-.1cm}
\section{Convergence Analyses}
% \vspace{-.1cm}
\label{sec:convergence}

% Before going into details of the implementation, we first investigate some natural and fundamental problems about optimization and statistical benefits of our approach:
% \begin{enumerate}
%     \item \emph{In the nonparametric setting, whether a global optimum exists for the problem~\ref{eq:reg_dgm}? If it exists, is it unique and how does it balance the pre-trained model and the data?}
%     \item \emph{Why and when is our approach provably better than the original divergence minimization  as well as the pre-trained model w.r.t. common measures in statistics and learning theory?}
% \end{enumerate}

% We investigate the convergence properties of Reg-DGM in non-parametric and parametric settings. 
 
\subsection{Analyses in the Non-parametric Setting}

We assume that our hypothesis class contains all valid distributions (i.e. $\mathcal{H} = \mathcal{P}_{\mathcal{X}}$), and the data distribution $p_d$ is accessible. Although the setting is impractical, such analyses characterize the existence and uniqueness of the global minimum in an ideal case and have been widely considered in deep generative models~\citep{goodfellow2014generative,arjovsky2017wasserstein}. Further, it is insightful to see how the regularization affects the solution of Reg-DGM in the ideal case. 

Built upon the classical recipe of the calculus of variations and properties of the KL divergence in the topology of weak convergence, we establish our theory on the existence and uniqueness of the global minimum of Eq.~(\ref{eq:reg_dgm}) with the KL and JS divergence~\footnote{We consider two divergences that are employed in the two representative DGMs: variational auto-encoders (VAE)~\citep{kingma2013auto} and generative adversarial networks (GANs)~\citep{goodfellow2014generative}.}. 
The results are formally characterized in Theorem~\ref{thm:kl} and Theorem~\ref{thm:js} respectively. We refer the readers to Appendix~\ref{app:nonparametric} for the proofs.  To establish our theory, we assume that (1) $\mathcal{X}$ is a nonempty compact set; (2) $\mathcal{E}_f: \mathcal{X} \rightarrow \mathbb{R}$ is continuous and bounded; (3) $\int_{\mathcal{X}} e^{-\mathcal{E}_f(x)} dx < \infty$, which are common and mild.

\begin{theorem}\label{thm:kl} 
Under mild regularity conditions in Assumption~\ref{ass}, for any $\lambda > 0$, there exists a unique global minimum of the problem in Eq.~(\ref{eq:reg_dgm}) with the KL divergence. Furthermore, the global minimum is in the form of $p_g^*(x) = \frac{p_{d}(x)}{\alpha^* + \lambda \mathcal{E}_f(x)}$, where $\alpha^* \in \mathbb{R}$.
\end{theorem}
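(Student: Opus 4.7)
The plan has three stages: derive, via calculus of variations, the only form a critical point of the constrained problem can take; prove that a minimizer exists by a weak-compactness argument on probability measures over the compact $\mathcal{X}$; and deduce uniqueness of both the minimizer and the Lagrange multiplier $\alpha^*$ from strict convexity together with a normalization equation.

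First, writing $J(p_g) = \KL(p_d \| p_g) + \lambda \int p_g\,\mathcal{E}_f\,dx = -\int p_d \log p_g\,dx + \lambda \int p_g\,\mathcal{E}_f\,dx + C$, with $C$ independent of $p_g$, I would introduce a Lagrange multiplier $\alpha$ for the constraint $\int p_g\,dx = 1$. The pointwise Euler-Lagrange condition $\delta L/\delta p_g(x) = 0$ then reads $-p_d(x)/p_g(x) + \lambda \mathcal{E}_f(x) + \alpha = 0$, which rearranges immediately to the claimed form $p_g^*(x) = p_d(x)/(\alpha^* + \lambda \mathcal{E}_f(x))$. Note that because the KL is $\KL(p_d\|p_g)$ rather than $\KL(p_g\|p_d)$, one gets a ratio rather than an exponential.

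For existence, I would embed the problem in the space of Borel probability measures on the compact $\mathcal{X}$ with the weak-$*$ topology. By Banach-Alaoglu / Prokhorov the probability simplex is weak-$*$ compact; continuity and boundedness of $\mathcal{E}_f$ make the energy term weak-$*$ continuous, while $p_g \mapsto \KL(p_d\|p_g)$ is weak-$*$ lower semi-continuous by its Donsker-Varadhan representation as a supremum of continuous linear functionals. An LSC function on a compact set attains its infimum, so a minimizer exists, and any minimizer has finite KL divergence, hence is absolutely continuous and dominates $p_d$, which justifies the pointwise Euler-Lagrange manipulation above. Uniqueness then follows from strict convexity of $p_g \mapsto -\int p_d \log p_g\,dx$ on its domain of finiteness combined with linearity of the regularizer.

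It remains to verify that a unique $\alpha^*$ makes $p_g^*$ a valid density. Define
\[
g(\alpha) := \int_{\mathcal{X}} \frac{p_d(x)}{\alpha + \lambda \mathcal{E}_f(x)}\,dx
\]
on the open interval $\alpha > -\lambda \min_{x \in \mathcal{X}}\mathcal{E}_f(x)$, where the denominator is strictly positive on $\mathcal{X}$. Dominated and monotone convergence give continuity and strict monotonicity, with $g(\alpha)\to 0$ as $\alpha\to\infty$ and $g(\alpha)\to\infty$ as $\alpha$ decreases to the endpoint, so the intermediate value theorem produces a unique $\alpha^*$ with $g(\alpha^*)=1$. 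The hardest part will be marrying the abstract existence argument in the space of measures with the pointwise Euler-Lagrange derivation: one must check that the infimum is actually attained by a strictly positive density on the support of $p_d$ so that the reciprocal $1/p_g$ in the derivative is well-defined. Combining weak-$*$ lower semi-continuity of the KL, strict convexity, and an explicit feasible point of finite objective (for instance the uniform measure on $\mathcal{X}$, whose density is bounded by compactness, or the EBM $p_f \propto e^{-\mathcal{E}_f}$ guaranteed to exist by Assumption~\ref{ass}) should suffice to make the bookkeeping rigorous.
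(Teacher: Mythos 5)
Your proposal is correct and follows essentially the same route as the paper's proof: weak compactness of $\mathcal{P}_{\mathcal{X}}$ via Prokhorov, lower semi-continuity of $\KL(p_d\|\cdot)$ plus weak continuity of the linear energy term for existence, a Lagrangian/calculus-of-variations derivation of the form $p_g^*(x) = p_d(x)/(\alpha^* + \lambda \mathcal{E}_f(x))$, and monotonicity of the normalization integral $\alpha \mapsto \int p_d/(\alpha + \lambda\mathcal{E}_f)\,dx$ to pin down a unique $\alpha^*$. Your added care about strict positivity of the minimizer on the support of $p_d$ and the explicit domain of the normalization function are refinements of, not departures from, the paper's argument.
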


\begin{theorem}\label{thm:js} 
Under mild regularity conditions in Assumption~\ref{ass},  for any $\lambda > 0$, there exists a unique global minimum of the problem in Eq.~(\ref{eq:reg_dgm}) with the JS divergence. Furthermore, the global minimum is in the form of  $ p_g^*(x) = \frac{p_d(x)}{e^{\alpha^* + \lambda \mathcal{E}_{f}(x)} - 1}$, where $\alpha^* \in \mathbb{R}$.
\end{theorem}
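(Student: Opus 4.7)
My plan is to mirror the proof strategy of Theorem~\ref{thm:kl}: first derive a candidate form of the optimum from the Euler--Lagrange condition, then verify that the normalizing constant $\alpha^*$ exists, and finally use strict convexity of the objective to conclude uniqueness. Throughout, Assumption~\ref{ass} (compactness of $\mathcal{X}$, continuity and boundedness of $\mathcal{E}_f$, integrability of $e^{-\mathcal{E}_f}$) supplies all the regularity I need.

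For the form of $p_g^*$, I will decompose the JS divergence as $D_{\mathrm{JS}}(p_d\|p_g) = D_{\mathrm{KL}}(p_d\|m) + D_{\mathrm{KL}}(p_g\|m)$ with $m = \tfrac{1}{2}(p_d+p_g)$ and compute the first variation term by term. The two contributions telescope into $\log\frac{2p_g}{p_d+p_g}$, so after introducing a Lagrange multiplier $\mu$ for the normalization constraint and adding the regularization functional, the Euler--Lagrange equation reads $\log\frac{2p_g^*(x)}{p_d(x)+p_g^*(x)} + \lambda \mathcal{E}_f(x) + \mu = 0$. Exponentiating and solving algebraically yields $p_g^*(x) = \frac{p_d(x)}{2e^{\mu + \lambda \mathcal{E}_f(x)} - 1}$, and setting $\alpha^* := \mu + \log 2$ absorbs the factor of $2$ into the exponent to recover the stated form.

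To pin down $\alpha^*$, I will appeal to the normalization condition. Define $I(\alpha) := \int_{\mathcal{X}} \frac{p_d(x)}{e^{\alpha + \lambda \mathcal{E}_f(x)} - 1}\, dx$ on the open interval $\alpha > -\lambda \min_{x \in \mathcal{X}} \mathcal{E}_f(x)$ where the integrand is positive. Dominated convergence gives continuity, pointwise differentiation shows $I$ is strictly decreasing, and compactness together with boundedness of $\mathcal{E}_f$ force $I(\alpha) \to 0$ as $\alpha \to \infty$ and $I(\alpha) \to \infty$ as $\alpha$ approaches the left endpoint; the intermediate value theorem then produces a unique $\alpha^*$ with $I(\alpha^*) = 1$. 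For uniqueness of the minimizer itself, I will show the objective $F(p_g) := D_{\mathrm{JS}}(p_d\|p_g) + \lambda \int p_g \mathcal{E}_f\, dx$ is strictly convex in $p_g$ on $\mathcal{P}_{\mathcal{X}}$: computing the second functional derivatives of the two KL terms yields pointwise positive kernels such as $p_d^2/[p_g(p_d+p_g)^2]$ and $p_d/(p_d+p_g)^2$, the linear regularizer preserves convexity, and a strictly convex functional on a convex set has at most one minimizer.

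The most delicate step will be the boundary behaviour of $I(\alpha)$ as $\alpha \downarrow -\lambda \min_x \mathcal{E}_f(x)$: showing divergence requires local control on how $p_d$ and $\mathcal{E}_f$ behave near the minimizer of $\mathcal{E}_f$, because one is dividing $p_d$-mass by denominators that vanish there. Continuity and compactness should suffice in the generic case, but care is needed if $p_d$ happens to vanish at the argmin of $\mathcal{E}_f$; a local Taylor expansion combined with positivity of $p_d$ on a nontrivial neighbourhood typically resolves this, otherwise a limiting/perturbation argument will be required. The convexity computation and the Euler--Lagrange manipulations themselves are routine variants of the KL case already handled in Theorem~\ref{thm:kl}.
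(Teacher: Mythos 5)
Your route to the \emph{form} of the minimizer is essentially the paper's: the paper also writes down the Lagrangian of the (expanded) JS objective with the normalization constraint, computes the first variation to get $\ln\frac{p_g(x)}{p_d(x)+p_g(x)} + \lambda\mathcal{E}_f(x) + \alpha = 0$, and solves for $p_g^*(x) = p_d(x)/(e^{\alpha+\lambda\mathcal{E}_f(x)}-1)$; your bookkeeping with $m=\tfrac12(p_d+p_g)$ and the substitution $\alpha^*=\mu+\log 2$ is an equivalent computation, and your second-variation kernels sum to $p_d/[p_g(p_d+p_g)]$, which is the correct convexity statement (note it degenerates where $p_d=0$, so convexity is strict only in directions supported on $\{p_d>0\}$ --- a caveat the paper also elides). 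The genuine divergence is in how existence is obtained. The paper proves existence \emph{before} touching the Euler--Lagrange equation: $\mathcal{P}_{\mathcal{X}}$ is compact in the topology of weak convergence by Prokhorov's theorem, the JS divergence is lower semi-continuous there, the regularizer is weakly continuous since $\mathcal{E}_f$ is continuous and bounded, and the extreme value theorem yields a minimizer; the normalization analysis is then only needed to show there is \emph{at most} one admissible $\alpha^*$. You instead try to manufacture existence from the intermediate value theorem applied to $I(\alpha)=\int_{\mathcal{X}} p_d(x)/(e^{\alpha+\lambda\mathcal{E}_f(x)}-1)\,dx$.

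That IVT step is a real gap, and it is exactly the one you flag as delicate: under Assumption~\ref{ass} the claim $I(\alpha)\to\infty$ as $\alpha$ decreases to $-\lambda\min_x\mathcal{E}_f(x)$ can simply be false, because nothing forces $p_d$ to carry mass near the minimizer of $\mathcal{E}_f$. Concretely, take $\mathcal{X}=[0,1]$, $\mathcal{E}_f(x)=Mx$, $\lambda=1$, and $p_d(x)\propto (x-\tfrac12)^2\,\mathbf{1}[x\ge \tfrac12]$. The left endpoint is $\alpha=0$, the integrand of $I(0)$ is bounded, and $I(\alpha)\le I(0)\le C/(e^{M/2}-1)<1$ for $M$ large, so $I$ never attains the value $1$ and no normalizing $\alpha^*$ exists. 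A local Taylor expansion or perturbation cannot rescue this: the obstruction is global (the candidate family contains no probability density at all), and it signals precisely the regime where the interior stationarity analysis breaks down because the objective prefers to push mass toward low-energy regions outside $\mathrm{supp}(p_d)$, where the non-negativity constraint and the linearity of the functional in $p_g$ take over. To close the argument you either need the paper's compactness/lower-semicontinuity existence step (after which ``at most one $\alpha^*$'' suffices, as in the proof of Theorem~\ref{thm:kl}), or you must add hypotheses guaranteeing that $I$ actually sweeps through $1$, e.g.\ a lower bound on $p_d$ near the argmin of $\mathcal{E}_f$ together with sufficiently slow growth of $\mathcal{E}_f$ there; neither is implied by Assumption~\ref{ass} as stated.
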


As shown in Theorem~\ref{thm:kl} and Theorem~\ref{thm:js}, the global minimum is in the form of a reweighted data distribution and the weights are negatively correlated to the energy function defined by the pre-trained model. Qualitatively, the global minimum assigns high density for a sample $x$ if it has high density under the data distribution (i.e., $p_d(x)$) and low value of the energy function  (i.e., $\mathcal{E}_f(x)$). Notably, the weights in Theorem~\ref{thm:kl} and Theorem~\ref{thm:js} are different because of the different divergences. In particular, the effect of the pre-trained model is enlarged by the exponential term in Theorem~\ref{thm:js} using JS divergence (JSD). Fig.~\ref{fig:opt_theory} (c) and (d) show that with the same value of $\lambda$, the weighting coefficients of JSD are distributed in a larger range than KL divergence (KLD). Naturally, in both theorems, as $\lambda \rightarrow 0$, the denominator of $p_g^*(x)$ tends to a constant and $p_g^*(x) \rightarrow p_d(x)$, which recovers the solution of pure divergence minimization in Eq.~(\ref{eq:dgm}). Therefore, Reg-DGM is consistent if the weighting parameter $\lambda$ is a function of $m$, and $\lim_{m \rightarrow \infty }\lambda(m) \rightarrow 0$.

\subsection{Analyses in the Parametric Settings}
\vspace{-.1cm}

Although it provides theoretical insights of Reg-DGM, the non-parametric setting is far from practice. In fact, in our experiments, 
the hypothesis class is parameterized by neural networks and the training data is finite. In such a case, Reg-DGM can be formulated as a non-convex optimization problem, which is solved by gradient-based methods. Therefore, we also analyze the convergence of Reg-DGM trained by (stochastic) gradient descent in the presence of neural networks upon the general convergence framework~\citep{allen2019convergence}.

In particular, as summarized in Theorem~\ref{thm:noncon-converge}, we show that Reg-DGM with a standard neural network architecture converges with a high probability under mild smoothness assumptions. 
The assumptions, result and proof are formally presented in Appendix~\ref{appen:parametric}.

\begin{theorem}[Convergence of Reg-DGM (informal)]\label{thm:noncon-converge}
Under standard and verifiable smoothness assumptions, with a high probability, Reg-DGM with a sufficiently wide ReLU CNN converges to a global optimum of Eq.~(\ref{eq:reg_dgm}) trained by GD and converges to a local minimum trained by SGD.
\end{theorem}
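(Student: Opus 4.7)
The plan is to reduce Theorem~\ref{thm:noncon-converge} to an application of the over-parameterized convergence framework of Allen-Zhu et al.\ by recasting Reg-DGM as a non-convex optimization over the weights $\theta$ of a ReLU CNN and verifying that the composite objective inherits the semi-smoothness and gradient non-degeneracy conditions required by that framework. Concretely, I would fix the parameterization $x = G_\theta(z)$ with $z \sim p_z$ and rewrite the empirical Reg-DGM objective as
\begin{equation*}
L(\theta) \;=\; \widehat{\mathbb{D}}(p_d \,\|\, p_\theta) \;+\; \lambda\,\mathbb{E}_{z\sim p_z}\!\bigl[\mathcal{E}_f\bigl(G_\theta(z)\bigr)\bigr].
\end{equation*}
Under the standing assumption that $\mathcal{E}_f$ is continuous, bounded, and Lipschitz-smooth on the bounded ambient domain, the regularization term is a smooth outer loss composed with the CNN output. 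Under the standard smoothness assumption imposed on the base DGM (a Lipschitz-smooth likelihood for VAE-type models, or a Lipschitz / frozen discriminator proxy for GAN-type models), the divergence term is likewise a smooth function of the network output.

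Next, I would verify the two structural ingredients used in the framework: (i) a semi-smoothness inequality $L(\theta') \le L(\theta) + \langle \nabla L(\theta),\, \theta' - \theta\rangle + O(\|\theta'-\theta\|)$ in a neighborhood of the random initialization of width $m$, and (ii) a Polyak--Łojasiewicz-type gradient lower bound $\|\nabla L(\theta)\|^2 \gtrsim L(\theta) - L^*$ in that same neighborhood. Because the CNN architecture is unchanged, the Jacobian and Gram-matrix concentration estimates from the original analysis transfer verbatim; only the outer loss differs. Summing a bounded, Lipschitz-smooth outer loss (the energy term) with an already admissible base-DGM outer loss preserves both (i) and (ii), with the relevant constants inflated only by factors depending on $\lambda$ and the smoothness constants of $\mathcal{E}_f$.

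With (i) and (ii) in hand, I would invoke the GD and SGD conclusions of the framework: once the width is polynomially large in the target accuracy, the sample size, and the depth, GD on $L$ converges to a global optimum at a linear rate, and SGD converges to a stationary point---which is a local minimum in this over-parameterized regime, since no bad critical points exist near initialization---each with probability at least $1 - \exp(-\Omega(m^{1/3}))$ over the random initialization. Specializing the constants recovers Theorem~\ref{thm:noncon-converge}.

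The main obstacle will be ensuring that the base divergence surrogate $\widehat{\mathbb{D}}$ itself satisfies the Lipschitz-smoothness prerequisites: this is essentially immediate for VAE-style log-likelihood objectives, but for GAN-style adversarial losses it requires either freezing the discriminator during the phase under analysis or imposing a spectral-norm bound so that the induced outer loss in $\theta$ remains smooth. Once this is arranged, the energy regularization is actually the easier half of the analysis, since $\mathcal{E}_f$ is fixed and carries no inner optimization, so the $\lambda$-dependence enters only through constants and does not alter the convergence rate or the high-probability guarantee.
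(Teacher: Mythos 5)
Your reduction to the over-parameterized convergence framework of Allen-Zhu et al.\ is exactly the route the paper takes: it likewise treats Theorem~\ref{thm:convergence} as a black box for arbitrary Lipschitz-smooth losses and devotes the entire proof to verifying that the added regularization term preserves Lipschitzness. The packaging differs in two ways, though. First, the paper's formal argument is carried out in the explicit-density / MLE setting, $\mathcal{L}_{\textrm{MLE}}(\theta;x_i)=-\log p_\theta(x_i)$, precisely to sidestep the obstacle you flag about the smoothness of adversarial surrogates; it never attempts the minimax GAN loss, whereas you parameterize through the generator pushforward $x=G_\theta(z)$. Second, where you establish Lipschitzness of the energy term by composing a smooth outer loss with the CNN output (and propose to re-verify the semi-smoothness and gradient-dominance inequalities), the paper works directly with the density: under Assumption~\ref{ass_reg} it bounds $|\mathcal{L}_{\textrm{REG}}(\theta;x)-\mathcal{L}_{\textrm{REG}}(\theta';x)|\le \frac{\lambda}{d}\bigl(\sup_{y}\|f(x)-f(y)\|^2\bigr)\int|p_\theta(y)-p_{\theta'}(y)|\,dy\le \frac{\lambda BK}{d}\|\theta-\theta'\|$, i.e., via a total-variation-type Lipschitz condition on $\theta\mapsto p_\theta$ together with bounded features, and then simply inherits the structural inequalities from the framework's statement rather than re-deriving them. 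Your formalization is arguably closer to the GAN implementation actually used in the experiments, while the paper's is the one that cleanly matches the hypotheses of the cited framework; neither version resolves the adversarial-loss smoothness issue, which the paper avoids by restricting the formal claim to the density-estimation loss.
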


\section{Implementation}
\vspace{-.1cm}
\label{sec:imp}

In this section, we discuss the base DGM, the pre-trained model and the energy function in practice. 

\subsection{Base Model}
\vspace{-.1cm}

Although Reg-DGM applies to variational auto-encoders (VAE)~\citep{kingma2013auto} and many other DGMs, we focus on GANs~\citep{goodfellow2014generative}, which are most representative and popular in the scenarios with limited data~\citep{ada-karras2020training,mo2020freeze,genco-cui2021genco}. Formally, GANs optimize an estimate of the JS divergence via a minimax formulation as follows:
\begin{align} 
\min_{G}\max_{D} \mathbb{E}_{x \sim p_d(x)}[\log D(x)] + \mathbb{E}_{x \sim p_g(x)}[\log(1 - D(x))],~\label{eq:origianl_gan_loss_function}
\end{align}
where $G$ is a generator that defines $p_g(x)$ and $D$ is a discriminator that estimates the JS divergence by discriminating samples. Both $G$ and $D$ are  parameterized by  neural networks  and   Eq.~(\ref{eq:origianl_gan_loss_function}) is estimated by the Monte Carlo method over  mini-batches sampled from the training set.

For a broader interest, we adopt three strong GAN variants, StyleGAN2~\citep{stylegan2-karras2020analyzing}, adaptive
discriminator augmentation (ADA)~\citep{ada-karras2020training}, and adaptive pseudo augmentation (APA)~\citep{apa-jiang2021deceive} as the base DGMs. Please refer to Appendix~\ref{appen:gan-experiment-detail} for more details.

\subsection{Pre-trained Model}
\vspace{-.1cm}

As mentioned in Sec.~\ref{sec:method}, different from the transfer-based methods~\citep{wang2018transferring,mo2020freeze}, Reg-DGM applies to a \emph{nontransferable} setting where the pre-trained model does not necessarily have the same architecture or the same formulation as $p_g$ or does not even have to be a generative model, enjoying the flexibility of choosing the pre-trained models. 
% \textcolor{red}{In particular, to obtain a significant improvement, we require that the pre-trained model should be close to the target data distribution, illustrated by the Fig.~\ref{fig:gaussian} (c).}

In our implementation, the pre-trained model is a feature extractor $f: \mathcal{X} \rightarrow \mathbb{R}^d$, which is trained for other tasks (e.g., classification or contrastive representation learning) instead of generation. We choose such models because they are easily available and achieve excellent performance in supervised learning. In particular, we investigate three prototypical pre-trained models: a ResNet~\citep{He2015} trained in a supervised manner on ImageNet, a CLIP image encoder~\citep{clip-radford2021learning} trained in a self-supervised manner on a large-scale image-text dataset, and a FaceNet~\citep{facenet-schroff2015facenet} trained on a face recognition dataset. Please refer to Appendix~\ref{appen:gan-experiment-detail} for more details.

Note that such models are nontransferable in the fine-tuning manner~\citep{hinton2006fast}. Nevertheless, with such models and a data-dependent energy function presented later, Reg-DGM is still competitive to the transfer-based approaches~\citep{wang2018transferring,mo2020freeze} as shown in Tab.~\ref{table:gan_res}.

\subsection{Energy Function}

According to the results in Fig.~\ref{fig:gaussian} (c) and our intuition, we should define $\mathcal{E}_f$ such that $p_f$ is as close to $p_d$ as possible. In most of the cases, $f$ is pre-trained on a dataset with richer semantics than $p_d$. Therefore, it is necessary to involve training data (sampled from $p_d$) in the energy function to reducing the distance between $p_f$ and $p_d$.
As presented above, we specify $f$ as a feature extractor and it is natural to match the features of samples from $p_g$ and $p_d$ as the energy function.

Formally, the energy function is defined by the expected mean squared error between the features of a generated sample and a training sample as follows:
\begin{align}
    \mathcal{E}_f(x) := \mathbb{E}_{x' \sim p_d}\left[\frac{1}{d}|| f(x) - f(x') ||^2_2 \right].
    \label{eq:energy_cla}
\end{align}

Notably, our implementation with the data-dependent energy function in Eq.~(\ref{eq:energy_cla}) is a valid instance of the general Reg-DGM framework as formulated in Eq.~(\ref{eq:reg_dgm}).
Furthermore, the convergence results in both the non-parametric and parametric settings (see Sec.~\ref{sec:convergence}) hold in this case. The expectation is estimated by the Monte Carlo method of a single sample for efficiency by default and increasing the number of samples will not affect the performance significantly (see results in Appendix~\ref{appen:ablation-calssifier}).

We emphasize that our main contribution is not designing a specific energy function but the general framework of Reg-DGM. Many alternative energy functions can be employed in Reg-DGM. 
% There are many alternative energy functions can be employed in Reg-DGM. 
Indeed, we perform a systematical ablation study of the energy functions in Sec.~\ref{sec:alation-energy-functions} and find that Eq.~(\ref{eq:energy_cla}) is the best among them considering the qualitative and quantitative results together. Moreover, we evaluate the effectiveness of Reg-DGM implemented by Eq.~(\ref{eq:energy_cla}), with strong base DGMs~\citep{stylegan2-karras2020analyzing,ada-karras2020training,apa-jiang2021deceive}, different datasets, different backbones of $f$ and different pre-training datasets in the experiments. We observe a consistent and significant improvement over SOTA baselines across various settings. Based on such a comprehensive empirical study, we believe that our implementation of Reg-DGM based on Eq.~(\ref{eq:energy_cla}) would be effective in new settings.

\begin{table}[t]
  \caption{Median FID $\downarrow$ on FFHQ and LSUN CAT and mean FID $\downarrow$ on CIFAR-10. $^\dagger$ and $^\ddagger$ indicate  the results are taken from the references and ~\citet{ada-karras2020training} respectively. Otherwise, the results are reproduced by us upon the official implementation~\citep{ada-karras2020training,apa-jiang2021deceive}. 
%   $^\star$ indicates that the backbone is BigGAN~\citep{biggan-brock2018large} instead of StyleGAN2~\citep{stylegan2-karras2020analyzing}.
  }
  \vspace{.1cm}
  \label{Results-1}
  \centering
\begin{tabular}{cccccc}
\toprule
\multirow{2}{*}{Method} & \multicolumn{2}{c}{FFHQ} & \multicolumn{2}{c}{LSUN  CAT} & CIFAR-$10$ 
\\
% \midrule 
\cmidrule(r){2-3} \cmidrule(r){4-5} \cmidrule(r){6-6} 
                        & $1$k          & $5$k         & $1$k            & $5$k           & \multicolumn{1}{c}{$50$k} \\
\midrule
% \cmidrule(r){1-3} \cmidrule(r){4-5} \cmidrule(r){6-6} 
Transfer~\citep{wang2018transferring} &$21.42$& $12.34$  \\
Freeze-D~\citep{mo2020freeze}  & $19.77$ & $12.69$  \\ 
DA$^\dagger$~\citep{da-zhao2020differentiable} & $25.66$ & $10.45$& $ 42.26$ & $16.11$ &$8.49$\\
InsGen$^\dagger$~\citep{yang2021data} & $19.58$ &  &  &  &   \\
GenCo$^\dagger$~\citep{genco-cui2021genco} & $65.31$ & $27.96$& $140.08$& $40.79$& $8.83\pm0.04$\\
DA + GenCo$^\dagger$~\citep{genco-cui2021genco}& & & & &$6.57\pm0.01$ \\
ADA + bCR$^\ddagger$~\citep{bcr-zhao2020improved}   &    $22.61$         &    $10.58$        &   $38.82$            & $16.80$   &                  \\
$R_{\textrm{LC}}$ $ ^{\dagger}$~\citep{RLC-tseng2021regularizing}   &    $63.16 $         &    $23.83$        &              &              &            $8.31\pm0.05$            \\
ADA + $R_{\textrm{LC}}$$^{\dagger}$~\citep{RLC-tseng2021regularizing}   &    $21.7 $         &             &             &          &    $\mathbf{2.47 \pm 0.01}$                     \\
APA$^\dagger$~\citep{apa-jiang2021deceive}  & $45.19$ & $13.25$ & &  &   \\
% ADA + APA$^\dagger$~\citep{apa-jiang2021deceive}  & $\mathbf{18.89}$ & $\mathbf{8.38}$ &   &  &  \\

\midrule
% \cmidrule(r){1-3} \cmidrule(r){4-5} \cmidrule(r){6-6} 
StyleGAN2~\citep{stylegan2-karras2020analyzing} &  $103.66$ &   $52.71$  &  $186.55$ & $115.16$ & $7.16 \pm 0.12$  \\

Reg-StyleGAN2 (\textbf{ours})
&  $75.99$         &      $37.77$    &    $107.02$           &      $63.10$        &           $6.56\pm0.14$              \\
\midrule
% \cmidrule(r){1-3} \cmidrule(r){4-5} \cmidrule(r){6-6} 

ADA~\citep{ada-karras2020training}   &    $22.26$         &  $12.64$          &   $41.81$            &     $16.76$         &         $3.07 \pm 0.08$ \\
Reg-ADA (\textbf{ours})              &  $20.05$           &    $11.95$        &$36.17$      &    $15.91$        &  $2.95\pm0.05$ \\
\midrule
% \cmidrule(r){1-3} \cmidrule(r){4-5} \cmidrule(r){6-6} 
ADA + APA ~\citep{apa-jiang2021deceive}  & $19.71$  &$8.84$& $24.09$ & $11.79$ & $2.64 \pm 0.08$\\
Reg-ADA-APA (\textbf{ours}) &  $\mathbf{17.88}$ & $\mathbf{8.02}$ & $\mathbf{21.88}$ & $\mathbf{11.27}$ & $2.58 \pm 0.04$ \\
\bottomrule
\end{tabular}
\label{table:gan_res}
\end{table}

\section{Experiments}
\label{sec:exp}

For a fair comparison to a large family of prior work, we evaluate Reg-DGM on several widely adopted benchmarks with limited data~\citep{ada-karras2020training}, including the FFHQ \citep{stylegan1-karras2019style}, $100$-shot Obama~\citep{da-zhao2020differentiable}, LSUN CAT \citep{lsun-yu2015lsun} and CIFAR-10 \citep{cifar10-krizhevsky2009learning} datasets, and the data processing and metric calculating are the same as those of ADA~\citep{ada-karras2020training}. We present the main results and analyses in the section and refer the readers to Appendix~\ref{appen:gan-experiment-detail} for details and Appendix~\ref{appen:additional-results} for additional results. Throughout the section, we refer to our approach as the name of the base DGM with the prefix ``Reg-''. For instance, ``Reg-ADA'' denotes our approach with ADA~\citep{ada-karras2020training} as the base DGM.

\subsection{Benchmark Results with Limited Data}
\label{sec:exp_gan}

We employ StyleGAN2~\citep{stylegan2-karras2020analyzing}, ADA~\citep{ada-karras2020training} and APA~\citep{apa-jiang2021deceive} as the base DGMs and a ResNet-18~\citep{He2015} classifier trained on ImageNet~\citep{deng2009imagenet} as the pre-trained model by default. The associated energy function is defined in Eq.~(\ref{eq:energy_cla}).  

Quantitatively, we compare Reg-DGM with a large family of existing methods, including the base DGMs, the transfer-based approaches, the augmentation-based methods and many others. Following the direct and strong competitor~\citep{ada-karras2020training}, we report the median Fréchet inception distance (FID)~\citep{FID-heusel2017gans} on FFHQ and LSUN CAT, and
the mean FID on CIFAR-10 out of 3 runs for a fair comparison in Tab.~\ref{table:gan_res}. 
For completeness, we also report the mean FID with the standard deviation on FFHQ and LSUN CAT in Appendix~\ref{appen:std-ffhq-lsun-cat}.

\begin{figure}[t]
\begin{center}
\subfloat[$100$-shot Obama (FID $39.53$)]{\includegraphics[width=0.35\columnwidth]{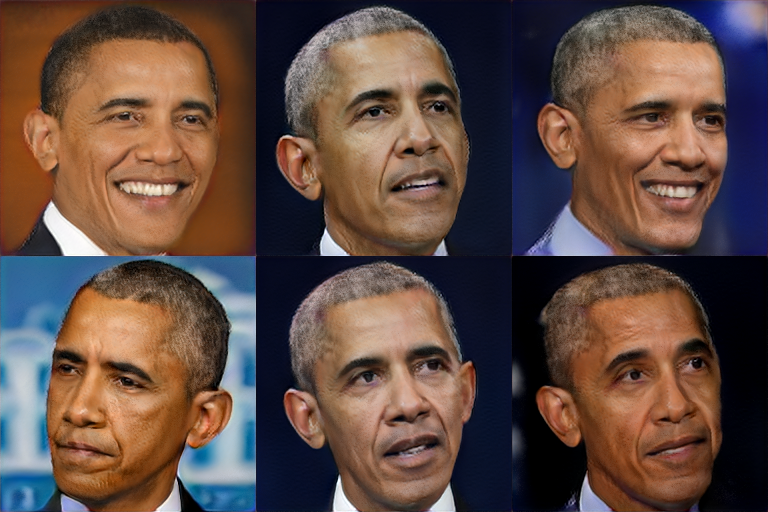}}
\hspace{.1cm}
\subfloat[FFHQ-$5$k (FID $11.69$)]{\includegraphics[width=0.35\columnwidth]{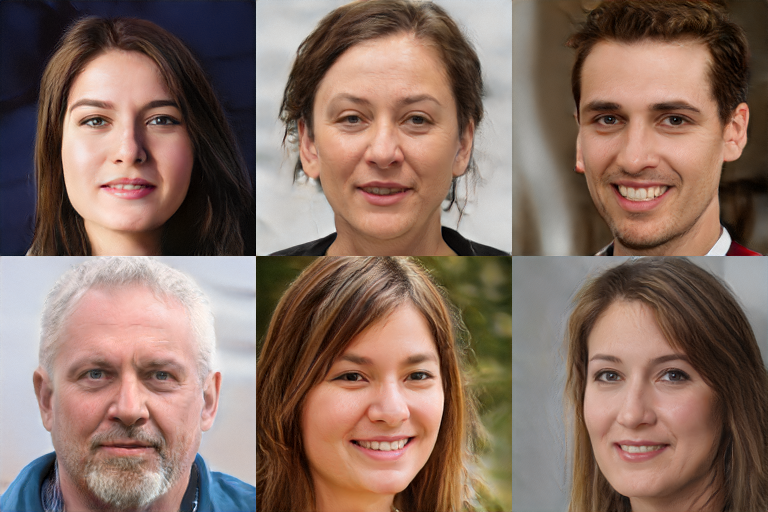}}
\caption{Samples from the Reg-ADA, truncated ($\psi = 0.7$) as in prior work~\citep{ada-karras2020training}.}
\label{fig:gan-ffhq-5k}
\end{center}
\vspace{-.3cm}
\end{figure}

As shown in Tab.~\ref{table:gan_res}, Reg-StyleGAN2, Reg-ADA and Reg-ADA-APA consistently outperform the corresponding base DGM in five settings, demonstrating that Reg-DGM can achieve a good bias-variance tradeoff in practice. Besides, the superior performance of Reg-ADA over ADA (and Reg-ADA-APA over ADA-APA) shows that our contribution is orthogonal to the augmentation-based approaches. Notably, the improvement of Reg-DGM over the base DGM is larger when the sample size $m$ is smaller. This is as expected because the relative gain of Reg-DGM over the base DGM increases as $m$ decreases given a fixed pre-trained model and the optimal $\lambda$  (see Fig.~\ref{fig:gaussian} (b)).
Further, we evaluate the base DGMs and Reg-DGMs under the kernel inception
distance (KID)~\citep{binkowski2018demystifying} metric in Appendix~\ref{appen:ablation-calssifier}. The conclusion remains the same. As suggested by Proposition~\ref{thm:reg_better}, the value of the weighting hyperparameter $\lambda$ is crucial and there is an appropriate range of $\lambda$ such that Reg-DGM is better than the base DGM. We empirically validate the argument on FFHQ-$5$k with StyleGAN2 as the base model in Appendix~\ref{appen:sens}.

We mention that the methods based on fine-tuning~\citep{wang2018transferring,mo2020freeze} in Tab.~\ref{table:gan_res} employ a GAN pre-trained on CelebA-HQ~\citep{karras2017progressive}, which is also a face dataset of the same resolution as FFHQ. In comparison, our approach is built upon a classifier pertained on ImageNet. As highlighted in~\citep{wang2018transferring},
the density of the pre-training dataset is more important than the diversity. Nevertheless, Reg-DGM is competitive with these strong baselines while enjoying the flexibility of choosing the pre-trained model and dataset. Further, we directly adopt the same pre-trained model across all datasets including CIFAR-10, where it takes additional efforts to get a suitable pre-trained generative model to fine-tune. Based on the results, it is safe to emphasize the complementary role of Reg-DGM to the approaches based on fine-tuning.

Qualitatively, we show the samples generated from Reg-DGM on FFHQ-$5$k and $100$-shot Obama in Fig.~\ref{fig:gan-ffhq-5k}. It can be seen that with the regularization, our approach can produce faces of a normal shape with limited data. We present the results from the base DGMs and more samples in other settings in Appendix~\ref{appen:more-samples}. For a comprehensive comparison on the visual quality of samples, we perform a human evaluation by the Amazon Mechanical Turk (AMT) as in prior work~\citep{choi2020stargan}.  According with the FID results, Reg-StyleGAN2 trained on FFHQ-$5$k against the baseline StyleGAN2 is chosen in $63.5\%$ of the 3,000 image quality comparison tasks. See details in Appendix~\ref{appen:ablation-calssifier}.

% \begin{figure}[t]
% \begin{center}
% \subfloat[$100$-shot Obama (FID $39.53$)]{\includegraphics[width=0.3\columnwidth]{new-images/obama-2x2-trunc.png}}
% \hspace{.1cm}
% \subfloat[FFHQ-$5$k (FID $11.69$)]{\includegraphics[width=0.3\columnwidth]{new-images/ffhq-5k-2x2-trunc.png}}
% \caption{Samples from the Reg-ADA, truncated ($\psi = 0.7$) as in~\citep{ada-karras2020training}.}
% \label{fig:gan-ffhq-5k}
% \end{center}
% \vspace{-.3cm}
% \end{figure}

\subsection{Ablation of Pre-trained Models and Pre-training Datasets}
\label{sec:abl_pre}

For simplicity, the main results of Reg-DGM presented in Sec.~\ref{sec:exp_gan} are based on a ResNet-18 model pre-trained on ImageNet. Although its architecture is significantly different from the Inception-v3~\citep{szegedy2016rethinking} used in FID and KID calculation, it is worth performing an ablation on the pre-trained models to eliminate the potential bias caused by the pre-training dataset.

In particular, we test Reg-DGM with the image encoder of the CLIP model~\citep{clip-radford2021learning} (an architecture very similar to ResNet-50), which is pre-trained on large-scale noisy text-image pairs instead of ImageNet, and with the face recognizer Inception-ResNet-v1~\citep{szegedy2017inception} of FaceNet~\citep{facenet-schroff2015facenet}, which is pre-trained on the large-scale face dataset VGGFace2~\citep{vggface2-cao2018vggface2}. See details in Appendix~\ref{appen:gan-experiment-detail}. 
The median FID and corresponding KID results are shown in Tab.~\ref{tab:clip-resnet50}. Without heavily tuning the hyperparameter $\lambda$, Reg-DGM shows consistent improvements over the two baselines under both FID and KID metrics, and achieves comparable results to those presented in Table~\ref{table:gan_res}. The results with the CLIP model and the FaceNet model demonstrate that Reg-DGM works well with various backbones and pre-training datasets.

% Notably, we replace the last attention pooling layer in the image encoder of CLIP with the global average pooling layer, and we directly pass the image to the face recognizer without operating the face detector in FaceNet to extract cropped faces. 

\begin{table}[t]
\caption{Median FID $\downarrow$ and the corresponding KID$\times 10^3 \downarrow$ using a pre-trained CLIP or FaceNet.}
\label{tab:clip-resnet50}
\vspace{.1cm}
\centering
\begin{tabular}{ccccccc}
\toprule
 & \multicolumn{4}{c}{CLIP} & \multicolumn{2}{c}{FaceNet}
\\
% \midrule
\cmidrule(r){2-5} \cmidrule(r){6-7} 
Method & \multicolumn{2}{c}{FFHQ-$5$k}& \multicolumn{2}{c}{LSUN CAT-$5$k}& \multicolumn{2}{c}{FFHQ-$5$k} 
\\
% \midrule
\cmidrule(r){2-3} \cmidrule(r){4-5} \cmidrule(r){6-7} 
 & FID &KID & FID &KID& FID &KID\\
\midrule
% \cmidrule(r){1-3} \cmidrule(r){4-5} \cmidrule(r){6-7} 

StyleGAN2~\citep{stylegan2-karras2020analyzing} & $52.71$ & $39.52$& $115.16$ & $100.57$& $52.71$ & $39.52$\\
Reg-StyleGAN2(\textbf{ours}) &$40.98$ & $27.56$& $42.04$&$26.21$ &$38.80$ & $23.38$\\
\midrule
% \cmidrule(r){1-3} \cmidrule(r){4-5} \cmidrule(r){6-7} 
ADA~\citep{ada-karras2020training} & $12.64$ & $5.17$& $16.76$ &$8.13$ & $12.64$ & $5.17$\\
Reg-ADA(\textbf{ours}) & $11.09$ &  $3.91$ & $14.15$& $6.72$ & $11.37$ &  $4.01$ \\
\midrule
% \cmidrule(r){1-3} \cmidrule(r){4-5} \cmidrule(r){6-7} 
ADA+APA~\citep{apa-jiang2021deceive}& $8.84$ & $2.76$& $11.79$& $4.86$ & $8.84$ & $2.76$\\
Reg-ADA-APA(\textbf{ours})& $\mathbf{8.18}$& $\mathbf{2.26}$ & $\mathbf{10.47}$&$\mathbf{4.68}$& $\mathbf{8.21}$& $\mathbf{2.37}$ \\
\bottomrule
\end{tabular}
\end{table}

\subsection{Ablation of Energy Functions}
\label{sec:alation-energy-functions}

As emphasized in Sec.~\ref{sec:imp}, our main contribution is the general framework instead of a specific energy function. Nevertheless, we still investigate two alternative regularization terms for completeness.

We first consider the famous entropy-minimization regularization~\citep{grandvalet2004semi}, which is data-independent. Intuitively, the regularization forces $p_g$ that produces samples with clear semantics. We find that the entropy regularization achieves similar FID results of $50.87$ on FFHQ-$5$k to the baseline $52.71$, showing the importance of the data dependency in the energy function.  We then investigate another  data-dependent regularization term, i.e., feature matching~\citep{salimans2016improved}, which matches the averaged features between the model and data distributions. We find that feature matching can achieve FID $32.65$ on FFHQ-$5$k greatly reducing the FID of StyleGAN2 while it cannot improve the visual quality of the samples. Please refer to Appendix~\ref{appen:ablation-calssifier} for details of both energy functions. Therefore, Eq.~(\ref{eq:energy_cla}) is the best among them considering the qualitative and quantitative results together and we believe it can be transferred to new settings based on our results in Sec.~\ref{sec:exp_gan} and Sec.~\ref{sec:abl_pre}.

\section{Related Work}
\label{sec:related}
\textbf{Fine-tuning approaches.}  A milestone of deep learning is that a deep generative model fine-tuned for classification outperforms the  classical SVM on recognizing the hand-writing digits~\citep{hinton2006fast}. Since then, the idea of fine-tuning has a significant impact~\citep{devlin2018bert,he2020momentum} including generative models with limited data~\citep{wang2018transferring,mo2020freeze,wang2020minegan,li2020few,ojha2021few}. However, an inherent restriction of fine-tuning is that the pre-trained model and target model should partially share a common structure. Thus, it may take additional efforts to find a suitable pre-trained model to fine-tune.
In comparison, Reg-DGM provides an alternative way to make it possible to exploit a pre-trained classifier to help generative modeling. Notably, the latter is often thought of as much harder than the former. 

\textbf{Other generative adversarial networks with limited data.} To relieve the overfitting problem of the discriminator, DA ~\citep{da-zhao2020differentiable}, ADA~\citep{ada-karras2020training} and APA~\citep{apa-jiang2021deceive} design sophisticated data augmentation strategies. GenCo~\citep{genco-cui2021genco} designs a co-training framework that introduces multiple complimentary discriminators. InsGen~\citep{yang2021data} improves the data efficiency of GANs via an instance discrimination loss~\citep{wu2018unsupervised}. We believe that Reg-DGM is orthogonal to these methods based on our results in Tab.~\ref{table:gan_res}.

\textbf{Regularization in probabilistic models.} Extensive regularization approaches have been developed in traditional Bayesian inference~\citep{zhu2014bayesian} and probabilistic modeling~\citep{chang2007guiding,liang2009learning,ganchev2010posterior}. Among them, posterior regularization (PR)~\citep{ganchev2010posterior} encodes the human knowledge about the task as linear constraints of the latent representations in generative models for better inference performance. Such methods have been extended to deep generative models~\citep{hu2018deep,du2018learning,shu2018amortized,xu2019multi}
for a similar reason. Technically, PR-based methods regularize the latent space via handcrafted or jointly trained constraints. In comparison, our approach regularizes the data space via a pre-trained model. Besides, PR-based methods are suitable for structured prediction tasks instead of generative modeling with limited data, which is the main focus of this paper.

\section{Conclusions}
\label{sec:discussion}
  
In this paper, we propose regularized deep generative model (Reg-DGM), which leverages a pre-trained model for regularization to reduce the variance of DGMs with limited data. Theoretically, we analyze the convergence properties of Reg-DGM. Empirically, with various pre-trained feature extractors and a data-dependent energy function, Reg-DGM consistently improves the generation performance of strong DGMs and achieves competitive results to the state-of-the-art methods. An interesting future work is to analyze the generalization behaviour of Reg-DGM in general and inspire new energy functions. Currently, the generalization analysis of deep learning is still largely open~\citep{zhang2021understanding,bartlett2021deep,belkin2021fit}, and there lacks appropriate tools to formalize our intuition on the bias-variance tradeoff in general. 

\section*{Acknowledgement}
We thank Guoqiang Wu for helpful discussions about the generalization analysis. This work was supported by NSF of China (Nos. 62076145, 62106273); Beijing Outstanding Young Scientist Program (No. BJJWZYJH012019100020098); Major Innovation \& Planning Interdisciplinary Platform for the ``Double-First Class" Initiative, Renmin University of China; the Fundamental Research Funds for the Central Universities, and the Research Funds of Renmin University of China (22XNKJ13, 22XNKJ16).
Part of the computing resources supporting this work, totaled 720 A100 GPU hours, were provided by High-Flyer AI. (Hangzhou High-Flyer AI Fundamental Research Co., Ltd.).  C. Li was also sponsored by Beijing Nova Program.

% \subsubsection*{Acknowledgments}
% Use unnumbered third level headings for the acknowledgments. All
% acknowledgments, including those to funding agencies, go at the end of the paper.

\section*{Ethics Statement}
 This work presents a framework to train deep generative models on small data. By improving the data efficiency, it can potentially benefit real-world applications like medicine analysis and automatic drive. However, this work can have negative consequences in the form of ``DeepFakes'', as existing GANs. It is worth noting that this work may exacerbate such issues by improving the data efficiency of GANs. How to detect ``DeepFakes'' is an active research area in machine learning, which aims to relieve the problem.
 
 \section*{Reproducibility Statement}
We submit the source code in the supplementary material and have released it. Datasets used in experiments are open and publicly available, and experimental details are provided in the Appendix~\ref{appen:experimental-details}. In addition, the complete proof of the propositions and theorems is contained in the Appendix~\ref{app:proof}.

% \subsubsection*{Author Contributions}
% If you'd like to, you may include  a section for author contributions as is done
% in many journals. This is optional and at the discretion of the authors.

\bibliography{iclr2023_conference}

\begin{thebibliography}{69}
\providecommand{\natexlab}[1]{#1}
\providecommand{\url}[1]{\texttt{#1}}
\expandafter\ifx\csname urlstyle\endcsname\relax
  \providecommand{\doi}[1]{doi: #1}\else
  \providecommand{\doi}{doi: \begingroup \urlstyle{rm}\Url}\fi

\bibitem[Ajjanagadde et~al.(2017)Ajjanagadde, Makur, Klusowski, Xu,
  et~al.]{ajjanagadde2017lecture}
Ganesh Ajjanagadde, Anuran Makur, Jason Klusowski, Sheng Xu, et~al.
\newblock Lecture notes on information theory.
\newblock \emph{Lab. Inf. Decis. Syst., Massachusetts Inst. Technol.,
  Cambridge, MA, USA, Tech. Rep}, 2017.

\bibitem[Allen-Zhu et~al.(2019)Allen-Zhu, Li, and Song]{allen2019convergence}
Zeyuan Allen-Zhu, Yuanzhi Li, and Zhao Song.
\newblock A convergence theory for deep learning via over-parameterization.
\newblock In \emph{International Conference on Machine Learning}, pp.\
  242--252. PMLR, 2019.

\bibitem[Arjovsky et~al.(2017)Arjovsky, Chintala, and
  Bottou]{arjovsky2017wasserstein}
Martin Arjovsky, Soumith Chintala, and L{\'e}on Bottou.
\newblock Wasserstein generative adversarial networks.
\newblock In \emph{International conference on machine learning}, pp.\
  214--223. PMLR, 2017.

\bibitem[Bartlett et~al.(2021)Bartlett, Montanari, and
  Rakhlin]{bartlett2021deep}
Peter~L Bartlett, Andrea Montanari, and Alexander Rakhlin.
\newblock Deep learning: a statistical viewpoint.
\newblock \emph{Acta numerica}, 30:\penalty0 87--201, 2021.

\bibitem[Belkin(2021)]{belkin2021fit}
Mikhail Belkin.
\newblock Fit without fear: remarkable mathematical phenomena of deep learning
  through the prism of interpolation.
\newblock \emph{Acta Numerica}, 30:\penalty0 203--248, 2021.

\bibitem[Billingsley(2013)]{billingsley2013convergence}
Patrick Billingsley.
\newblock \emph{Convergence of probability measures}.
\newblock John Wiley \& Sons, 2013.

\bibitem[Bi{\'n}kowski et~al.(2018)Bi{\'n}kowski, Sutherland, Arbel, and
  Gretton]{binkowski2018demystifying}
Miko{\l}aj Bi{\'n}kowski, Danica~J Sutherland, Michael Arbel, and Arthur
  Gretton.
\newblock Demystifying mmd gans.
\newblock \emph{arXiv preprint arXiv:1801.01401}, 2018.

\bibitem[Bishop \& Nasrabadi(2006)Bishop and Nasrabadi]{bishop2006pattern}
Christopher~M Bishop and Nasser~M Nasrabadi.
\newblock \emph{Pattern recognition and machine learning}, volume~4.
\newblock Springer, 2006.

\bibitem[Cao et~al.(2018)Cao, Shen, Xie, Parkhi, and
  Zisserman]{vggface2-cao2018vggface2}
Qiong Cao, Li~Shen, Weidi Xie, Omkar~M Parkhi, and Andrew Zisserman.
\newblock Vggface2: A dataset for recognising faces across pose and age.
\newblock In \emph{2018 13th IEEE international conference on automatic face \&
  gesture recognition (FG 2018)}, pp.\  67--74. IEEE, 2018.

\bibitem[Chang et~al.(2007)Chang, Ratinov, and Roth]{chang2007guiding}
Ming-Wei Chang, Lev Ratinov, and Dan Roth.
\newblock Guiding semi-supervision with constraint-driven learning.
\newblock In \emph{Proceedings of the 45th annual meeting of the association of
  computational linguistics}, pp.\  280--287, 2007.

\bibitem[Choi et~al.(2020)Choi, Uh, Yoo, and Ha]{choi2020stargan}
Yunjey Choi, Youngjung Uh, Jaejun Yoo, and Jung-Woo Ha.
\newblock Stargan v2: Diverse image synthesis for multiple domains.
\newblock In \emph{Proceedings of the IEEE/CVF conference on computer vision
  and pattern recognition}, pp.\  8188--8197, 2020.

\bibitem[Cui et~al.(2021)Cui, Huang, Luo, Zhang, Zhan, and
  Lu]{genco-cui2021genco}
Kaiwen Cui, Jiaxing Huang, Zhipeng Luo, Gongjie Zhang, Fangneng Zhan, and
  Shijian Lu.
\newblock Genco: Generative co-training for generative adversarial networks
  with limited data.
\newblock \emph{arXiv preprint arXiv:2110.01254}, 2021.

\bibitem[Deng et~al.(2009)Deng, Dong, Socher, Li, Li, and
  Fei-Fei]{deng2009imagenet}
Jia Deng, Wei Dong, Richard Socher, Li-Jia Li, Kai Li, and Li~Fei-Fei.
\newblock Imagenet: A large-scale hierarchical image database.
\newblock In \emph{2009 IEEE conference on computer vision and pattern
  recognition}, pp.\  248--255. Ieee, 2009.

\bibitem[Devlin et~al.(2018)Devlin, Chang, Lee, and Toutanova]{devlin2018bert}
Jacob Devlin, Ming-Wei Chang, Kenton Lee, and Kristina Toutanova.
\newblock Bert: Pre-training of deep bidirectional transformers for language
  understanding.
\newblock \emph{arXiv preprint arXiv:1810.04805}, 2018.

\bibitem[Dinh et~al.(2016)Dinh, Sohl-Dickstein, and Bengio]{dinh2016density}
Laurent Dinh, Jascha Sohl-Dickstein, and Samy Bengio.
\newblock Density estimation using real nvp.
\newblock \emph{arXiv preprint arXiv:1605.08803}, 2016.

\bibitem[Du et~al.(2018)Du, Xu, Li, Zhu, and Zhang]{du2018learning}
Chao Du, Kun Xu, Chongxuan Li, Jun Zhu, and Bo~Zhang.
\newblock Learning implicit generative models by teaching explicit ones.
\newblock \emph{arXiv preprint arXiv:1807.03870}, 2018.

\bibitem[Ganchev et~al.(2010)Ganchev, Gra{\c{c}}a, Gillenwater, and
  Taskar]{ganchev2010posterior}
Kuzman Ganchev, Joao Gra{\c{c}}a, Jennifer Gillenwater, and Ben Taskar.
\newblock Posterior regularization for structured latent variable models.
\newblock \emph{The Journal of Machine Learning Research}, 11:\penalty0
  2001--2049, 2010.

\bibitem[Goodfellow et~al.(2014)Goodfellow, Pouget-Abadie, Mirza, Xu,
  Warde-Farley, Ozair, Courville, and Bengio]{goodfellow2014generative}
Ian Goodfellow, Jean Pouget-Abadie, Mehdi Mirza, Bing Xu, David Warde-Farley,
  Sherjil Ozair, Aaron Courville, and Yoshua Bengio.
\newblock Generative adversarial nets.
\newblock In \emph{Advances in neural information processing systems}, pp.\
  2672--2680, 2014.

\bibitem[Grandvalet \& Bengio(2004)Grandvalet and Bengio]{grandvalet2004semi}
Yves Grandvalet and Yoshua Bengio.
\newblock Semi-supervised learning by entropy minimization.
\newblock \emph{Advances in neural information processing systems}, 17, 2004.

\bibitem[He et~al.(2015)He, Zhang, Ren, and Sun]{He2015}
Kaiming He, Xiangyu Zhang, Shaoqing Ren, and Jian Sun.
\newblock Deep residual learning for image recognition.
\newblock \emph{arXiv preprint arXiv:1512.03385}, 2015.

\bibitem[He et~al.(2020)He, Fan, Wu, Xie, and Girshick]{he2020momentum}
Kaiming He, Haoqi Fan, Yuxin Wu, Saining Xie, and Ross Girshick.
\newblock Momentum contrast for unsupervised visual representation learning.
\newblock In \emph{Proceedings of the IEEE/CVF conference on computer vision
  and pattern recognition}, pp.\  9729--9738, 2020.

\bibitem[Heusel et~al.(2017)Heusel, Ramsauer, Unterthiner, Nessler, and
  Hochreiter]{FID-heusel2017gans}
Martin Heusel, Hubert Ramsauer, Thomas Unterthiner, Bernhard Nessler, and Sepp
  Hochreiter.
\newblock Gans trained by a two time-scale update rule converge to a local nash
  equilibrium.
\newblock \emph{Advances in neural information processing systems}, 30, 2017.

\bibitem[Hinton \& Salakhutdinov(2006)Hinton and
  Salakhutdinov]{hinton2006reducing}
Geoffrey~E Hinton and Ruslan~R Salakhutdinov.
\newblock Reducing the dimensionality of data with neural networks.
\newblock \emph{science}, 313\penalty0 (5786):\penalty0 504--507, 2006.

\bibitem[Hinton et~al.(2006)Hinton, Osindero, and Teh]{hinton2006fast}
Geoffrey~E Hinton, Simon Osindero, and Yee-Whye Teh.
\newblock A fast learning algorithm for deep belief nets.
\newblock \emph{Neural computation}, 18\penalty0 (7):\penalty0 1527--1554,
  2006.

\bibitem[Ho et~al.(2022)Ho, Salimans, Gritsenko, Chan, Norouzi, and
  Fleet]{ho2022video}
Jonathan Ho, Tim Salimans, Alexey Gritsenko, William Chan, Mohammad Norouzi,
  and David~J Fleet.
\newblock Video diffusion models.
\newblock \emph{arXiv preprint arXiv:2204.03458}, 2022.

\bibitem[Hu et~al.(2018)Hu, Yang, Salakhutdinov, Qin, Liang, Dong, and
  Xing]{hu2018deep}
Zhiting Hu, Zichao Yang, Russ~R Salakhutdinov, LIANHUI Qin, Xiaodan Liang,
  Haoye Dong, and Eric~P Xing.
\newblock Deep generative models with learnable knowledge constraints.
\newblock \emph{Advances in Neural Information Processing Systems}, 31, 2018.

\bibitem[Jiang et~al.(2021)Jiang, Dai, Wu, and Loy]{apa-jiang2021deceive}
Liming Jiang, Bo~Dai, Wayne Wu, and Chen~Change Loy.
\newblock Deceive d: Adaptive pseudo augmentation for gan training with limited
  data.
\newblock \emph{Advances in Neural Information Processing Systems}, 34, 2021.

\bibitem[Karras et~al.(2017)Karras, Aila, Laine, and
  Lehtinen]{karras2017progressive}
Tero Karras, Timo Aila, Samuli Laine, and Jaakko Lehtinen.
\newblock Progressive growing of gans for improved quality, stability, and
  variation.
\newblock \emph{arXiv preprint arXiv:1710.10196}, 2017.

\bibitem[Karras et~al.(2019)Karras, Laine, and Aila]{stylegan1-karras2019style}
Tero Karras, Samuli Laine, and Timo Aila.
\newblock A style-based generator architecture for generative adversarial
  networks.
\newblock In \emph{Proceedings of the IEEE/CVF conference on computer vision
  and pattern recognition}, pp.\  4401--4410, 2019.

\bibitem[Karras et~al.(2020{\natexlab{a}})Karras, Aittala, Hellsten, Laine,
  Lehtinen, and Aila]{ada-karras2020training}
Tero Karras, Miika Aittala, Janne Hellsten, Samuli Laine, Jaakko Lehtinen, and
  Timo Aila.
\newblock Training generative adversarial networks with limited data.
\newblock \emph{Advances in Neural Information Processing Systems},
  33:\penalty0 12104--12114, 2020{\natexlab{a}}.

\bibitem[Karras et~al.(2020{\natexlab{b}})Karras, Laine, Aittala, Hellsten,
  Lehtinen, and Aila]{stylegan2-karras2020analyzing}
Tero Karras, Samuli Laine, Miika Aittala, Janne Hellsten, Jaakko Lehtinen, and
  Timo Aila.
\newblock Analyzing and improving the image quality of stylegan.
\newblock In \emph{Proceedings of the IEEE/CVF conference on computer vision
  and pattern recognition}, pp.\  8110--8119, 2020{\natexlab{b}}.

\bibitem[Kingma \& Welling(2013)Kingma and Welling]{kingma2013auto}
Diederik~P Kingma and Max Welling.
\newblock Auto-encoding variational bayes.
\newblock \emph{arXiv preprint arXiv:1312.6114}, 2013.

\bibitem[Kingma et~al.(2014)Kingma, Mohamed, Rezende, and
  Welling]{kingma2014semi}
Durk~P Kingma, Shakir Mohamed, Danilo~Jimenez Rezende, and Max Welling.
\newblock Semi-supervised learning with deep generative models.
\newblock In \emph{Advances in neural information processing systems}, pp.\
  3581--3589, 2014.

\bibitem[Krizhevsky et~al.(2009)Krizhevsky, Hinton,
  et~al.]{cifar10-krizhevsky2009learning}
Alex Krizhevsky, Geoffrey Hinton, et~al.
\newblock Learning multiple layers of features from tiny images.
\newblock 2009.

\bibitem[LeCun et~al.(2006)LeCun, Chopra, Hadsell, Ranzato, and
  Huang]{lecun2006tutorial}
Yann LeCun, Sumit Chopra, Raia Hadsell, M~Ranzato, and F~Huang.
\newblock A tutorial on energy-based learning.
\newblock \emph{Predicting structured data}, 1\penalty0 (0), 2006.

\bibitem[Li et~al.(2021)Li, Xu, Zhu, Liu, and Zhang]{li2021triple}
Chongxuan Li, Kun Xu, Jun Zhu, Jiashuo Liu, and Bo~Zhang.
\newblock Triple generative adversarial networks.
\newblock \emph{IEEE Transactions on Pattern Analysis and Machine
  Intelligence}, 2021.

\bibitem[Li et~al.(2020)Li, Zhang, Lu, and Shechtman]{li2020few}
Yijun Li, Richard Zhang, Jingwan Lu, and Eli Shechtman.
\newblock Few-shot image generation with elastic weight consolidation.
\newblock \emph{arXiv preprint arXiv:2012.02780}, 2020.

\bibitem[Li et~al.(2022)Li, Wu, Xia, Zhang, Wang, and Li]{li2022comprehensive}
Ziqiang Li, Xintian Wu, Beihao Xia, Jing Zhang, Chaoyue Wang, and Bin Li.
\newblock A comprehensive survey on data-efficient gans in image generation.
\newblock \emph{arXiv preprint arXiv:2204.08329}, 2022.

\bibitem[Liang et~al.(2009)Liang, Jordan, and Klein]{liang2009learning}
Percy Liang, Michael~I Jordan, and Dan Klein.
\newblock Learning from measurements in exponential families.
\newblock In \emph{Proceedings of the 26th annual international conference on
  machine learning}, pp.\  641--648, 2009.

\bibitem[Mescheder et~al.(2018)Mescheder, Geiger, and
  Nowozin]{mescheder2018training}
Lars Mescheder, Andreas Geiger, and Sebastian Nowozin.
\newblock Which training methods for gans do actually converge?
\newblock In \emph{International conference on machine learning}, pp.\
  3481--3490. PMLR, 2018.

\bibitem[Mo et~al.(2020)Mo, Cho, and Shin]{mo2020freeze}
Sangwoo Mo, Minsu Cho, and Jinwoo Shin.
\newblock Freeze the discriminator: a simple baseline for fine-tuning gans.
\newblock \emph{arXiv preprint arXiv:2002.10964}, 2020.

\bibitem[Mohri et~al.(2018)Mohri, Rostamizadeh, and
  Talwalkar]{fml2018-mohri2018foundations}
Mehryar Mohri, Afshin Rostamizadeh, and Ameet Talwalkar.
\newblock \emph{Foundations of machine learning}.
\newblock MIT press, 2018.

\bibitem[Nair \& Hinton(2010)Nair and Hinton]{nair2010rectified}
Vinod Nair and Geoffrey~E Hinton.
\newblock Rectified linear units improve restricted boltzmann machines.
\newblock In \emph{Icml}, 2010.

\bibitem[Noguchi \& Harada(2019)Noguchi and Harada]{noguchi2019image}
Atsuhiro Noguchi and Tatsuya Harada.
\newblock Image generation from small datasets via batch statistics adaptation.
\newblock In \emph{Proceedings of the IEEE/CVF International Conference on
  Computer Vision}, pp.\  2750--2758, 2019.

\bibitem[Ojha et~al.(2021)Ojha, Li, Lu, Efros, Lee, Shechtman, and
  Zhang]{ojha2021few}
Utkarsh Ojha, Yijun Li, Jingwan Lu, Alexei~A Efros, Yong~Jae Lee, Eli
  Shechtman, and Richard Zhang.
\newblock Few-shot image generation via cross-domain correspondence.
\newblock In \emph{Proceedings of the IEEE/CVF Conference on Computer Vision
  and Pattern Recognition}, pp.\  10743--10752, 2021.

\bibitem[Radford et~al.(2021)Radford, Kim, Hallacy, Ramesh, Goh, Agarwal,
  Sastry, Askell, Mishkin, Clark, et~al.]{clip-radford2021learning}
Alec Radford, Jong~Wook Kim, Chris Hallacy, Aditya Ramesh, Gabriel Goh,
  Sandhini Agarwal, Girish Sastry, Amanda Askell, Pamela Mishkin, Jack Clark,
  et~al.
\newblock Learning transferable visual models from natural language
  supervision.
\newblock In \emph{International Conference on Machine Learning}, pp.\
  8748--8763. PMLR, 2021.

\bibitem[Ramesh et~al.(2021)Ramesh, Pavlov, Goh, Gray, Voss, Radford, Chen, and
  Sutskever]{ramesh2021zero}
Aditya Ramesh, Mikhail Pavlov, Gabriel Goh, Scott Gray, Chelsea Voss, Alec
  Radford, Mark Chen, and Ilya Sutskever.
\newblock Zero-shot text-to-image generation.
\newblock In \emph{International Conference on Machine Learning}, pp.\
  8821--8831. PMLR, 2021.

\bibitem[Ramesh et~al.(2022)Ramesh, Dhariwal, Nichol, Chu, and
  Chen]{ramesh2022hierarchical}
Aditya Ramesh, Prafulla Dhariwal, Alex Nichol, Casey Chu, and Mark Chen.
\newblock Hierarchical text-conditional image generation with clip latents.
\newblock \emph{arXiv preprint arXiv:2204.06125}, 2022.

\bibitem[Razavi et~al.(2019)Razavi, Van~den Oord, and
  Vinyals]{razavi2019generating}
Ali Razavi, Aaron Van~den Oord, and Oriol Vinyals.
\newblock Generating diverse high-fidelity images with vq-vae-2.
\newblock \emph{Advances in neural information processing systems}, 32, 2019.

\bibitem[Salimans et~al.(2016)Salimans, Goodfellow, Zaremba, Cheung, Radford,
  and Chen]{salimans2016improved}
Tim Salimans, Ian Goodfellow, Wojciech Zaremba, Vicki Cheung, Alec Radford, and
  Xi~Chen.
\newblock Improved techniques for training gans.
\newblock In \emph{Advances in neural information processing systems}, pp.\
  2234--2242, 2016.

\bibitem[Schroff et~al.(2015)Schroff, Kalenichenko, and
  Philbin]{facenet-schroff2015facenet}
Florian Schroff, Dmitry Kalenichenko, and James Philbin.
\newblock Facenet: A unified embedding for face recognition and clustering.
\newblock In \emph{Proceedings of the IEEE conference on computer vision and
  pattern recognition}, pp.\  815--823, 2015.

\bibitem[Shu et~al.(2018)Shu, Bui, Zhao, Kochenderfer, and
  Ermon]{shu2018amortized}
Rui Shu, Hung~H Bui, Shengjia Zhao, Mykel~J Kochenderfer, and Stefano Ermon.
\newblock Amortized inference regularization.
\newblock \emph{Advances in Neural Information Processing Systems}, 31, 2018.

\bibitem[Sohl-Dickstein et~al.(2015)Sohl-Dickstein, Weiss, Maheswaranathan, and
  Ganguli]{sohl2015deep}
Jascha Sohl-Dickstein, Eric Weiss, Niru Maheswaranathan, and Surya Ganguli.
\newblock Deep unsupervised learning using nonequilibrium thermodynamics.
\newblock In \emph{International Conference on Machine Learning}, pp.\
  2256--2265. PMLR, 2015.

\bibitem[Szegedy et~al.(2016)Szegedy, Vanhoucke, Ioffe, Shlens, and
  Wojna]{szegedy2016rethinking}
Christian Szegedy, Vincent Vanhoucke, Sergey Ioffe, Jon Shlens, and Zbigniew
  Wojna.
\newblock Rethinking the inception architecture for computer vision.
\newblock In \emph{Proceedings of the IEEE conference on computer vision and
  pattern recognition}, pp.\  2818--2826, 2016.

\bibitem[Szegedy et~al.(2017)Szegedy, Ioffe, Vanhoucke, and
  Alemi]{szegedy2017inception}
Christian Szegedy, Sergey Ioffe, Vincent Vanhoucke, and Alexander~A Alemi.
\newblock Inception-v4, inception-resnet and the impact of residual connections
  on learning.
\newblock In \emph{Thirty-first AAAI conference on artificial intelligence},
  2017.

\bibitem[Tseng et~al.(2021)Tseng, Jiang, Liu, Yang, and
  Yang]{RLC-tseng2021regularizing}
Hung-Yu Tseng, Lu~Jiang, Ce~Liu, Ming-Hsuan Yang, and Weilong Yang.
\newblock Regularizing generative adversarial networks under limited data.
\newblock In \emph{Proceedings of the IEEE/CVF Conference on Computer Vision
  and Pattern Recognition}, pp.\  7921--7931, 2021.

\bibitem[Van~den Oord et~al.(2016)Van~den Oord, Kalchbrenner, Espeholt,
  Vinyals, Graves, et~al.]{van2016conditional}
Aaron Van~den Oord, Nal Kalchbrenner, Lasse Espeholt, Oriol Vinyals, Alex
  Graves, et~al.
\newblock Conditional image generation with pixelcnn decoders.
\newblock \emph{Advances in neural information processing systems}, 29, 2016.

\bibitem[Van~Gaans(2003)]{van2003probability}
Onno Van~Gaans.
\newblock Probability measures on metric spaces.
\newblock \emph{Lecture notes}, 2003.

\bibitem[Wang et~al.(2018)Wang, Wu, Herranz, van~de Weijer, Gonzalez-Garcia,
  and Raducanu]{wang2018transferring}
Yaxing Wang, Chenshen Wu, Luis Herranz, Joost van~de Weijer, Abel
  Gonzalez-Garcia, and Bogdan Raducanu.
\newblock Transferring gans: generating images from limited data.
\newblock In \emph{Proceedings of the European Conference on Computer Vision
  (ECCV)}, pp.\  218--234, 2018.

\bibitem[Wang et~al.(2020)Wang, Gonzalez-Garcia, Berga, Herranz, Khan, and
  Weijer]{wang2020minegan}
Yaxing Wang, Abel Gonzalez-Garcia, David Berga, Luis Herranz, Fahad~Shahbaz
  Khan, and Joost van~de Weijer.
\newblock Minegan: effective knowledge transfer from gans to target domains
  with few images.
\newblock In \emph{Proceedings of the IEEE/CVF Conference on Computer Vision
  and Pattern Recognition}, pp.\  9332--9341, 2020.

\bibitem[Wu et~al.(2018)Wu, Xiong, Yu, and Lin]{wu2018unsupervised}
Zhirong Wu, Yuanjun Xiong, Stella~X Yu, and Dahua Lin.
\newblock Unsupervised feature learning via non-parametric instance
  discrimination.
\newblock In \emph{Proceedings of the IEEE conference on computer vision and
  pattern recognition}, pp.\  3733--3742, 2018.

\bibitem[Xu et~al.(2019)Xu, Li, Zhu, and Zhang]{xu2019multi}
Taufik Xu, Chongxuan Li, Jun Zhu, and Bo~Zhang.
\newblock Multi-objects generation with amortized structural regularization.
\newblock \emph{Advances in Neural Information Processing Systems}, 32, 2019.

\bibitem[Yang et~al.(2021)Yang, Shen, Xu, and Zhou]{yang2021data}
Ceyuan Yang, Yujun Shen, Yinghao Xu, and Bolei Zhou.
\newblock Data-efficient instance generation from instance discrimination.
\newblock \emph{Advances in Neural Information Processing Systems}, 34, 2021.

\bibitem[Yu et~al.(2015)Yu, Seff, Zhang, Song, Funkhouser, and
  Xiao]{lsun-yu2015lsun}
Fisher Yu, Ari Seff, Yinda Zhang, Shuran Song, Thomas Funkhouser, and Jianxiong
  Xiao.
\newblock Lsun: Construction of a large-scale image dataset using deep learning
  with humans in the loop.
\newblock \emph{arXiv preprint arXiv:1506.03365}, 2015.

\bibitem[Zhang et~al.(2021)Zhang, Bengio, Hardt, Recht, and
  Vinyals]{zhang2021understanding}
Chiyuan Zhang, Samy Bengio, Moritz Hardt, Benjamin Recht, and Oriol Vinyals.
\newblock Understanding deep learning (still) requires rethinking
  generalization.
\newblock \emph{Communications of the ACM}, 64\penalty0 (3):\penalty0 107--115,
  2021.

\bibitem[Zhao et~al.(2020{\natexlab{a}})Zhao, Liu, Lin, Zhu, and
  Han]{da-zhao2020differentiable}
Shengyu Zhao, Zhijian Liu, Ji~Lin, Jun-Yan Zhu, and Song Han.
\newblock Differentiable augmentation for data-efficient gan training.
\newblock \emph{Advances in Neural Information Processing Systems},
  33:\penalty0 7559--7570, 2020{\natexlab{a}}.

\bibitem[Zhao et~al.(2020{\natexlab{b}})Zhao, Singh, Lee, Zhang, Odena, and
  Zhang]{bcr-zhao2020improved}
Zhengli Zhao, Sameer Singh, Honglak Lee, Zizhao Zhang, Augustus Odena, and Han
  Zhang.
\newblock Improved consistency regularization for gans.
\newblock \emph{arXiv preprint arXiv:2002.04724}, 2020{\natexlab{b}}.

\bibitem[Zhu et~al.(2014)Zhu, Chen, and Xing]{zhu2014bayesian}
Jun Zhu, Ning Chen, and Eric~P Xing.
\newblock Bayesian inference with posterior regularization and applications to
  infinite latent svms.
\newblock \emph{The Journal of Machine Learning Research}, 15\penalty0
  (1):\penalty0 1799--1847, 2014.

\bibitem[Zhu et~al.(2017)Zhu, Park, Isola, and Efros]{zhu2017unpaired}
Jun-Yan Zhu, Taesung Park, Phillip Isola, and Alexei~A Efros.
\newblock Unpaired image-to-image translation using cycle-consistent
  adversarial networks.
\newblock In \emph{Proceedings of the IEEE international conference on computer
  vision}, pp.\  2223--2232, 2017.

\end{thebibliography}
\bibliographystyle{iclr2023_conference}

\appendix

\section{Proofs}
\label{app:proof}

\subsection{The Gaussian-Fitting Example}

We first derive the solution of Reg-DGM in the main text. In the Gaussian fitting example, the regularized optimization problem can be written as
\begin{align}
    \hat{\mu}_{\textrm{REG}} & = \arg\min_\mu -\frac{1}{m} \sum_{i=1}^m \log \mathcal{N}(\mu, \sigma^2) + \lambda \mathbb{E}_{x\sim \mathcal{N}(\mu, \sigma^2)} \left[ -\log p_f(x) \right]
    \\
    & = \arg\min_\mu \frac{1}{m} \sum_{i=1}^m \frac{(\mu-x_i)^2}{2\sigma^2} + \lambda \left(\frac{(\mu - \hat{\mu}_{\textrm{PRE}})^2}{2\sigma^2} + \frac{1}{2} \log (2\pi \sigma^2) + \frac{1}{2}\right)\\
    & = \arg\min_\mu \frac{1}{m} \sum_{i=1}^m \frac{(\mu-x_i)^2}{2\sigma^2} + \lambda \frac{(\mu - \hat{\mu}_{\textrm{PRE}})^2}{2\sigma^2},
\end{align}
where the first equality holds by the definition and properties of Gaussian~\citep{bishop2006pattern} and the second equality holds by omitting a constant irrelevant to the optimization. It is easy to solve the above quadratic programming problem analytically:
\begin{align}
    \hat{\mu}_{\textrm{REG}} = \frac{1}{m(1+\lambda)} \sum_{i=1}^m x_i + \frac{\lambda}{ 1+\lambda} \hat{\mu}_{\textrm{PRE}} =  \frac{1}{ 1+\lambda} \hat{\mu}_{\textrm{MLE}} + \frac{\lambda}{1+\lambda} \hat{\mu}_{\textrm{PRE}},
\end{align}
where $\hat{\mu}_{\textrm{MLE}}\sim \mathcal{N}\left(\mu^*, \frac{\sigma^2}{m}\right)$. $\hat{\mu}_{REG}$ is obtained by an affine transformation of a Gaussian random variable and thus is also Gaussian distributed as follows:
\begin{align}
    \hat{\mu}_{\textrm{REG}} \sim \mathcal{N} \left(\frac{1}{1+\lambda} \mu^* + \frac{\lambda}{ 1+\lambda} \hat{\mu}_{\textrm{PRE}}, \frac{\sigma^2}{m(1+\lambda)^2} \right).
\end{align}

\subsubsection{Proof of Proposition~\ref{thm:reg_better}}
\label{appen:proof-gaussian}
\begin{proof}
In the Gaussian-fitting example, we have 
\begin{align}
\hat{\mu}_{\textrm{MLE}}  \sim \mathcal{N} \left(\mu^*, \frac{\sigma^2}{m}\right),
\end{align}
and 
\begin{align}
    \hat{\mu}_{\textrm{REG}} \sim \mathcal{N} \left(\frac{1}{1+\lambda} \mu^* + \frac{\lambda}{ 1+\lambda} \hat{\mu}_{\textrm{PRE}},  \frac{\sigma^2}{m(1+\lambda)^2} \right).
\end{align}

According to the bias-variance decomposition of the MSE, we have
\begin{align}
    \textrm{MSE}[\hat{\mu}_{\textrm{MLE}}] = \frac{\sigma^2}{m},
\end{align}
and
\begin{align}
    \textrm{MSE}[\hat{\mu}_{\textrm{REG}}] =  \frac{\lambda^2}{(1+\lambda)^2} (\hat{\mu}_{\textrm{PRE}}-\mu^*)^2 +  \frac{\sigma^2}{m(1+\lambda)^2}.
\end{align}
Let $\beta=\frac{\lambda}{1+\lambda} \in (0, 1)$ be the normalized weight of the regularization term. Then, we can rewrite  $\textrm{MSE}[\hat{\mu}_{\textrm{REG}}]$ as
\begin{align}\label{eq:quadratic_mse}
    \textrm{MSE}[\hat{\mu}_{\textrm{REG}}] =  \beta^2 (\hat{\mu}_{\textrm{PRE}}-\mu^*)^2 +  (1-\beta)^2 \frac{\sigma^2}{m}.
\end{align}

To satisfy $\textrm{MSE}[\hat{\mu}_{\textrm{REG}}]<\textrm{MSE}[\hat{\mu}_{\textrm{MLE}}]$, we have 
\begin{align}
    \beta^2 (\hat{\mu}_{\textrm{PRE}}-\mu^*)^2 +  (1-\beta)^2 \frac{\sigma^2}{m} < \frac{\sigma^2}{m} \Rightarrow \beta < \frac{2\sigma^2}{\sigma^2 + m(\hat{\mu}_{\textrm{PRE}}-\mu^*)^2 }.
\end{align}

The pre-trained model $\hat{\mu}_{\textrm{PRE}}$ is another meaningful baseline, which has a bias of $\hat{\mu}_{\textrm{PRE}} - \mu^*$ and a zero variance. Its MSE is given by
\begin{align}
\textrm{MSE}[\hat{\mu}_{\textrm{PRE}}] = (\hat{\mu}_{\textrm{PRE}}-\mu^*)^2.
\end{align}

To satisfy $\textrm{MSE}[\hat{\mu}_{\textrm{REG}}]<\textrm{MSE}[\hat{\mu}_{\textrm{PRE}}]$, we have 
\begin{align}
    \beta^2 (\hat{\mu}_{\textrm{PRE}}-\mu^*)^2 +  (1-\beta)^2 \frac{\sigma^2}{m} < (\hat{\mu}_{\textrm{PRE}}-\mu^*)^2 \Rightarrow  \beta > \frac{\sigma^2  - m(\hat{\mu}_{\textrm{PRE}}-\mu^*)^2}{\sigma^2 + m(\hat{\mu}_{\textrm{PRE}}-\mu^*)^2},
\end{align}
which completes the proof.
\end{proof}

We now computes the optimal $\textrm{MSE}[\hat{\mu}_{\textrm{REG}}]$. It is easy to see that the quadratic programming problem in Eq.~(\ref{eq:quadratic_mse}) achieves its minimum at 
$ \beta^* = \frac{\sigma^2}{\sigma^2 + m(\hat{\mu}_{\textrm{PRE}}-\mu^*)^2}$ with the corresponding $\lambda^* = \frac{\sigma^2}{m(\hat{\mu}_{\textrm{PRE}}-\mu^*)^2}$. 
The minimum value is $\frac{\sigma^2(\hat{\mu}_{\textrm{PRE}}-\mu^*)^2}{\sigma^2 +  m(\hat{\mu}_{\textrm{PRE}}-\mu^*)^2} = \frac{\textrm{MSE}[\hat{\mu}_{\textrm{MLE}}]\textrm{MSE}[\hat{\mu}_{\textrm{PRE}}]}{\textrm{MSE}[\hat{\mu}_{\textrm{MLE}}]+\textrm{MSE}[\hat{\mu}_{\textrm{PRE}}]}$.

\subsubsection{Comparison under the Expected Risk}

We also evaluate all estimators in terms of the \emph{expected risk}, which is a commonly used measure in statistical learning theory. In a density estimation task for $p_d$, the expected risk of a hypothesis $\hat{\mu}$, which depends on the training sample $\mathcal{S}$, is $R(\hat{\mu}) := \mathbb{E}_{x \sim p_d} \left[- \log p(x;\hat{\mu})\right]$.
We can show that the expectation of the expected risk w.r.t. $\mathcal{S}$ coincides with the corresponding MSE in the Gaussian-fitting example and directly obtain the following Corollary~\ref{thm:risk} from Proposition~\ref{thm:reg_better}.
\begin{corollary}\label{thm:risk}
In the Gaussian-fitting example~\ref{def:gaussian}, if $\lambda$ satisfies the same condition as in Proposition~\ref{thm:reg_better}, then the following inequality holds: \begin{align}
     \mathbb{E}_{\mathcal{S}}[R( \hat{\mu}_{\textrm{REG}})] < \min\{\mathbb{E}_{\mathcal{S}}[R( \hat{\mu}_{\textrm{MLE}})], \mathbb{E}_{\mathcal{S}}[R( \hat{\mu}_{\textrm{PRE}})]\}.
\end{align} 
\end{corollary}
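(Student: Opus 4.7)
The plan is to reduce Corollary~\ref{thm:risk} directly to Proposition~\ref{thm:reg_better} by showing that, in the Gaussian-fitting example, the expected risk $\mathbb{E}_{\mathcal{S}}[R(\hat{\mu})]$ is an affine, strictly increasing function of $\mathrm{MSE}[\hat{\mu}]$, with the same affine transformation applied to every estimator. Once this is established, the strict inequalities between MSEs from Proposition~\ref{thm:reg_better} transfer verbatim to strict inequalities between expected risks.

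Concretely, first I would unpack $R(\hat{\mu}) = \mathbb{E}_{x \sim \mathcal{N}(\mu^*,\sigma^2)}[-\log \mathcal{N}(x;\hat{\mu},\sigma^2)]$. Using the standard Gaussian identity $\mathbb{E}_{x \sim \mathcal{N}(\mu^*,\sigma^2)}[(x-\hat{\mu})^2] = \sigma^2 + (\mu^* - \hat{\mu})^2$, one obtains
\begin{align}
R(\hat{\mu}) = \tfrac{1}{2}\log(2\pi\sigma^2) + \tfrac{1}{2} + \frac{(\mu^* - \hat{\mu})^2}{2\sigma^2}.
\end{align}
Taking the expectation over the training sample $\mathcal{S}$ and recalling $\mathrm{MSE}[\hat{\mu}] = \mathbb{E}_{\mathcal{S}}[(\mu^* - \hat{\mu})^2]$, this gives
\begin{align}
\mathbb{E}_{\mathcal{S}}[R(\hat{\mu})] = \tfrac{1}{2}\log(2\pi\sigma^2) + \tfrac{1}{2} + \frac{\mathrm{MSE}[\hat{\mu}]}{2\sigma^2}.
\end{align}

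The crucial observation is that the additive and multiplicative constants are the same for $\hat{\mu}_{\textrm{REG}}$, $\hat{\mu}_{\textrm{MLE}}$ and $\hat{\mu}_{\textrm{PRE}}$, because all three estimators are evaluated against the same data distribution $\mathcal{N}(\mu^*,\sigma^2)$ with the same known $\sigma^2$. Hence the map $\hat{\mu} \mapsto \mathbb{E}_{\mathcal{S}}[R(\hat{\mu})]$ is a strictly increasing affine function of $\mathrm{MSE}[\hat{\mu}]$, and the ordering of estimators by expected risk agrees exactly with the ordering by MSE. Applying this monotone correspondence to the strict inequality $\mathrm{MSE}[\hat{\mu}_{\textrm{REG}}] < \min\{\mathrm{MSE}[\hat{\mu}_{\textrm{MLE}}], \mathrm{MSE}[\hat{\mu}_{\textrm{PRE}}]\}$ guaranteed by Proposition~\ref{thm:reg_better} under the stated condition on $\lambda$ yields the desired conclusion.

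There is essentially no obstacle: the only mildly delicate point is ensuring the affine constants are truly estimator-independent, which holds because the variance parameter of the model is fixed to the known $\sigma^2$ (so the $\log(2\pi\sigma^2)$ term does not depend on $\hat{\mu}$), and because the noise integral producing the $\tfrac{1}{2}$ term only uses $p_d$. Everything else is a one-line application of Proposition~\ref{thm:reg_better}.
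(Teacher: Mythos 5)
Your proposal is correct and follows essentially the same route as the paper: both compute $\mathbb{E}_{\mathcal{S}}[R(\hat{\mu})] = \frac{1}{2\sigma^2}\mathrm{MSE}[\hat{\mu}] + \frac{1}{2}\log(2\pi\sigma^2) + \frac{1}{2}$ via the Gaussian cross-entropy identity and then invoke Proposition~\ref{thm:reg_better} through the strictly increasing affine correspondence. Your explicit remark that the affine constants are estimator-independent is a slightly more careful articulation of the step the paper leaves implicit, but the argument is the same.
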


\begin{proof}
In the Gaussian-fitting example, the expected risk for a hypothesis $\hat{\mu}$ is
\begin{align}
    R(\hat{\mu}) & = \mathbb{E}_{\mathcal{S} \sim p_d^m}  \mathbb{E}_{x \sim \mathcal{N}(\mu^*, \sigma^2)} \left[- \log  \mathcal{N}(\hat{\mu}, \sigma^2) \right] \\ 
    & = \mathbb{E}_{\mathcal{S} \sim p_d^m} \left [\frac{(\hat{\mu} - \mu^*)^2}{2\sigma^2} + \frac{1}{2} \log (2\pi \sigma^2) + \frac{1}{2} \right ]\\
    & = \frac{1}{2\sigma^2}\textrm{MSE}[\hat{\mu}] + \frac{1}{2} \log (2\pi \sigma^2) + \frac{1}{2},
\end{align}
which completes the proof together with Proposition~\ref{thm:reg_better}.
\end{proof}

\subsection{Optimization Analyses}

\subsubsection{Proof of Theorem~\ref{thm:kl} and Theorem~\ref{thm:js}}
\label{app:nonparametric}

Our results rely on the following regularity conditions.  
\begin{assumption} $\quad$
\begin{enumerate} 
    \item $\mathcal{X}$ is a nonempty compact set.
    \item $\mathcal{E}_f: \mathcal{X} \rightarrow \mathbb{R}$ is continuous and bounded.
    \item $\int_{\mathcal{X}} e^{-\mathcal{E}_f(x)} dx < \infty$.
\end{enumerate}
\label{ass}
\end{assumption}

The regularity conditions in Assumption~\ref{ass} are mild in the sense:
\begin{enumerate}
    \item The sample space $\mathcal{X}$ is often a subset of a $n$-dimensional Euclidean space.  Then, $\mathcal{X}$ is compact if and only if $\mathcal{X}$ is bounded and closed, which holds for extensive datasets in various types including images, videos, audios, and texts.
    \item In this paper, $\mathcal{E}_f$ is defined by compositing a neural network and a continuous real-valued function. Then $\mathcal{E}_f$ is continuous and bounded on $\mathcal{X}$  if the neural network has bounded weights and uses continuous activation functions including ReLU~\citep{nair2010rectified}, Tanh and Sigmoid.
    \item  $\int_{\mathcal{X}} e^{-\mathcal{E}_f(x)} dx < \infty$ holds following the choice of the sample space and energy function,
\end{enumerate}

We first prove Theorem~\ref{thm:kl} as follows.
\begin{proof}
Ignoring some constant irrelevant to the optimization, we rewrite the optimization problem of our approach with the KL divergence as follows:
\begin{equation}
\begin{split}
    \min_{p_g} \int_{\mathcal{X}}  (-\ln(p_g(x)) p_d(x) +  & \lambda \mathcal{E}_f(x)   p_g(x)) dx, \\ \textrm{subject to}  \int_{\mathcal{X}} p_g(x)dx & = 1, \\ 
\forall x \in \mathcal{X}, p_g(x) &\ge 0. 
\label{eq:app_kl_opt}
\end{split}
\end{equation}
For clarity, we denote the functional in problem Eq.~(\ref{eq:app_kl_opt}) to be optimized as 
\begin{align*}
\mathcal{J}(p_g) := \int_{\mathcal{X}}  (-\ln(p_g(x)) p_d(x) +  & \lambda \mathcal{E}_f(x)  p_g(x)) dx.
\end{align*}

According to Assumption 3.1 that  $\mathcal{X}$ is a nonempty compact set, the feasible area characterized by the constraints (i.e., $\mathcal{P}_{\mathcal{X}}$) is compact in the topology of weak convergence by the Prokhorov's Theorem (See Corollary 6.8 in~\citep{van2003probability}).
By Theorem 3.6 in~\citep{ajjanagadde2017lecture}, KL divergence is lower semi-continuous in the topology of weak convergence. 
According to Assumption 3.2 that  $\mathcal{E}_f$ is continuous and bounded on $\mathcal{X}$, the regularization term is continuous in the topology of weak convergence. Therefore, by the extreme value theorem, the global minimum of $\mathcal{J}(p_g)$ exists in the feasible area.

Note that the optimization problem Eq.~(\ref{eq:app_kl_opt}) is convex. 
To obtain a necessary condition for the global minima, we get the Lagrangian with the equality constraint:
\begin{align}
\mathcal{L}(p_g) := \int_{\mathcal{X}}  (-\ln(p_g(x)) p_d(x) +  \lambda \mathcal{E}_f(x)   p_g(x)) dx + \alpha (\int_{\mathcal{X}}  p_g(x) dx - 1),
\end{align}
where $\alpha \in \mathbb{R}$. Note that for simplicity, we do not include the inequality constraint, which will be verified shortly. It is easy to check the constraint qualifications for problem Eq.~(\ref{eq:app_kl_opt}). By the calculus of variations, a necessary condition for a global minimum of problem Eq.~(\ref{eq:app_kl_opt}) is 
\begin{align}
    \frac{\delta \mathcal{L}}{\delta p_g} = -\frac{p_d(x)}{p_g(x)} + \lambda \mathcal{E}_f(x) + \alpha =0,
\end{align}
which implies that 
\begin{align} 
    p_g^*(x) = \frac{p_d(x)}{\alpha + \lambda \mathcal{E}_f(x)}.
\end{align}

We define $\mathcal{A}=\{\alpha \in \mathbb{R}| \frac{p_d(x)}{\alpha + \lambda \mathcal{E}_f(x)} \ge 0, \forall x \in \mathcal{X}\}$. 
According to Assumption 3.1 that  $\mathcal{E}_f$ is bounded on $\mathcal{X}$, we have $\mathcal{A} \neq \varnothing$.
We define a function $\phi: \mathcal{A} \rightarrow \mathbb{R}$ as 
\begin{align}
\phi(\alpha) = \int_{\mathcal{X}} \frac{p_{d}(x)}{\alpha + \lambda  \mathcal{E}_f(x)} dx.
\end{align}
It is easy to see $\phi(\alpha)$ is monotonically decreasing and there is at most one $\alpha^* \in \mathcal{A}$ such that $\phi(\alpha^*)=1$, which finishes the proof together with the existence of the global minimum.
\end{proof}

We now prove Theorem~\ref{thm:js} as follows. 
\begin{proof}
The proof here shares the same spirit of Theorem~\ref{thm:kl}.
Ignoring some constant irrelevant to the optimization, we rewrite the optimization problem of our approach with the JS divergence as follows:
\begin{equation}
\begin{split}
    \min_{p_g} \int_{\mathcal{X}}  (- \ln(p_g(x)+p_d(x)) p_d(x) - \ln(p_g(x)+p_d(x)) p_g(x)  + \ln(p_g(x))p_g(x) +  & \lambda \mathcal{E}_f(x)   p_g(x)) dx, \\ \textrm{subject to}  \int_{\mathcal{X}} p_g(x)dx & = 1, \\ 
\forall x \in \mathcal{X}, p_g(x) &\ge 0.
\label{eq:app_JS_opt}
\end{split}
\end{equation}
For clarity, we denote the functional in problem Eq.~(\ref{eq:app_kl_opt}) to be optimized as 
\begin{align}
\mathcal{J}(p_g) := \int_\mathcal{X}(- \ln(p_g(x)+p_d(x)) p_d(x) - \ln(p_g(x)+p_d(x)) p_g(x)  + \ln(p_g(x))p_g(x) +  \\  \lambda \mathcal{E}_f(x)   p_g(x)) dx.
\end{align}

Similarly to the proof of Theorem~\ref{thm:kl}, the global minimum of $\mathcal{J}(p_g)$ exists in the feasible area due to Assumption~\ref{ass} and the fact that the JS divergence is lower semi-continuous in the topology of weak convergence.

Notably, the optimization problem Eq.~(\ref{eq:app_JS_opt}) is convex due to the convexity of the JS divergence~\citep{billingsley2013convergence}. 
To obtain a necessary condition for the global minima, we get the Lagrangian with the equality constraint:
\begin{align}
\mathcal{L}(p_g) := \int_\mathcal{X}(- \ln(p_g(x)+p_d(x)) p_d(x) - \ln(p_g(x)+p_d(x)) p_g(x)  + \ln(p_g(x))p_g(x) +  \nonumber \\ \lambda \mathcal{E}_f(x)   p_g(x)) dx + \alpha (\int_{\mathcal{X}}  p_g(x) dx - 1),
\end{align}
where $\alpha \in \mathbb{R}$. Similarly to the proof of Theorem~\ref{thm:kl},  a necessary condition for a global minimum of problem Eq.~(\ref{eq:app_kl_opt}) is 
\begin{align}
    \frac{\delta \mathcal{L}}{\delta p_g} = - \ln(p_g(x) + p_d(x)) + \ln(p_g(x)) + \lambda \mathcal{E}_f(x) + \alpha =0,
\end{align}
which implies that 
\begin{align} 
    p_g^*(x) = \frac{p_d(x)}{e^{\alpha + \lambda \mathcal{E}_f(x)} - 1}.
\end{align}

Similarly to the proof in Theorem~\ref{thm:kl}, there is at most one $\alpha^* \in \mathbb{R}$ such that $\int_\mathcal{X} \frac{p_d(x)}{e^{\alpha + \lambda \mathcal{E}_f(x)} - 1}=1$, which finishes the proof together with the existence of the global minimum.
\end{proof}

\subsubsection{Convergence with Neural Networks Trained by (Stochastic) Gradient Descent}
\label{appen:parametric}

We establish the convergence of Reg-DGM with over-parameterized neural networks trained by (stochastic) gradient descent upon the general framework~\citep{allen2019convergence}.

\begin{theorem}(General convergence guarantee~\citep{allen2019convergence})\label{thm:convergence}
For an arbitrary Lipschitz-smooth loss function $\mathcal{L}$, with probability at least $1-e^{-\Omega(\log m)}$, a ReLU convolutional neural network with width $m$ and depth $l$ trained by gradient descent with an appropriate learning rate satisfy the following.\footnote{See additional assumptions on regularity of the data in~\citep{allen2019convergence}.}
\begin{itemize}
    \item If $\mathcal{L}$ is non-convex, and $\sigma$-gradient dominant, then GD finds $\epsilon$-error minimizer in $\tilde{O}( poly(n, l, \log \frac{1}{\epsilon}, \frac{1}{\sigma}))$ iterations as long as $m\ge \tilde{\Omega}(poly(n, l, \frac{1}{\sigma}))$.
    \item If $\mathcal{L}$ is non-convex, then SGD finds a point with $||\nabla f||\le \epsilon$ in $\tilde{O}( poly(m, l, \log \frac{1}{\epsilon}))$ iterations as long as $m\ge \tilde{\Omega}(poly(n, l, \frac{1}{\epsilon}))$.
\end{itemize}
\end{theorem}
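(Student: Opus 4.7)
The plan is to follow the standard overparameterization-based analysis for deep ReLU networks, adapted from the convolutional architecture considered here. The core intuition is that when the width $m$ is sufficiently polynomially large, the parameters remain so close to their random initialization during optimization that the network behaves approximately linearly (in the spirit of the neural tangent kernel), which lets us import convex-style descent inequalities into a non-convex landscape.

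First, I would analyze the network at random (Gaussian/Xavier-style) initialization and prove three structural lemmas. \emph{(i) Forward stability:} for every layer, the activations and pre-activations are, with probability $1-e^{-\Omega(\log m)}$, of a controlled magnitude depending polynomially on $n$ and $l$. \emph{(ii) Activation-pattern stability:} if the weights move by at most a radius $\tau = \mathrm{poly}(n,l)/\mathrm{poly}(m)$ from initialization, then only an $O(\tau)$ fraction of ReLU patterns flip, so the Jacobian of the network w.r.t. parameters is close (in operator norm) to that at initialization. \emph{(iii) Gradient lower and upper bounds:} the Jacobian at initialization has minimum singular value bounded below by a polynomial factor, giving a uniform lower bound on $\|\nabla \mathcal{L}(\vtheta)\|$ whenever $\mathcal{L}$ has not yet been driven to $\epsilon$-error, together with a Lipschitz-type upper bound that turns into a \emph{semi-smoothness} inequality of the form
\begin{equation*}
\mathcal{L}(\vtheta') \le \mathcal{L}(\vtheta) + \langle \nabla \mathcal{L}(\vtheta), \vtheta'-\vtheta\rangle + C_1 \|\vtheta'-\vtheta\|\cdot\sqrt{\mathcal{L}(\vtheta)} + C_2 \|\vtheta'-\vtheta\|^2,
\end{equation*}
valid for all $\vtheta,\vtheta'$ within the radius $\tau$ of initialization.

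Second, I would plug these ingredients into the two optimizer analyses. For the $\sigma$-gradient-dominant GD case, the gradient-dominance assumption $\|\nabla\mathcal{L}(\vtheta)\|^2 \ge \sigma(\mathcal{L}(\vtheta)-\mathcal{L}^*)$ combined with the semi-smooth descent inequality gives, for an appropriately chosen step size $\eta = \mathrm{poly}(1/n,1/l,\sigma)$, a contraction $\mathcal{L}(\vtheta_{t+1}) - \mathcal{L}^* \le (1-\Omega(\eta\sigma))(\mathcal{L}(\vtheta_t)-\mathcal{L}^*)$; iterating yields the $\tilde O(\mathrm{poly}(n,l,\log(1/\epsilon),1/\sigma))$ iteration bound. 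A separate induction confirms that the total movement $\sum_t \eta \|\nabla\mathcal{L}(\vtheta_t)\|$ stays inside $\tau$, which justifies applying the structural lemmas at every iterate and is where the width requirement $m \ge \tilde\Omega(\mathrm{poly}(n,l,1/\sigma))$ enters. For the SGD case without gradient dominance, I would use the semi-smoothness plus an unbiasedness/bounded-variance assumption on the stochastic gradients to obtain the standard non-convex telescoping bound $\min_{t\le T}\mathbb{E}\|\nabla\mathcal{L}(\vtheta_t)\|^2 \le O(1/(\eta T)) + O(\eta\cdot\text{variance})$, and control the movement via Azuma-type martingale concentration so that with high probability the iterates again never leave the $\tau$-ball; tuning $\eta$ and $T$ gives the claimed $\tilde O(\mathrm{poly}(m,l,\log(1/\epsilon)))$ rate under $m\ge \tilde\Omega(\mathrm{poly}(n,l,1/\epsilon))$.

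The main obstacle, as in the original argument, will be the activation-pattern stability lemma for deep convolutional ReLU networks: one has to carefully propagate perturbation bounds through $l$ layers while keeping only a polynomial (rather than exponential) blow-up in depth, and to handle the combinatorial dependence between sign patterns and weights via a covering/epsilon-net argument over the random initialization. A secondary technical difficulty is the semi-smoothness inequality itself, which is nonstandard because the global Lipschitz constant of the loss Hessian is not bounded; the usual fix is to prove a local smoothness that grows with $\sqrt{\mathcal{L}}$ and then absorb it into the descent bound via Young's inequality. Once those two pieces are in place, both bullets reduce to bookkeeping.
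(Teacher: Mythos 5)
This theorem is not proved in the paper at all: it is imported verbatim as a black-box result from \citet{allen2019convergence}, and the paper's only proof obligation in Appendix~\ref{appen:parametric} is to verify that the Reg-DGM objective meets the theorem's hypothesis --- namely, that the data-dependent regularizer $\mathcal{L}_{\textrm{REG}}(\theta;x)=\frac{\lambda}{d}\mathbb{E}_{y\sim p_\theta}\|f(y)-f(x)\|_2^2$ is Lipschitz in $\theta$ under Assumption~\ref{ass_reg}, so that the overall loss remains Lipschitz-smooth and the cited guarantee applies. Your proposal instead attempts to reprove the imported theorem itself, and as a reconstruction of the argument in \citet{allen2019convergence} it is a faithful roadmap: forward stability at initialization, activation-pattern stability in a $\mathrm{poly}(n,l)/\mathrm{poly}(m)$ ball, the gradient upper/lower bounds, the semi-smoothness inequality with the $\|\theta'-\theta\|\sqrt{\mathcal{L}(\theta)}$ cross term, and the induction keeping the iterates inside the trust region are exactly the ingredients of that paper. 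However, what you have written is a plan rather than a proof --- the two lemmas you yourself identify as the main obstacles (activation-pattern stability through $l$ convolutional layers without exponential depth blow-up, and the gradient lower bound, which in the reference hinges on a data-separability parameter your sketch does not mention) are only named, not established, and each occupies many pages in the original. If the goal is to match the paper, the more relevant missing piece is the short Lipschitz computation for $\mathcal{L}_{\textrm{REG}}$ (bounding $|\mathcal{L}_{\textrm{REG}}(\theta;x)-\mathcal{L}_{\textrm{REG}}(\theta';x)|$ by $\frac{\lambda BK}{d}\|\theta-\theta'\|$ using the total-variation and boundedness conditions), which is what actually lets the quoted theorem be invoked for Reg-DGM and which your proposal omits entirely.
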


We assume the following standard smoothness conditions, which can be verified in practice with bounded data and weights.
\begin{assumption} 
(Smoothness conditions)
\begin{enumerate}
    \item $\exists b > 0$ such that $\forall \theta \in \Theta, \forall x \in \mathcal{X},  p(\theta; x) \ge b.$
    \item $\exists L > 0$ such that $\forall x \in \mathcal{X}, \forall \theta \in \Theta, \forall \theta' \in \Theta, |p(\theta; x) - p(\theta'; x)|  \le L ||\theta - \theta'||$.
    \item $\exists K > 0$ such that $\forall \theta \in \Theta, \forall \theta' \in \Theta, \int |p(\theta; x) - p(\theta'; x)| dx \le K ||\theta - \theta'||$.
    \item $\sup_{x\in \mathcal{X}} \sup_{y\in \mathcal{X}} ||f(x) -f(y) ||^2 \le B$.
    % \item $\exists S > 0$ such that $\forall x \in \mathcal{X}, \forall \theta \in \Theta, \forall \theta' \in \Theta, ||\nabla p(\theta; x) - \nabla p(\theta'; x)|| \le S ||\theta - \theta'||$.
    % \item $\exists S' > 0$ such that $\forall x \in \mathcal{X}, \forall \theta \in \Theta, \forall \theta' \in \Theta, ||\nabla \log p(\theta; x) - \nabla \log p(\theta'; x)|| \le S' ||\theta - \theta'||$.
\end{enumerate}
\label{ass_reg}
\end{assumption}

We consider the general density estimation setting where $\mathcal{L}_{\textrm{MLE}}(\theta; x_i):= - \log p_\theta(x_i)$. Note that the first two conditions in Assumption~\ref{ass_reg} directly imply that $\mathcal{L}_{\textrm{MLE}}(\cdot; x)$ is $\frac{L}{b}$-Lipschitz. Formally, given a set of samples $\mathcal{S} = \{x_i\}_{i=1}^n$, Reg-DGM optimizes
\begin{align*}
        \hat{R}[\theta] := \frac{1}{n} \sum_{i=1}^n \mathcal{L}_{\textrm{MLE}}(\theta; x_i) + \lambda \mathbb{E}_{x \sim p_g}[\mathcal{E}_f(x)].
\end{align*}
If $\mathcal{E}_f$ is independent from the training data $x$, then the overall loss function is also $\frac{L}{b}$-Lipschitz and Theorem~\ref{thm:convergence} directly applies. Otherwise, we can show that the data-dependent regularization used in our experiments is also Lipschitz-smooth. By the linearity of expectation, we have
\begin{align}
      \hat{\theta}_{\textrm{REG}}  := & \arg\min_{\theta \in \Theta}   \frac{1}{n} \sum_{i=1}^n \left[  \mathcal{L}_{\textrm{MLE}}(\theta; x_i)  \right] + \frac{\lambda}{d} \mathbb{E}_{y \sim p_\theta} \left[ \frac{1}{n} \sum_{i=1}^n  ||f(y) - f(x_i)||_2^2\right] \\
      = & \arg\min_{\theta \in \Theta}  \frac{1}{n} \sum_{i=1}^n \left[    \mathcal{L}_{\textrm{MLE}}(\theta; x_i)   +  \frac{\lambda}{d}  \mathbb{E}_{y \sim p_\theta} ||f(y) - f(x_i)||_2^2 \right].
\end{align}
We define $\mathcal{L}_{\textrm{REG}}(\theta; x_i):= \frac{\lambda}{d}  \mathbb{E}_{y \sim p_\theta} ||f(y) - f(x_i)||_2^2$, which is Lipschitz-smooth:
\begin{align*}
    |\mathcal{L}_{\textrm{REG}}(\theta; x) - \mathcal{L}_{\textrm{REG}}(\theta'; x) | = & \frac{\lambda}{d}\left |   \mathbb{E}_{y \sim p_\theta} ||f(y) - f(x)||_2^2 -    \mathbb{E}_{y \sim p_{\theta'}} ||f(y) - f(x)||_2^2 \right | \\
    \le & \frac{\lambda}{d} \int |p_\theta(y)- p_{\theta'}(y)| ||f(x) -f(y) ||^2 dy\\
    \le & \frac{\lambda}{d}  (\sup_{y \in \mathcal{X}} ||f(x) -f(y) ||^2) \int |p_\theta(y)- p_{\theta'}(y)| dy\\
    \le & \frac{\lambda B K}{d} ||\theta - \theta'||.
\end{align*}
Therefore, Theorem~\ref{thm:convergence} applies to Reg-DGM with the data-dependent energy defined in Sec.~\ref{sec:imp} of the main text.

\section{Experimental Details}
\label{appen:experimental-details}

Our implementation is built upon some publicly available code. Below, we include the links and please refer to the licenses therein.

\subsection{Toy data}
\label{appen:toy-data}

In the toy example for optimization analyses, the data follows a uniform distribution over $[0,1]$. The energy function is defined as  $\mathcal{E}_f(x) = 0.7x+0.9$. The optimal $\beta^*$ is estimated by numerical integration.

In the experiments for the Gaussian-fitting example,  we set $\sigma^2=1$, $m=150$, and $ \hat{\mu}_{\textrm{PRE}}  - \mu^*=0.1$ by default.

\subsection{GANs with Limited Data}
\label{appen:gan-experiment-detail}

\textbf{Datasets.} In our experiments, we use FFHQ \citep{stylegan1-karras2019style}, which consists of $70,000$ human face images of resolution $256\times256$, LSUN CAT \citep{lsun-yu2015lsun}, which consists of $200,000$ cat images of resolution $256\times256$, and CIFAR-10 \citep{cifar10-krizhevsky2009learning}, which consists of $50,000$ natural images of resolution $32\times32$. Specifically, we randomly split training subsets of size $1$k and $5$k from full FFHQ and LSUN CAT in the same way as ADA~\citep{ada-karras2020training}. For reproducibility, we directly use the random seed provided by the official implementation of ADA~\footnote{\url{https://github.com/NVlabs/stylegan2-ada-pytorch} \label{fn:f1}}. In addition, we also experiment on the smaller dataset $100$-shot Obama~\citep{da-zhao2020differentiable} with only $100$ face images of $256\times256$ resolution. In all experiments, we do not use x-flips to amplify training data except for combining with APA~\citep{apa-jiang2021deceive}.

\textbf{Metrics.} To quantitatively evaluate the experimental results, we choose the Fréchet inception distance (FID)~\citep{FID-heusel2017gans} and the kernel inception
distance (KID)~\citep{binkowski2018demystifying} as our metrics. We compute the FID and KID between $50,000$ generated images and all real images instead of training subsets~\citep{FID-heusel2017gans}. Following ADA~\citep{ada-karras2020training}, we report the medium FID and corresponding KID on FFHQ and LSUN CAT and the mean FID with standard deviation on CIFAR-$10$ out of $3$ runs. We record the best FID during training in each run as in ADA~\citep{ada-karras2020training}. 

\textbf{Base DGM.} In particular, the lighter-weight StyleGAN2 is the backbone for FFHQ and LSUN CAT, and the tuning StyleGAN2 is the backbone for CIFAR-10 following ADA~\citep{ada-karras2020training}. Compared to the official StyleGAN2, the lighter-weight StyleGAN2 has the same performance and less computing cost on the FFHQ and LSUN CAT and the tuning StyleGAN2 is more suitable for CIFAR-10~\citep{ada-karras2020training}. 
Adaptive discriminator augmentation (ADA)~\citep{ada-karras2020training} is a representative way of data augmentation for GANs under limited data, and its combination with adaptive pseudo augmentation (APA)~\citep{apa-jiang2021deceive} is the state-of-the-art method for few-shot image generation~\citep{li2022comprehensive}.
Our implementation is based on the official code of ADA~\footnote{\url{https://github.com/NVlabs/stylegan2-ada-pytorch}}  and APA \footnote{\url{https://github.com/EndlessSora/DeceiveD}}.
 
\textbf{Pre-trained model.}  We choose the ResNet-18 \footnote{\url{https://pytorch.org/vision/stable/models.html}} trained on the ImageNet dataset as the pre-trained model by default due to its excellent performance on ImageNet~\citep{deng2009imagenet} and little computational overhead. 
We normalize both the real and fake images based on the mean and standard deviation of training data and then feed them into the classifier. We extract the features of the last fully connected layer in the pre-trained model, which is frozen during training of generative models. On CIFAR-10, we interpolate \footnote{\url{https://pytorch.org/docs/stable/generated/torch.nn.functional.interpolate.html}} both the fake and real images to a resolution of $256\times256$ after normalization. 

Other alternative pre-trained models are the image encoder ResNet-50 of the CLIP model~\citep{clip-radford2021learning}, which is pre-trained on large-scale noisy text-image pairs instead of ImageNet, and the face recognizer Inception-ResNet-v1~\citep{szegedy2017inception} of FaceNet~\citep{facenet-schroff2015facenet}, which is pre-trained on the large-scale face dataset VGGFace2~\citep{vggface2-cao2018vggface2}. 
% It is worth noting that we modify the last attention pooling layer in the image encoder of CLIP as the global average pooling layer. 
Notably, we replace the last attention pooling layer in the image encoder of CLIP with the global average pooling layer, and we directly pass the image to the face recognizer without operating the face detector in FaceNet to extract cropped faces. 
For pre-trained CLIP\footnote{\url{https://github.com/openai/CLIP}} and FaceNet\footnote{\url{https://github.com/timesler/facenet-pytorch}}, we also employ their last layers as feature extractors.

\textbf{Hyperparameters.} Some parameters are shown in Tab.~\ref{tabel-hyperparameter-gan}. The weight parameter $\lambda$ controls the strength of our regularization term. We choose the weighting hyperparameter $\lambda$ by performing grid search over $[50,20,10,5,4,2,1,0.5,0.1,0.05,0.01,0.005,0.001,0.0005]$ for FFHQ and LSUN CAT, and $[1,0.1,0.05,0.01,0.005,0.001,0.0005,0.0001,0.00005,0.00001,0.000005]$ for CIFAR-10 according to FID following prior work~\citep{ada-karras2020training}. Other parameters remain the same settings as ADA~\citep{ada-karras2020training} and APA~\citep{apa-jiang2021deceive}. For pre-trained CLIP and FaceNet, the adopted parameter $\lambda$ is shown in Tab.~\ref{tab:lamd-facenet-clip}. In addition, we simply set $\lambda$ as 0.1 for Reg-ADA trained on 100-shot Obama.

\textbf{Computing amount.} A single experiment can be completed on 8 NVIDIA 2080Ti GPUs. It takes $1$ day $6$ hours $19$ minutes to run our method with ADA on FFHQ or LSUN CAT and $2$ days $17$ hours $10$ minutes on CIFAR-10 at a time.

\begin{table}
\setlength\tabcolsep{1pt}
\centering
\caption{Hyperparameters in the experiments of GANs. Reg-DGM shares the same hyperparameters as the base DGM if not specified. All models converge with the corresponding training length.}\label{tabel-hyperparameter-gan}%添加标题 
\vspace{.1cm}
\resizebox{\textwidth}{!}{
\begin{tabular}{cccccc}
\toprule
Parameter	& FFHQ-$1$k &  FFHQ-$5$k & LSUN CAT-$1$k & LSUN CAT-$5$k & CIFAR-10\\
\midrule  %添加表格中横线
Base DGM      & \makecell{StyleGAN2\\ \citep{stylegan2-karras2020analyzing}} & \makecell{StyleGAN2\\\citep{stylegan2-karras2020analyzing}}  & \makecell{StyleGAN2\\\citep{stylegan2-karras2020analyzing}}   &  \makecell{StyleGAN2\\\citep{stylegan2-karras2020analyzing}} & \makecell{StyleGAN2\\\citep{stylegan2-karras2020analyzing}} \\
$\lambda$      & $4$            &  $1$   & $4$  &  $1$ &   $1\times10^{-5}$ \\
Number of GPUs      & $4$      &  $4$            &   $8$ &   $8$ &   $8$\\
Training length     & $5$M     & $5$M        & $5$M         &  $5$M  &   $25$M \\
Minibatch size      & $64$        &  $64$       &$64$&      $64$     &   $64$\\
\midrule
Base DGM      &  \makecell{ADA\\\citep{ada-karras2020training}} & \makecell{ADA\\\citep{ada-karras2020training}} & \makecell{ADA\\\citep{ada-karras2020training}} & \makecell{ADA\\\citep{ada-karras2020training}} & \makecell{ADA \\\citep{ada-karras2020training}}\\
$\lambda$      &  $0.01$             &   $0.001$   &  $0.01$  &   $0.0005$ &    $5\times10^{-6}$\\
Number of GPUs      & $8$      &  $8$            &   $8$ &   $8$ &   $8$\\
Training length     &  $16$M    &  $16$M        &  $16$M        &   $16$M &    $60$M\\
Minibatch size      & $64$        &  $64$       &$64$&      $64$     &   $64$\\

\midrule
Base DGM     &  \makecell{ADA+APA\\\citep{apa-jiang2021deceive}} & \makecell{ADA+APA\\\citep{apa-jiang2021deceive}} & \makecell{ADA+APA\\\citep{apa-jiang2021deceive}} & \makecell{ADA+APA\\\citep{apa-jiang2021deceive}} & \makecell{ADA+APA\\\citep{apa-jiang2021deceive}}\\
$\lambda$      & $0.01$& $0.01$& $0.005$ & $0.001$ &$0.1$\\
Number of GPUs      & $8$      &  $8$            &   $8$ &   $8$ &   $8$\\
Training length     &  $25$M    &  $25$M        &  $25$M        &   $25$M &    $100$M\\
Minibatch size      & $64$        &  $64$       &$64$&      $64$     &   $64$\\

% \midrule
% Base DGM     &   & \makecell{StyleGAN2\\\citep{stylegan2-karras2020analyzing}} & & \makecell{StyleGAN2\\\citep{stylegan2-karras2020analyzing}} &\\
% $\lambda$      & & $0.01$&   & $0.001$ &\\
% Number of GPUs      &     &  $8$    &        &   $8$ &  \\
% Training length     &  &  $16$M &  &  $16$M        &   \\
% Minibatch size      &    &  $64$ & & $64$  &  \\
\bottomrule
\end{tabular}
}
\end{table}

% \begin{table}[t]
% \caption{The hyperparameter $\lambda$ with pre-trained CLIP and FaceNet.}
% \label{tab:lamd-facenet-clip}
% \vspace{.2cm}
% \centering
% \begin{tabular}{cccccc}
% \toprule
% \multirow{2}{*}{Base DGM}& \multicolumn{2}{c}{FFHQ-$5$k}& \multicolumn{1}{c}{LSUN CAT-$5$k} 
% \\
% \cmidrule(r){2-3} \cmidrule(r){4-4}
% & CLIP & FaceNet & CLIP   
% \\
% \midrule
% StyleGAN2~\citep{stylegan2-karras2020analyzing} & $50$ & $4$& $50$\\
% ADA~\citep{ada-karras2020training} & $1$ & $0.05$& $2$\\
% ADA+APA~\citep{apa-jiang2021deceive}& $0.5$& $0.001$& $1$\\
% \bottomrule
% \end{tabular}
% \end{table} 

\begin{table}[t]
\caption{The hyperparameter $\lambda$ with pre-trained CLIP and FaceNet.}
\label{tab:lamd-facenet-clip}
\vspace{.1cm}
\centering
\begin{tabular}{cccccc}
\toprule
\multirow{2}{*}{Base DGM}& \multicolumn{2}{c}{CLIP}& \multicolumn{1}{c}{FaceNet} 
\\
\cmidrule(r){2-3} \cmidrule(r){4-4}
& FFHQ-$5$k & LSUN CAT-$5$k & FFHQ-$5$k   
\\
\midrule
StyleGAN2~\citep{stylegan2-karras2020analyzing} & $50$ & $50$& $4$\\
ADA~\citep{ada-karras2020training} & $1$ & $2$& $0.05$\\
ADA+APA~\citep{apa-jiang2021deceive}& $0.5$& $1$& $0.001$\\
\bottomrule
\end{tabular}
\end{table}

\section{Additional Results}
\label{appen:additional-results}

\subsection{Standard Deviation on FFHQ and LSUN CAT}
\label{appen:std-ffhq-lsun-cat}

As shown in Tab.~\ref{table:gan_res_mean_std}, we also provide the mean FID and standard deviation on FFHQ and LSUN CAT. Reg-DGM can reduce the mean FID significantly (compared to the measurement variance) and achieve a similar if not smaller standard deviation.

\begin{table}[t]
\setlength\tabcolsep{3pt}
  \caption{The mean FID $\downarrow$ and standard deviation on FFHQ and LSUN CAT which is a supplement to reported medium FID.}
  \vspace{.1cm}
  \label{Results_mean_std}
  \centering
\begin{tabular}{ccccc}
\toprule
\multirow{2}{*}{Method} & \multicolumn{2}{c}{FFHQ} & \multicolumn{2}{c}{LSUN  CAT}             \\
\cmidrule(r){2-3} \cmidrule(r){4-5}

                        & $1$k          & $5$k         & $1$k            & $5$k        \\
\midrule

StyleGAN2~\citep{stylegan2-karras2020analyzing}  
&   $102.62\pm5.67$      &      $53.37\pm1.92$        &       $189.57\pm8.13$        &      $110.83\pm6.85$                          \\

Reg-StyleGAN2 (\textbf{ours})
&   $\mathbf{77.80\pm3.65}$          &      $\mathbf{38.14\pm0.97}$    &    $\mathbf{112.15\pm8.48}$           &      $\mathbf{64.11\pm2.51}$            \\
\midrule
ADA~\citep{ada-karras2020training}   &    $22.10\pm0.50$         &  $12.72\pm0.13$          &   $41.59\pm1.71$            &     $16.77\pm0.74$                 \\
Reg-ADA (\textbf{ours})              &  $ \mathbf{20.16\pm0.22}$           &    $\mathbf{11.88\pm0.13}$ &$\mathbf{36.85\pm1.09}$   &    $\mathbf{15.85\pm0.10}$    \\
\bottomrule
\end{tabular}
\label{table:gan_res_mean_std}
\end{table}

\subsection{More Samples of GANs}
\label{appen:more-samples}

Fig.~\ref{fig:gan-ffhq-5k-trunc} and Fig.~\ref{fig:gan-cifar10-trunc} respectively show the samples randomly generated by models with best FID trained on FFHQ-5k and CIFAR-10, using slight truncation as in ADA~\citep{ada-karras2020training}. Reg-DGM produces samples of better or comparable quality to the corresponding base DGM.

\begin{figure}[t]
\begin{center}
\subfloat[StyleGAN2 (FID $51.41$)]{\includegraphics[width=0.45\columnwidth]{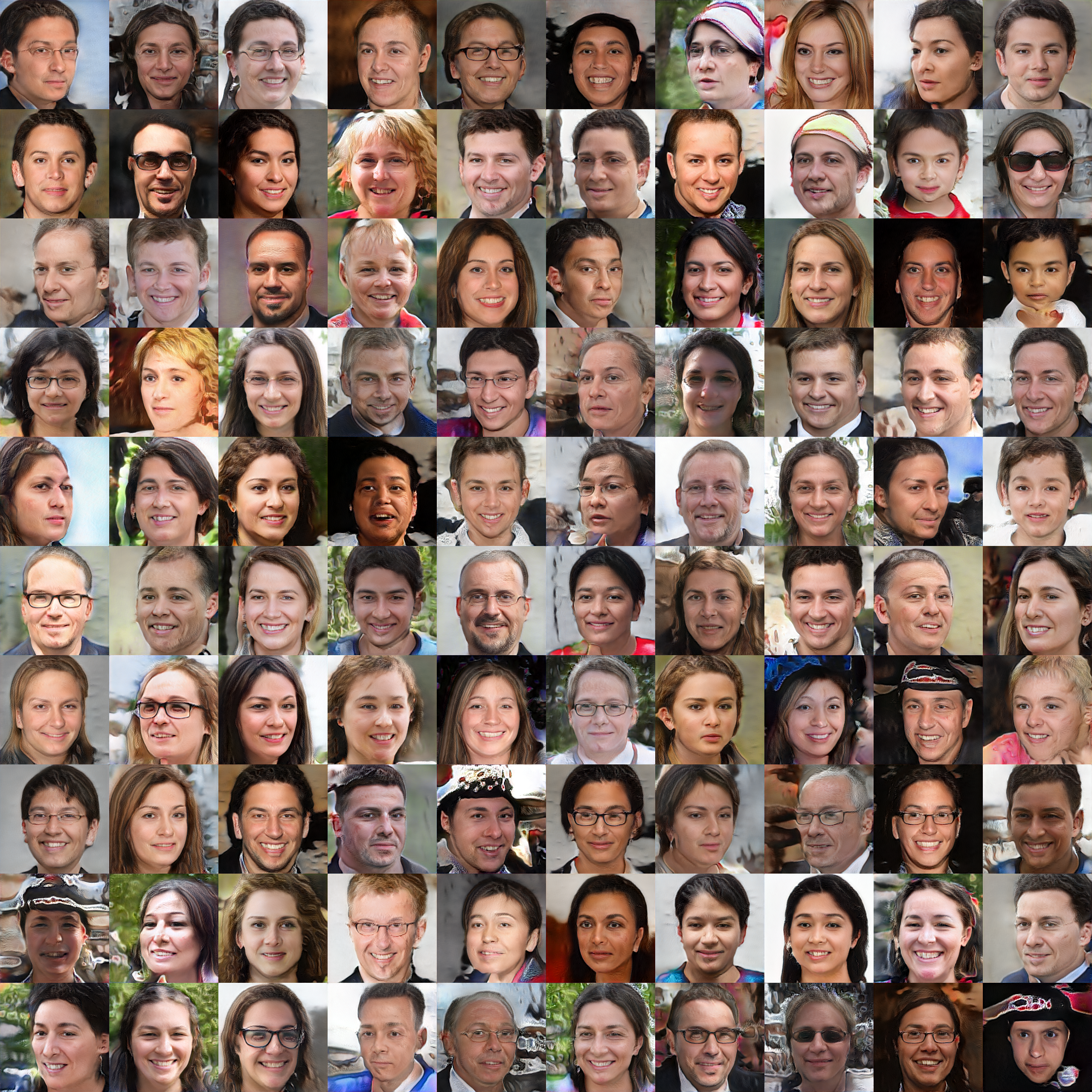}}
\hspace{.1cm}
\subfloat[Reg-StyleGAN2  (FID $37.17$)]{\includegraphics[width=0.45\columnwidth]{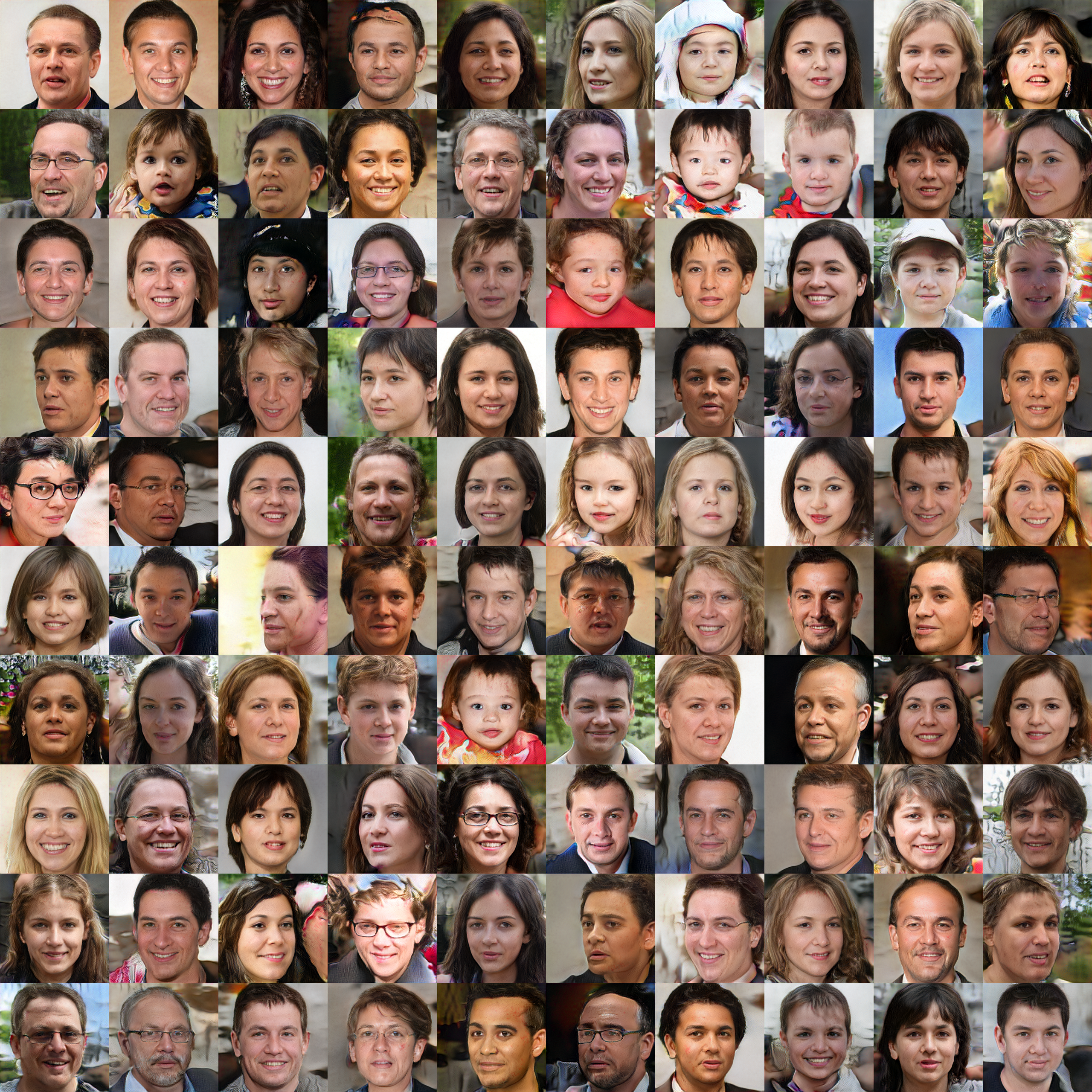}}\\
\subfloat[ADA (FID $12.62$)]{\includegraphics[width=0.45\columnwidth]{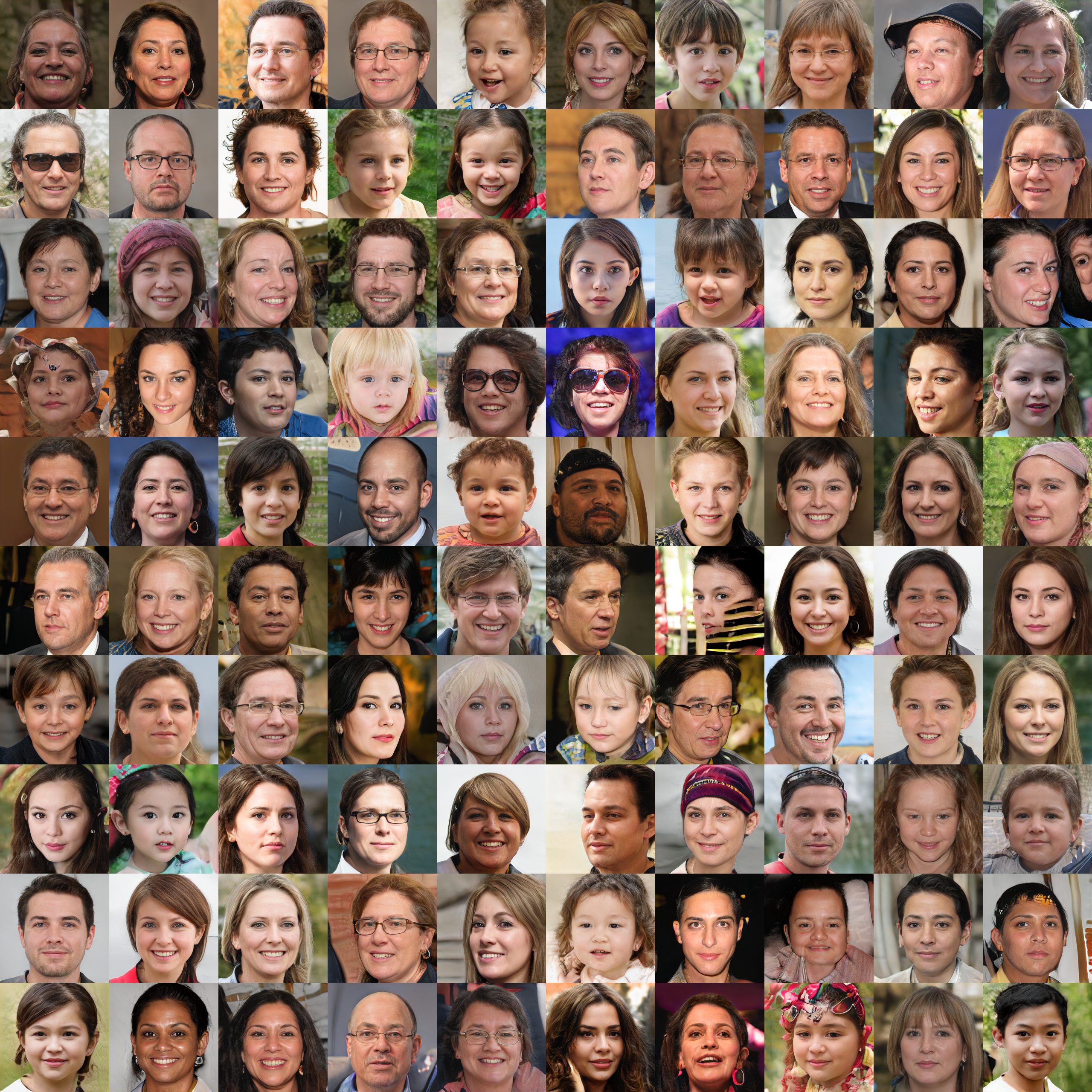}}
\hspace{.1cm}
\subfloat[Reg-ADA  (FID $11.69$)]{\includegraphics[width=0.45\columnwidth]{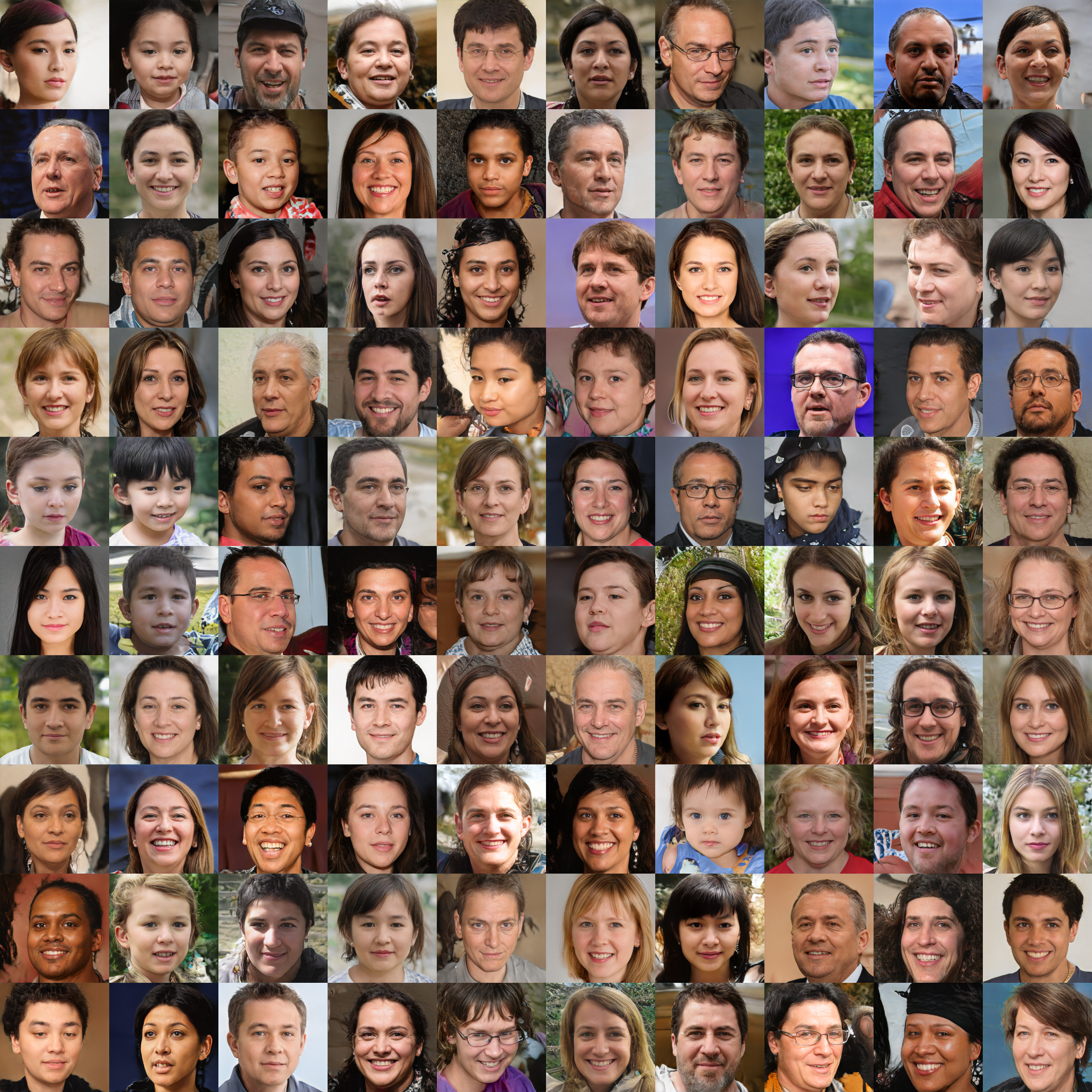}} 
\caption{Samples generated for FFHQ-$5$k, truncated ($\psi = 0.7$).}
\label{fig:gan-ffhq-5k-trunc}
\end{center}
\vspace{-.3cm}
\end{figure}

\begin{figure}[t]
\begin{center}
\subfloat[StyleGAN2 (FID $7.02$)]{\includegraphics[width=0.45\columnwidth]{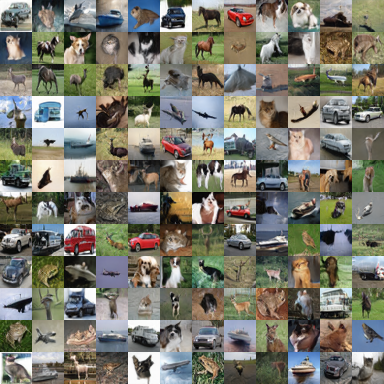}}
\hspace{.1cm}
\subfloat[Reg-StyleGAN2 (FID $6.38$)]{\includegraphics[width=0.45\columnwidth]{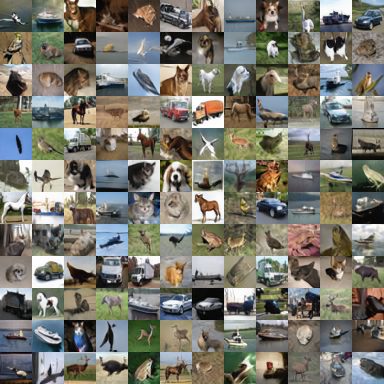}}\\
\subfloat[ADA (FID $2.96$)]{\includegraphics[width=0.45\columnwidth]{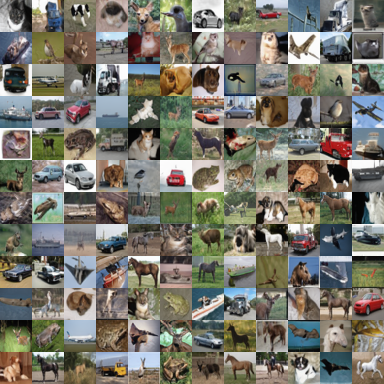}} \hspace{.1cm}
\subfloat[Reg-ADA (FID $2.88$)]{\includegraphics[width=0.45\columnwidth]{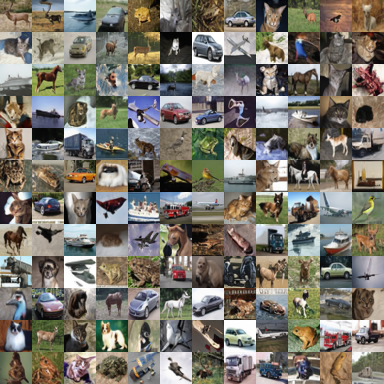}}
\caption{Samples generated for CIFAR-10, truncated ($\psi = 0.7$).}
\label{fig:gan-cifar10-trunc}
\end{center}
\vspace{-.3cm}
\end{figure}

\subsection{Learning Curves}
\label{appen:learning-curves}

We show the learning curves of GANs on FFHQ, LSUN CAT and CIFAR-10 in Fig.~\ref{fig:learning-curves-ffhq}, Fig.~\ref{fig:learning-curves-lsun} and Fig.~\ref{fig:learning-curves-cifar10}
respectively. 
Reg-DGM consistently improves both baselines in all settings and the improvements increase as the number of the training data decreases. Moreover, the curves of Reg-DGM generally have a smaller fluctuation, which is consistent with our theory that the regularization reduces the variance of the baselines. One exception is   Fig.~\ref{fig:learning-curves-cifar10} (a), which shows that Reg-DGM is more unstable than the baseline, which is caused by a bad random initialization.
We mention that the instability of Reg-ADA in Fig.~\ref{fig:learning-curves-ffhq} (b) is due to that of ADA.

\begin{figure}[t]
\begin{center}
\subfloat[StyleGAN2 and Reg-StyleGAN2]{\includegraphics[width=0.45\columnwidth]{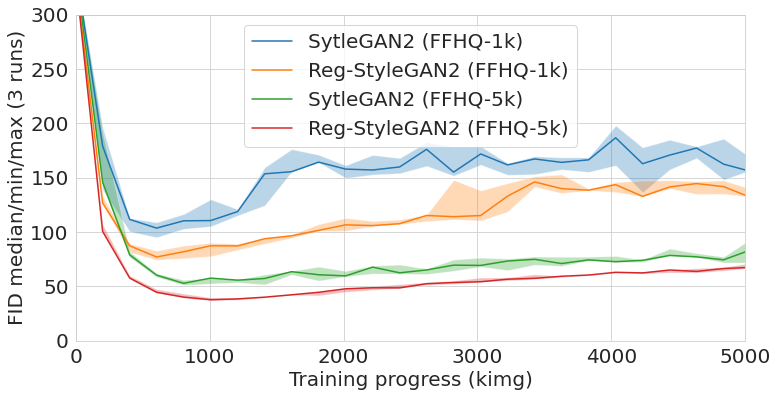}}
\hspace{.1cm}
\subfloat[ADA and Reg-ADA]{\includegraphics[width=0.45\columnwidth]{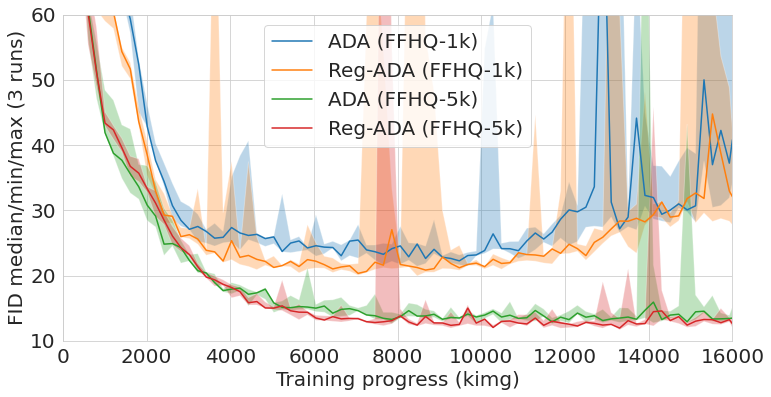}}\\
\caption{Learning Curves on FFHQ.}
\label{fig:learning-curves-ffhq}
\end{center}
\vspace{-.3cm}
\end{figure}

\begin{figure}[t]
\begin{center}
\subfloat[StyleGAN2 and Reg-StyleGAN2]{\includegraphics[width=0.45\columnwidth]{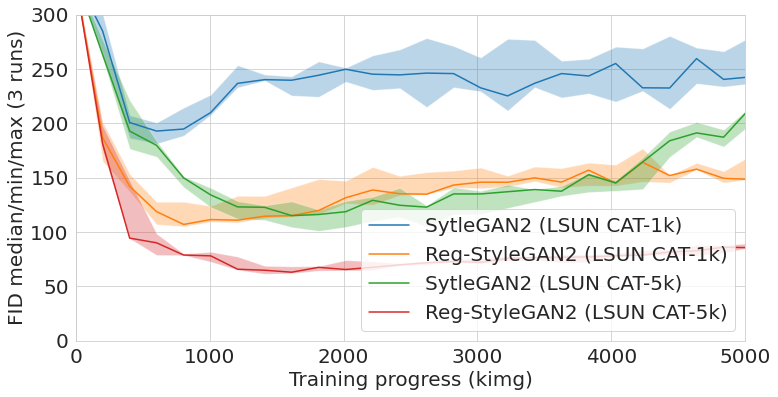}}
\hspace{.1cm}
\subfloat[ADA and Reg-ADA]{\includegraphics[width=0.45\columnwidth]{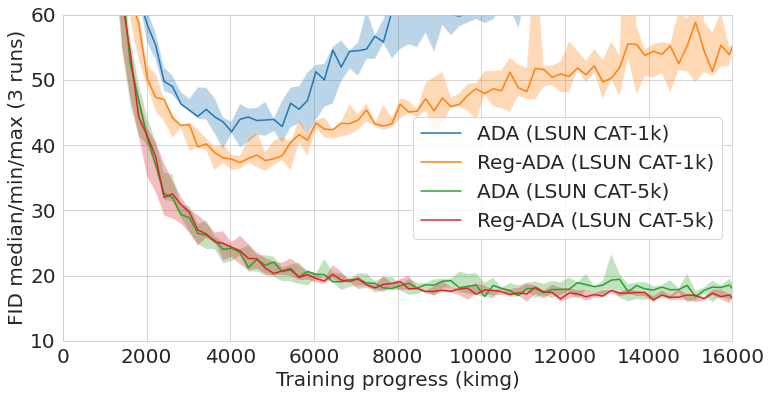}}\\
\caption{Learning Curves on LSUN CAT.}
\label{fig:learning-curves-lsun}
\end{center}
\vspace{-.3cm}
\end{figure}

\begin{figure}[t]
\begin{center}
\subfloat[StyleGAN2 and Reg-StyleGAN2]{\includegraphics[width=0.45\columnwidth]{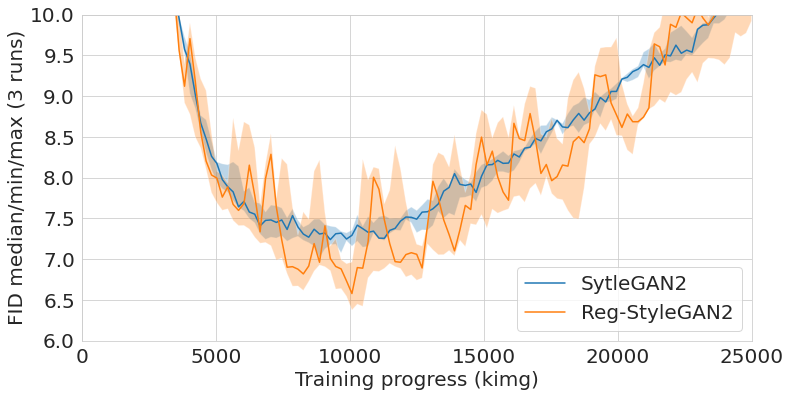}}
\subfloat[ADA and Reg-ADA]{\includegraphics[width=0.45\columnwidth]{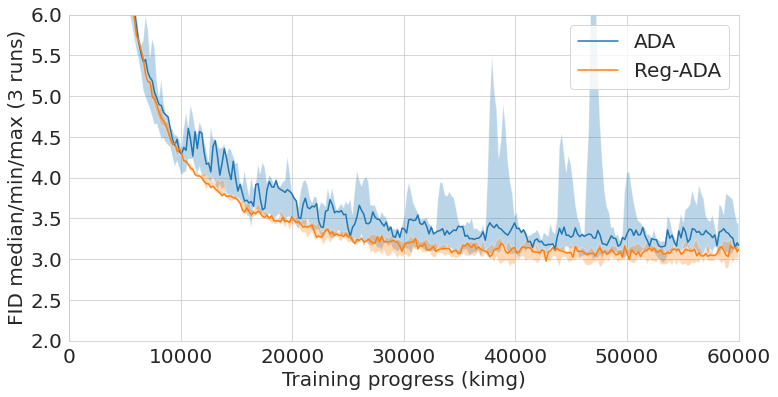}}\\
\caption{Learning Curves on CIFAR-10.}
\label{fig:learning-curves-cifar10}
\end{center}
\vspace{-.3cm}
\end{figure}

\subsection{Ablation of Classifier and Results under More Evaluation Metrics}
\label{appen:ablation-calssifier}

In this section, to better understand the influence of classifiers on our method, we try to use pre-trained classifiers with different regularization terms, layers, and backbones. We retain the same experimental setting as in Tab.~\ref{tabel-hyperparameter-gan}.

\textbf{Regularization form.} We first investigate the feature matching objective~\citep{salimans2016improved} as an alternative regularization term. Formally, it computes the square of the $l_2$-norm between expected features of real and fake samples from one layer of a feature extractor $f$, which can be represented as follows:
\begin{align}
    \label{eq:energy_feature_matching}
     ||\mathbb{E}_{x' \sim p_d}\left[ f(x')\right] - \mathbb{E}_{x \sim p_g}\left[f(x)\right] ||_2^2.
\end{align}
Note that the feature matching objective cannot be rewritten as the expectation over $p_g$ and thus cannot be understood as an energy function. As before, we adopt the last fully-connected layer of ResNet-18, and the results of StyleGAN2 regularized by Eq.~(\ref{eq:energy_feature_matching}) are shown in Tab.~\ref{table:gan_res_feature_matching}. Feature matching can greatly reduce FID of StyleGAN2 while it cannot improve the visual quality of the samples, as shown in Fig.~\ref{fig:gan-ffhq-fm}. 

\begin{figure}[t]
\begin{center}
\includegraphics[width=0.45\columnwidth]{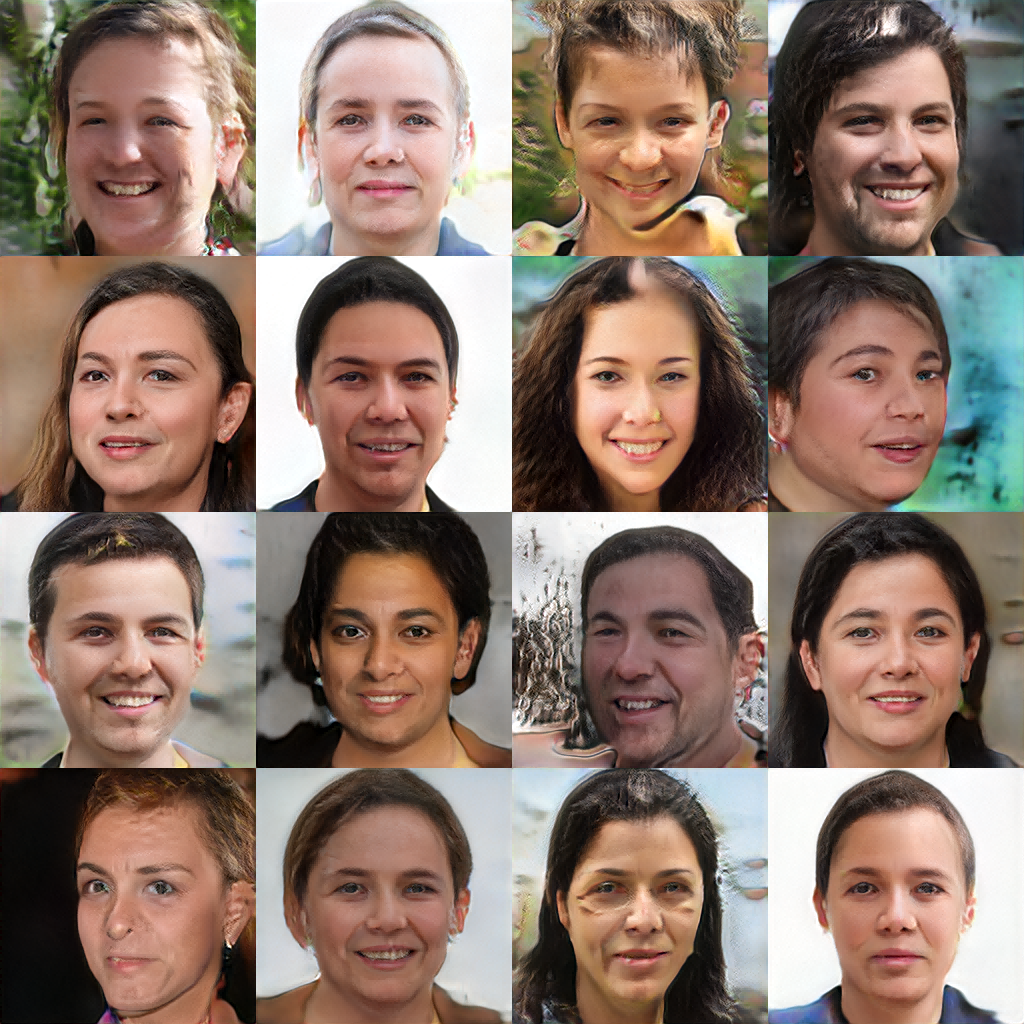}
\caption{Samples generated for FFHQ-$5$k using feature matching (FID $32.65$), truncated ($\psi=0.7$).}
\label{fig:gan-ffhq-fm}
\end{center}
\vspace{-.3cm}
\end{figure}

Then, we evaluate the entropy-minimization regularization~\citep{grandvalet2004semi} as follows:
\begin{align}
    - \mathbb{H}(\textrm{softmax}(f(x))),
\end{align}
where $f(x)$ outputs the logits for the prediction distribution. As shown in Tab.~\ref{tab:entroy-minimization},
the entropy regularization achieves similar FID results to the baseline within a small search space of $\lambda$, showing the importance of the data dependency in the energy function.

\begin{table}[t]
  \caption{The median FID $\downarrow$ on FFHQ and LSUN CAT with feature matching as the regularization. $\lambda$ is simply set as the same values in Tab.~\ref{tabel-hyperparameter-gan}.}
  \vspace{.1cm}
  \label{Results_feature_matching}
  \centering
\begin{tabular}{ccccc}
\toprule
\multirow{2}{*}{Method} & \multicolumn{2}{c}{FFHQ} & \multicolumn{2}{c}{LSUN  CAT}             \\
\cmidrule(r){2-3} \cmidrule(r){4-5}
                        & $1$k          & $5$k         & $1$k            & $5$k        \\
\midrule

StyleGAN2~\citep{stylegan2-karras2020analyzing}  
&    $103.66$      &      $52.71$        &       $186.55$        &      $115.16$    \\

Reg-StyleGAN2 (\textbf{ours})
&       $\mathbf{59.96}$     &   $\mathbf{32.65}$       &     $\mathbf{66.46}$          &  $\mathbf{47.56}$             \\
\bottomrule
\end{tabular}
\label{table:gan_res_feature_matching}
\end{table}

\begin{table}[t]
\caption{Median FID on FFHQ-$5$k for the entropy-minimization regularization.}
\label{tab:entroy-minimization}
\vspace{.1cm}
\centering
\begin{tabular}{cccccccc}
\toprule
$\lambda$ & 
(Baseline) & $1$ & $0.1$ & $0.01$ & $0.001$ &$0.0001$&
$0.00001$\\
\midrule
FID $\downarrow$ & $52.71$ & $145.55$ & $92.11$ & $56.41$ & $53.37$ & $50.87$ & $55.76$ \\
\bottomrule
\end{tabular}
\end{table}

\textbf{Layers in $f$.} To explore the different layers of a pre-trained model, we retrain GANs separately using the first convolution layer of ResNet-18 and the last layers of four modules in ResNet-18. As shown in Tab.~\ref{tab:different_layers}, our method with features of diverse single layers can all improve the baseline StyleGAN2, and the last layer of ResNet-18 is most beneficial for our regularization strategy.

\begin{table}[t]
\caption{Results with different layers on FFHQ-$5$k. The ``$-1$ layer'' represents the last layer (i.e., our default setting). Note that layers are all indexed by the function \emph{named\_modules} in Pytorch. $\lambda$ is simply set as the same values from Tab.~\ref{tabel-hyperparameter-gan}.}
\label{tab:different_layers}
\vspace{.1cm}
\centering
\begin{tabular}{cccccccc}
\toprule

Layer index  &  (Baseline) & $1$  & $17$ & $33$ & $49$ & $65$ & $-1$ (by default)\\
\midrule
FID $\downarrow$  & $52.71$& $46.39$ & $47.65$ & $51.61$& $45.24$&$46.01$ & $37.77$\\
\bottomrule
\end{tabular}
\end{table}

\textbf{Backbone of $f$.} We employ ResNet-50 and ResNet-101 as feature extractors to explore the effect of different backbones on Reg-DGM. Tab.~\ref{tab:different_backbone} shows the results on FFHQ-$5$k. Even using the default $\lambda$ without tuning, Reg-DGM with ResNet-50 and ResNet-101 can achieve a similar FID to that of our default setting and outperform the baseline. We believe that Reg-DGM with ResNet-50 and ResNet-101 can get better results if we finely search the hyperparameter $\lambda$. 

\begin{table}[t]
\caption{Results with different backbones on FFHQ-$5$k. $\lambda$ is simply set as the same values from Tab.~\ref{tabel-hyperparameter-gan}.}
\label{tab:different_backbone}
\vspace{.1cm}
\centering
\begin{tabular}{ccccccc}
\toprule

Backbone &  (Baseline) & ResNet-18 & ResNet-50 & ResNet-101\\
\midrule
FID $\downarrow$ & $52.71$ & $37.77$ & $40.95$ & $42.63$\\
\bottomrule
\end{tabular}
\end{table} 

\textbf{Monte Carlo estimate of $f$.}  We evaluate Reg-StyleGAN2 with $8$, $16$, $32$, and $64$ (the default batch size) samples to estimate the energy function in Eq.~(\ref{eq:energy_cla}) of the main text, as shown in Tab.~\ref{tab:different_mc}. We do not observe a significant improvement by increasing the number of samples. For instance, when $\lambda = 1$, the estimate with $64$ samples achieves an FID of $38.10$, which is similar to $37.77$ of the single sample estimate. Intuitively, the features of faces are likely concentrated in a small area of the feature space of $f$, which is discriminative to other classes of natural images like cars, making the variance negligible to the training process of the generative model.

\begin{table}[t]
\caption{FID $\downarrow$ on FFHQ-$5$k for different number of samples in MC.}
\label{tab:different_mc}
\vspace{.1cm}
\centering
\begin{tabular}{cccccc}
\toprule

MC &  $1$ (by default) & $8$ & $16$ & $32$ & $64$ \\
\midrule
$\lambda=1$ & $37.77$ & $40.97$ & $ 40.78$ & $37.54$ & $38.10$\\
$\lambda=10$ & $53.09$ & $53.36$ & $58.58$ & $52.93$ & $48.21$\\

\bottomrule
\end{tabular}
\end{table}

\textbf{KID metric.}
For a comprehensive comparison of Reg-DGM and baselines, we also evaluate the models under KID in Tab.~\ref{Results-kid}. The conclusion remains the same as FID. Namely, Reg-DGM consistently improves baselines in all settings. 

\textbf{Human evaluation.}
We  compare the quality of generated images from the human perspective by Amazon Mechanical Turk (AMT) as in prior work~\citep{choi2020stargan}. In particular, we randomly generate $1,000$ pairs of images by StyleGAN2 and Reg-StyleGAN2 trained on FFHQ-$5$k using the truncation trick ($\psi = 0.7$). For each image pair, three unique workers in AMT choose the image with the more realistic and high-quality face image. Finally, $3,000$ selection tasks are completed by $34$ workers within $14$ hours. As consistent with the comparison of image quality between Fig.~\ref{fig:gan-ffhq-5k-trunc} (a) and Fig.~\ref{fig:gan-ffhq-5k-trunc} (b), Reg-StyleGAN2 is chosen in $63.5\%$ of the $3,000$ tasks, and other statistical information is shown in Tab.~\ref{tab:human-evaluation}. 

\begin{table}[t]
  \caption{KID$\times 10^3\downarrow$ on FFHQ, LSUN-CAT, and CIFAR-$10$.}
  \vspace{.1cm}
  \label{Results-kid}
  \centering
\begin{tabular}{cccccc}
\toprule
\multirow{2}{*}{Method} & \multicolumn{2}{c}{FFHQ} & \multicolumn{2}{c}{LSUN  CAT} & CIFAR-$10$                \\
\cmidrule(r){2-3} \cmidrule(r){4-5} \cmidrule(r){6-6} 
                        & $1$k          & $5$k         & $1$k            & $5$k           & \multicolumn{1}{c}{$50$k} \\

\midrule
StyleGAN2~\citep{stylegan2-karras2020analyzing}&        $98.06$      &      $39.52$ & $161.95$ &  $100.57$ & $3.66 \pm 0.07$\\

Reg-StyleGAN2 (\textbf{ours})  & $\mathbf{47.91}$&  $\mathbf{23.06}$ & $\mathbf{83.71}$ & $\mathbf{42.68}$ & $\mathbf{2.89 \pm 0.11}$\\
\midrule
ADA~\citep{ada-karras2020training}&        $9.77$ & $5.17$    & $23.30$&  $8.13$ & $0.90 \pm 0.12$\\
Reg-ADA (\textbf{ours}) & $\mathbf{9.38}$& $\mathbf{4.30}$ & $\mathbf{20.52}$& $\mathbf{6.54}$& $0.83 \pm 0.06$\\
\bottomrule
\end{tabular}
\end{table}

\begin{table}[t]
\caption{Other statistical information about human evaluation. Number-$i$ indicates the number of image pairs that at least $i$ workers think StyleGAN2 or Reg-StyleGAN2 generates more realistic face images.}
\label{tab:human-evaluation}
\vspace{.1cm}
\centering
\begin{tabular}{ccccccc}
\toprule
Method &  Number-$1$ & Number-$2$ & Number-$3$ & \\
\midrule
StyleGAN2~\citep{stylegan2-karras2020analyzing}& $610$&$339$&$146$\\
Reg-StyleGAN2 (\textbf{ours})& $854$ & $661$& $390$\\
\bottomrule
\end{tabular}
\end{table}

% \textbf{Results with $100$ samples.}
% We present experiments on FFHQ-$100$ to compare Reg-DGM with the two most direct competitors StyleGAN2 and ADA. We randomly select $100$ images for training. The results are shown in Tab.~\ref{tab:ffhq-100}. Again, Reg-DGM consistently improves StyleGAN2 and ADA. For instance, Reg-ADA achieves an FID of $63.53$, which is better than the $73.70$ of ADA.
 
% \begin{table}[t]
% \caption{FID $\downarrow$ on FFHQ-$100$.}
% \label{tab:ffhq-100}
% \vspace{.2cm}
% \centering
% \begin{tabular}{cccccc}
% \toprule

% Method & FID & $\lambda$\\
% \midrule
% StyleGAN2~\citep{stylegan2-karras2020analyzing} & $132.68$ \\
% Reg-StyleGAN2(\textbf{ours}) & $113.96$& $1.0$\\
% ADA~\citep{ada-karras2020training} & $73.70$\\
% Reg-ADA(\textbf{ours}) & $\mathbf{65.35}$& $0.01$\\
% \bottomrule
% \end{tabular}
% \end{table}

\textbf{Results on more training sets with different size.}
To explore the influence of our method on training data with different sizes, we test the Reg-StyleGAN2 with the pre-trained ResNet-18 on different subsets of FFHQ. The results are shown in Tab.~\ref{table:gan_res_default_lamd} and the hyperparameter $\lambda$ is simply fixed as $1$ for  new data settings $100$, $2$k, $7$k, $10$k, and $15$k. Without heavily tuning the hyperparameter $\lambda$, Reg-StyleGAN2 shows consistent improvements over the baseline under the FID metric. There seems to be a trend that our method improves more on smaller datasets, which agrees with our Gaussian case illustrated by the Fig.~\ref{fig:gaussian} (b).

\begin{table}[t]
  \caption{The FID on different subsets of FFHQ.}
  \vspace{.1cm}
  \label{Results-reg-default-lamd}
  \centering
\begin{tabular}{cccccccc}
\toprule
\multirow{2}{*}{Method} & \multicolumn{7}{c}{FFHQ}  \\
 &  $100$ & $1$k & $2$k & $5$k & $7$k & $10$k & $15$k \\
\midrule
StyleGAN2 & $132.68$ & $103.66$ & $83.32$& $52.71$  & $37.75$ & $34.20$ & $23.07$ \\
Reg-StyleGAN2 & $\mathbf{113.96}$ & $\mathbf{75.99}$& $\mathbf{60.83}$ & $\mathbf{37.77}$ & $\mathbf{31.58}$ & $\mathbf{25.72}$ & $\mathbf{20.85}$ \\
\bottomrule
\end{tabular}
\label{table:gan_res_default_lamd}
\end{table}

\subsection{Sensitivity Analysis of the Weighting Hyperparameter}
\label{appen:sens}

% As suggested by Proposition~\ref{thm:reg_better}, the value of the weighting hyperparameter $\lambda$ is crucial and there is an appropriate range of $\lambda$ such that Reg-DGM is better than the base DGM. 

We empirically analyze the sensitivity of the weighting hyperparameter $\lambda$ on FFHQ-$5$k with StyleGAN2 as the base model in Tab.~\ref{tab:sensetivity}. It can be seen that $\lambda$ affects the performance significantly. 
Notably, although it is nearly impossible to get the optimal $\lambda$ via grid search, there is a range of $\lambda$ (e.g. from $0.01$ to $1$ in Tab.~\ref{tab:sensetivity}) such that Reg-DGM outperforms the base DGM, which agrees with our theoretical analyses. 
Besides, Reg-DGM is not too sensitive when the value of $\lambda$ is around the optimum. For instance, the gap between the results of $\lambda=0.1$ and $\lambda=1$ in Tab.~\ref{tab:sensetivity} is much smaller than their gain compared to the baseline. The performance of Reg-DGM deteriorates with a large $\lambda$ in Tab.~\ref{tab:sensetivity} as expected. Proposition~\ref{thm:reg_better} shows that Reg-DGM is preferable if and only if its value is in a proper interval. Intuitively, a very large value means that we almost ignore the training data, which should lead to inferior performance. 

\begin{table}[t]
\caption{Sensitivity analysis of $\lambda$ on FFHQ-$5$k with StyleGAN2 as the base DGM.}
\label{tab:sensetivity}
\vspace{.1cm}
\centering
\begin{tabular}{ccccccc}
\toprule
%\multirow{2}{*}{Method} & \multicolumn{2}{c}{FFHQ} 
Values of $\lambda$                     &   $\lambda=0$ (base DGM)        & $\lambda=0.01$ & $\lambda=0.1$ & $\lambda=1$ & $\lambda=10$ & $\lambda=100$
\\
\midrule
FID $\downarrow$ & $52.71$& $47.49$&$41.51$ & $37.77$ &$53.09$ & $178.53$\\
\bottomrule
\end{tabular}
\end{table} 

\end{document}